\newcommand{\parents}[1]{\mathrm{pa}(#1)}
\newcommand{\pro}{\mathbb{P}}
\newcommand{\pr}[1]{\mathbb{P}\!\left( #1 \right)}
\newcommand{\lpro}{\underline{\mathbb{P}}}
\newcommand{\lpr}[1]{\underline{\mathbb{P}}\!\left( #1 \right)}
\newcommand{\upr}[1]{\overline{\mathbb{P}}\!\left( #1 \right)}
\newcommand{\ex}[2]{\mathbb{E}_{#1}\!\left[ #2 \right]}
\newcommand{\lex}[1]{\underline{\mathbb{E}}\!\left[ #1 \right]}
\newcommand{\credalo}{\mathbb{K}}
\newcommand{\bigperiod}{\mbox{\large\bfseries .}}
\newtheorem{Definition}{Definition}
\newtheorem{Theorem}[Definition]{Theorem}
\newtheorem{Proposition}[Definition]{Proposition}
\newtheorem{Corollary}[Definition]{Corollary}
\theoremstyle{definition}
\newtheorem{Example}[Definition]{Example}
\begin{document}

\title{On the Semantics and Complexity of Probabilistic Logic Programs}

\author{%
Fabio Gagliardi Cozman  \\  
Escola Polit\'ecnica, Universidade de S\~ao Paulo, Brazil  
\and
Denis Deratani Mau\'a \\  
Instituto de Matem\'atica e Estat{\'\i}stica, Universidade de S\~ao Paulo, Brazil}
 
\maketitle

\begin{abstract}
We examine the meaning and the complexity of probabilistic logic programs that consist
of a set of rules and a set of independent probabilistic facts (that is, programs based
on Sato's distribution semantics). We focus on two semantics, respectively based on stable
and on well-founded models. We show that the semantics based on stable models (referred
to as the ``credal semantics'') produces sets of probability models that dominate infinitely monotone
Choquet capacities; we describe several useful consequences of this result. We then examine
the complexity of inference with probabilistic logic programs.
We distinguish between the complexity of inference when 
a probabilistic program and a query are given (the {\em inferential} complexity), and the 
complexity of inference when the probabilistic program is fixed and the query is given 
(the {\em query} complexity, akin to {\em data} complexity as used in database theory). 
We obtain results on the inferential and query complexity for acyclic, stratified, 
and cyclic propositional and relational programs; complexity reaches various levels 
of the counting hierarchy and even exponential levels.
\end{abstract}


\section{Introduction}\label{section:Introduction}

The combination of deterministic and uncertain reasoning has led to
many mixtures of logic and probability \cite{Halpern2003,Hansen96,Nilsson86}.
In particular, combinations of logic programming constructs and probabilistic assessments
have been pursued in several guises \cite{Fuhr95,Lukasiewicz98,Ng92,Poole93AI,Sato95}, 
and the topic has generated significant literature \cite{Raedt2010,Raedt2008Book}.

Among probabilistic logic programming languages, the approach started by
Poole's probabilistic Horn abduction \cite{Poole93AI} and Sato's distribution 
semantics~\cite{Sato95} has been very popular. Basically, there
a logic program is enlarged with independent probabilistic facts. 
For instance, consider a rule
\[
\mathsf{up} \colonminus \mathsf{actionUp}, \mathbf{not}\;\mathsf{disturbanceUp}\bigperiod  
\]
and probabilistic fact
\[
\pr{\mathsf{disturbanceUp}=\mathsf{true}}=0.1.
\]
Depending on $\mathsf{disturbanceUp}$,  $\mathsf{actionUp}$ may succeed or not in 
leading to $\mathsf{up}$.  

Sato's distribution semantics at first focused on {\em definite} programs, and was 
announced ``roughly, as distributions over least models''~\cite{Sato95}. 
Poole and Sato  originally emphasized   {\em acyclic} logic programs
\cite{Poole93AI,Poole2008,Sato95,Sato2001}, even though Sato did handle 
cyclic ones. Since then, there has been significant work on non-definite and 
on cyclic probabilistic logic programs under variants of the distribution semantics 
\cite{Hadjichristodoulou2012,Lukasiewicz2005ecsqaru,Riguzzi2015,Sato2005}.

In this paper we  examine the meaning and the computational 
complexity of probabilistic logic programs that extend Sato's distribution semantics. 
We look at standard function-free normal programs containing negation as failure 
and probabilistic facts. The goal is to compute an {\em inference}; that is, to compute 
the probability $\pr{\mathbf{Q}|\mathbf{E}}$, where both $\mathbf{Q}$ and 
$\mathbf{E}$ are sets of facts. The pair $(\mathbf{Q},\mathbf{E})$ is referred to as 
the {\em query}. We distinguish between the complexity of inference when a 
probabilistic program and a query are given (the {\em inferential} complexity),
and the complexity of inference when the probabilistic program is fixed
and the query is given (the {\em query} complexity).
Query complexity is similar to {\em data}
complexity as used in database theory, as we discuss later.  

We first examine acyclic programs; for those programs all existing semantics
coincide.  Given the well-known relationship
between acyclic probabilistic logic programs and Bayesian networks, it is not surprising
that inference for propositional acyclic programs is
$\mathsf{PP}$-complete. However, it {\em is} surprising that, as we show, inference with
{\em bounded arity} acyclic programs {\em without negation} is
$\mathsf{PP}^\mathsf{NP}$-equivalent, thus going up the counting hierarchy. 
And we show that acyclic programs without a bound on predicate arity take us to
$\mathsf{PEXP}$-completeness.

Many useful logic programs are cyclic; indeed, the use of recursion is at the heart of logic
programs and its various semantics \cite{Gelfond88,Gelder91}. Many applications,
such as non-recursive structural equation models~\cite{Berry84,Pearl2009} 
and  models with ``feedback''~\cite{Nodelman2002,Poole2013},
defy the acyclic character of Bayesian networks. 

We study cyclic normal logic programs in a few steps. First we look at the inferential 
and query complexity of  locally stratified programs. 
For these programs, again we see that most existing semantics coincide;
in particular semantics based on stable and well-founded models are identical.
To summarize, we show that the complexity of stratified programs is the same as 
the complexity of acyclic programs. 

We then move to general, possibly cyclic, programs. There are various semantics
for such programs, and relatively little discussion about them in the literature.
For instance, take a program consisting of two rules,
\begin{equation}
\label{equation:BasicCyclicProgram}
\mathsf{sleep} \colonminus \mathbf{not}\;\mathsf{work}, \mathbf{not}\;\mathsf{insomnia}\bigperiod 
\qquad
\mathsf{work} \colonminus \mathbf{not}\;\mathsf{sleep}\bigperiod
\end{equation}
and a fact associated with a probabilistic assessment: 
\[
\pr{\mathsf{insomnia}=\mathsf{true}}=0.3.
\]
With probability $0.3$, we have that $\mathsf{insomnia}$ is $\mathsf{true}$, and then 
$\mathsf{sleep}$ is $\mathsf{false}$ and $\mathsf{work}$ is $\mathsf{true}$. This is
simple enough. But with probability $0.7$, we have that $\mathsf{insomnia}$ is 
$\mathsf{false}$, and then the remaining two rules create a cycle: $\mathsf{sleep}$
depends on $\mathsf{work}$ and vice-versa. The question is how to define a semantics
when a cycle appears.

We focus on two semantics  for such programs, even though we mention a few others. 
First, we look at a semantics for probabilistic logic programs 
that can be extracted from the work of Lukasiewicz on probabilistic description
logics~\cite{Lukasiewicz2005ecsqaru,Lukasiewicz2007}.
His proposal is that a probabilistic logic program defines a set
of probability measures, induced by the various stable models of the underlying
normal logic program.
The second semantics we examine is based on the well-founded semantics
of normal logic programs: in this case there is always a single distribution induced by
a probabilistic logic program \cite{Hadjichristodoulou2012}. 

We first study Lukasiewicz's semantics, referred to as the ``credal semantics''.
We show that credal semantics produces 
sets of probability models that dominate infinitely monotone Choquet capacities; the latter
objects are relatively simple extensions of probability distributions and have been
often used in the literature, from random set theory to Dempster-Shafer theory.
We then derive results concerning inferential and query complexity. 
We show that the complexity of general probabilistic logic programs goes up the 
counting hierarchy, up to $\mathsf{PP}^{\mathsf{NP}^\mathsf{NP}}$ levels; 
overall the complexity of the well-founded semantics is in lower classes than
the complexity of the stable model semantics. 

The paper begins in Section \ref{section:Basics} with a review of logic
programming and complexity theory. 
Section \ref{section:PLPs} presents basic notions concerning
probabilistic logic programs and their semantics.
In Section \ref{section:Semantics} we contribute with a comparison between the
{\em credal} and the {\em well-founded} semantics.
Our main results appear in Sections \ref{section:Structure},
\ref{section:ComplexityAcyclicStratified},
\ref{section:ComplexityCredal} and \ref{section:ComplexityWellFounded}.
In Section \ref{section:Structure} we show that the credal semantics
of a probabilistic logic program is a set of probability measures induced
by a 2-monotone Choquet capacities. 
Sections \ref{section:ComplexityAcyclicStratified},
 \ref{section:ComplexityCredal} and \ref{section:ComplexityWellFounded}
 analyze the complexity of inferences under the credal and
the well-founded semantics.  The paper concludes, in Section
\ref{section:Conclusion}, with a summary of our contributions and a
discussion of future work. 

\section{Background}\label{section:Basics}

We briefly collect here some well known terminology and notation regarding
logic programming and complexity theory. Before we plunge into those topics,
we briefly fix notation on Bayesian networks as we will need them later.
A {\em Bayesian network} is a pair consisting of
 a directed acyclic graph $\mathbb{G}$ whose nodes are random variables,  
 and a joint probability distribution $\pro$ over all variables in the graph, such that 
 $\mathbb{G}$ and $\pro$ satisfy the ``Markov condition'' (that is, a random
variable is independent of its parents given its nondescendants)~\cite{Koller2009,Neapolitan2003,Pearl88Book}. 
If all random variables are discrete, then one can specify ``local''  conditional probabilities
$\pr{X_i=x_i|\parents{X_i}=\pi_i}$, and the joint probability distribution is  necessarily
the product of these local probabilities:
\begin{equation}
\label{equation:BayesianNetwork}
\pr{X_1=x_1, \dots,X_n=x_n} = \prod_{i=1}^n \pr{X_i=x_i|\parents{X_i}=\pi_i},
\end{equation}
where $\pi_i$ is the projection of $\{x_1,\dots,x_n\}$ on the set of random variables $\parents{X_i}$;
whenever $X_i$ has no parents, $\pr{X_i=x_i|\parents{X_i}=\pi_i}$ stands for $\pr{X_i=x_i}$.

\subsection{Normal logic programs: syntax and semantics}

Take a vocabulary consisting of set of logical variable symbols
$X,  Y, \dots$, a set of predicate symbols $\mathsf{r},\mathsf{s}, \dots$, and
a set of constants $a, b,\dots$. A {\em term} is a constant or a
logical variable; an {\em atom} is written as
$\mathsf{r}(t_1,\dots,t_n)$, where $\mathsf{r}$ is a predicate of arity
$n$ and each $t_i$ is a term. A zero-arity atom is written simply as
$\mathsf{r}$.  An atom is {\em ground} if it does not contain logical
variables.  

A {\em normal logic program} consists of rules written as \cite{Dantsin2001}
\[
A_0   \colonminus    A_1, \dots, A_m, \mathbf{not} A_{m+1}, \dots, \mathbf{not} A_{n}\bigperiod
\]
where the $A_i$ are atoms and $\mathbf{not}$ is 
interpreted according to some selected semantics, as discussed
later.  The {\em head} of this rule is $A_0$; the remainder of the rule is its
{\em body}.  A rule without a body, written simply as $A_0\bigperiod$, is a
{\em fact}.  A {\em subgoal} in the body is either an atom $A$ (a {\em positive} subgoal) 
or $\mathbf{not} \; A$ 
(a {\em negative} subgoal). A program without negation is {\em
definite}, and a program without variables is {\em propositional}.

\begin{Example}\label{example:logicSmokers}
Here is  a program describing the relation between
smoking, stress, and social influence \cite{Fierens2015}:
  \[
    \begin{array}{l}
      \mathsf{smokes}(X) \colonminus \mathsf{stress}(X)\bigperiod \\
      \mathsf{smokes}(X) \colonminus \mathsf{influences}(Y,X), \mathsf{smokes}(Y)\bigperiod \\
      \mathsf{influences}(a,b)\bigperiod \;\;  \mathsf{influences}(b,a)\bigperiod \;\;    \mathsf{stress}(b)\bigperiod 
    \end{array}
  \]
This program is definite, but not propositional. 
$\hfill \Box$
\end{Example}

The {\em Herbrand base} of a program is the set of all ground atoms built
from constants and predicates in the program. We do not consider functions
in this paper, to stay with finite Herbrand bases.

A {\em substitution} is a (partial) function that maps logical variables into
terms. A {\em grounding} is a substitution mapping into constants. The
grounding of a rule is a ground rule obtained by applying the same
grounding to each atom. The grounding of a program is the propositional
program obtained by applying every possible grounding all rules, using
only the constants in the program (i.e., using only ground atoms in the
Herbrand base).
An atom $A$ {\em unifies} with an atom $B$ if there is a substitution
that makes both (syntactically) equal.

A {\em literal} $L$ is either an atom $A$ or a negated atom $\neg A$. 
A set of literals is {\em inconsistent} if $A$ and $\neg A$ belong to it. 
Given a normal logic program $\mathbf{P}$, 
a {\em partial interpretation} is a consistent set of literals whose atoms belong to the Herbrand base
of $\mathbf{P}$. An {\em interpretation} is a consistent set of literals such that every atom in the 
Herbrand base appears in a literal. An atom is $\mathsf{true}$ (resp., $\mathsf{false}$) in a 
(partial) interpretation if it appears in a non-negated (resp., negated) literal. 
A subgoal is $\mathsf{true}$ in an interpretation if it is an atom $A$ and $A$ belongs to the interpretation,
or the subgoal is $\mathbf{not} \; A$ and $\neg A$ belongs to the interpretation.
A grounded rule  is {\em satisfied} in a partial  interpretation if
its head is $\mathsf{true}$ in the interpretation, or any of its subgoals is $\mathsf{false}$
in the interpretation.
A {\em model} of $\mathbf{P}$ is an interpretation such that 
every grounding of a rule in $\mathbf{P}$ is satisfied.
A {\em minimal model} of $\mathbf{P}$ is a model with minimum number of non-negated literals.

\begin{figure}[t]
\centering
\begin{tikzpicture}[scale=1.1]
\node[draw,rectangle,rounded corners] (iba) at (0,1) {$\mathsf{influences}(b,a)$};
\node[draw,rectangle,rounded corners] (iab) at (3,1) {$\mathsf{influences}(a,b)$};
\node[draw,rectangle,rounded corners] (ta) at (-2.5,1) {$\mathsf{stress}(a)$};
\node[draw,rectangle,rounded corners] (tb) at (5.5,1) {$\mathsf{stress}(b)$};
\node[draw,rectangle,rounded corners] (sa) at (0,0) {$\mathsf{smokes}(a)$};
\node[draw,rectangle,rounded corners] (sb) at (3,0) {$\mathsf{smokes}(b)$};
\node[draw,rectangle,rounded corners] (iaa) at (0,-1) {$\mathsf{influences}(a,a)$};
\node[draw,rectangle,rounded corners] (ibb) at (3,-1) {$\mathsf{influences}(b,b)$};

\draw[->,>=latex] (iba) to    (sa);
\draw[->,>=latex] (iab) to   (sb);
\draw[->,>=latex] (tb) to   (sb);
\draw[->,>=latex] (sa)  [out=-10, in=-170] to (sb);
\draw[->,>=latex] (sb) [out=170, in=10] to  (sa);  

\draw[->,>=latex] (iaa) to    (sa);
\draw[->,>=latex] (ibb) to   (sb);
\draw[->,>=latex] (ta) to   (sa);

\draw[->,>=latex] (sa) edge [loop left] (sa);
\draw[->,>=latex] (sb) edge [loop right] (sb);
\end{tikzpicture}
\caption{Grounded dependency graph for Example \ref{example:logicSmokers}.}
\label{fig:depSmokers}
\end{figure}
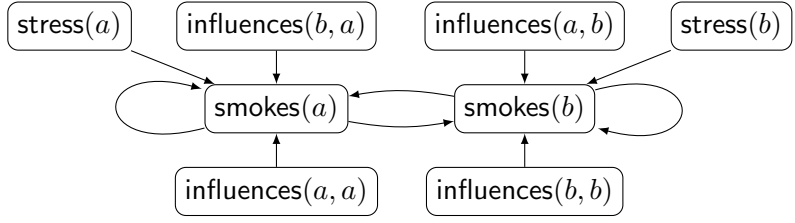

The {\em dependency graph} of a program is a directed graph where each
predicate is a node, and where there is an edge from a node $B$ to a
node $A$ if there is a rule where $A$ appears in the head and $B$
appears in the body; if $B$ appears right after $\mathbf{not}$, the edge
is {\em negative}; otherwise, it is {\em positive}. The {\em grounded}
dependency graph is the dependency graph of the propositional program
obtained by grounding. For instance, the grounded dependency graph of the
program in Example \ref{example:logicSmokers} is depicted in Figure
\ref{fig:depSmokers}.  

A program is {\em acyclic} when its grounded dependency
graph is acyclic. 

Concerning the semantics of normal logic programs, there are, broadly speaking, 
two strategies to follow. One strategy is to translate  programs into a first-order
theory that is called a {\em completion} of the program. Then the semantics of 
the program is the set of first-order models of its completion. The most famous 
completion is Clark's \cite{Clark78}, roughly defined as follows. 
First, rewrite each body by replacing commas by $\wedge$ and $\mathbf{not}$ by $\neg$.
Second, remove constants from heads: to do so, consider a rule $A_0(a) \colonminus B_i\bigperiod$, where 
$a$ is a constant and $B_i$ is the body; then this rule is replaced by 
$A_0(X) \colonminus (X = a) \wedge B_i\bigperiod$. Then, for each 
set of rules that share the same head $A_0$, write
$A_0 \Leftrightarrow B_1 \vee B_2 \vee \dots \vee B_k$, where each $B_i$ is 
the body of one of the rules.  

The second strategy that is often used to define the semantics of normal logic
programs is to select some models of the program to be its semantics. 
There are many proposals in the literature as to which models should be 
selected; however, currently there are two selections that have received 
most attention: the {\em stable model} \cite{Gelfond88} and the {\em well-founded} 
\cite{Gelder91} semantics. We now describe these semantics; alas,
their definitions are not simple.

Consider first the stable model semantics. Suppose we have a normal logic program 
$\mathbf{P}$ and an interpretation $\mathcal{I}$. Define the {\em reduct} 
$\mathbf{P}^\mathcal{I}$ to be a definite program that contains rule 
$A_0 \colonminus A_1,\dots,A_m\bigperiod$ 
iff one of the grounded rules from $\mathbf{P}$ is
$A_0 \colonminus A_1,\dots,A_m,\mathbf{not}\;A_{m+1},\dots,\mathbf{not}\;A_n\bigperiod$ 
 where each $A_{m+1},\dots,A_n$ is  $\mathsf{false}$ in $\mathcal{I}$. 
That is, the reduct is obtained by
(i) grounding $\mathbf{P}$, 
(ii) removing all rules that contain a subgoal $\mathbf{not}\;A$ in their body such that $A$
is an atom that is $\mathsf{true}$ in $\mathcal{I}$,
(iii) removing all remaining literals of the form $\mathbf{not}\;A$ from the remaining rules. 
An interpretation $\mathcal{I}$ is a {\em stable model} if $\mathcal{I}$ is a minimal model
of $\mathbf{P}^\mathcal{I}$. Note that a normal program may fail to have a stable model, or may have
several stable models.

There are two types of logical reasoning under the stable mode
semantics~\cite{Eiter2007}. \emph{Brave reasoning} asks whether there is
a stable model containing a specific atom (and possibly returns it if it
exists). \emph{Cautious reasoning} asks whether a specific atom appears
in all stable models (and possibly lists all such models).

Consider now the well-founded semantics.
Given a subset $\mathcal{U}$ of the Herbrand base of a program, and a partial
interpretation $\mathcal{I}$, say that an 
atom $A$ is {\em unfounded} with respect to $\mathcal{U}$ and $\mathcal{I}$ iff
for each grounded rule whose head is $A$, we have that
(i) some subgoal $A_i$ or $\mathbf{not}\;A_i$ is $\mathsf{false}$ in $\mathcal{I}$, or
(ii) some subgoal that is an atom $A_i$ is in $\mathcal{U}$.
Now say that a subset $\mathcal{U}$ of the Herbrand base is an {\em unfounded set}
with respect to interpretation $\mathcal{I}$ if each atom in $\mathcal{U}$ is unfounded
with respect to $\mathcal{U}$ and $\mathcal{I}$.  
This is a complex definition: roughly, it means that, for each possible rule that we
might apply to obtain $A$, either the rule cannot be used (given $\mathcal{I}$), or 
there is an atom in $\mathcal{U}$ that must be first shown to be $\mathsf{true}$.
Now, given normal logic program $\mathbf{P}$, define $\mathbb{T}_\mathbf{P}(\mathcal{I})$
to be a transformation that takes interpretation $\mathcal{I}$ and returns another interpretation: 
$A \in \mathbb{T}_\mathbf{P}(\mathcal{I})$ iff there is some grounded rule with
head $A$ such that every subgoal in the body is $\mathsf{true}$ in $\mathcal{I}$. 
Also define $\mathbb{U}_\mathbf{P}(\mathcal{I})$ to be the greatest unfounded set with
respect to $\mathcal{I}$ (there is always such a greatest set). 
Define 
$\mathbb{W}_\mathbf{P}(\mathcal{I}) = \mathbb{T}_\mathbf{P}(\mathcal{I}) 
				\cup \neg \mathbb{U}_\mathbf{P}(\mathcal{I})$, 
where the notation $\neg \mathbb{U}_\mathbf{P}(\mathcal{I})$ means that we take each
literal in $\mathbb{U}_\mathbf{P}(\mathcal{I})$ and negate it (that is, $A$ becomes $\neg A$;
 $\neg A$ becomes $A$). Intuitively, $\mathbb{T}_\mathbf{P}$ is what we can
``easily prove to be positive'' and $\mathbb{U}_\mathbf{P}$ is what we can 
``easily prove to be negative''. 

Finally: the well-founded semantics of $\mathbf{P}$ is the least fixed point of
$\mathbb{W}_\mathbf{P}(\mathcal{I})$; this fixed point always exists. That is, apply 
$\mathcal{I}_{i+1} = \mathbb{W}_\mathbf{P}(\mathcal{I}_i)$, starting from 
$\mathcal{I}_0=\emptyset$, until it stabilizes; the resulting interpretation is the
well-founded model. The iteration stops in finitely many steps given that we
have finite Herbrand bases. 

The well-founded semantics   determines the truth assignment for a subset 
of the atoms in the Herbrand base; for the remaining atoms, their ``truth values 
are not determined by the program'' \cite[Section 1.3]{Gelder91}. A very common 
interpretation of this situation is that the well-founded semantics uses three-valued 
logic with values   $\mathsf{true}$, $\mathsf{false}$, and $\mathsf{undefined}$. 
It so happens that any well-founded model is a subset of every stable model of a normal
logic program \cite[Corollary 5.7]{Gelder91}; hence, if a program has a well-founded model
that is an  interpretation for all atoms, then this well-founded model is the unique stable model
(the converse is not true).  

There are other ways to define the well-founded semantics that are explicitly constructive
\cite{Baral93,Gelder93,Przymusinski89}.
One is this, where the connection with the stable model semantics is
emphasized \cite{Baral93}: write $\mathbb{LFT}_\mathbf{P}(\mathcal{I})$ to mean the
least fixpoint of $\mathbb{T}_{\mathbf{P}^\mathcal{I}}$; 
then the well-founded semantics of $\mathbf{P}$
consists of those atoms $A$ that are in the least fixpoint of
$\mathbb{LFT}_\mathbf{P}(\mathbb{LFT}_\mathbf{P}(\cdot))$ plus the literals $\neg A$ for 
those atoms $A$ that are {\em not} in the greatest fixpoint of 
$\mathbb{LFT}_\mathbf{P}(\mathbb{LFT}_\mathbf{P}(\cdot))$. Note that
$\mathbb{LFT}_\mathbf{P}(\mathbb{LFT}_\mathbf{P}(\cdot))$  is a monotone operator.

It is instructive to look at some examples.
 
\begin{Example}\label{example:Basic}
First, take a program $\mathbf{P}$ with two rules: 
$\mathsf{p} \colonminus \mathbf{not}\;\mathsf{q}, \mathbf{not}\;\mathsf{r}\bigperiod$ 
and 
$\mathsf{q} \colonminus \mathbf{not}\;\mathsf{p}\bigperiod$ 
(identical to rules in Expression (\ref{equation:BasicCyclicProgram})).
This program has two stable models: both assign $\mathsf{false}$ to $\mathsf{r}$;
one assigns $\mathsf{true}$ to $\mathsf{p}$ and $\mathsf{false}$ to $\mathsf{q}$,
while the other assigns $\mathsf{true}$ to $\mathsf{q}$ and $\mathsf{false}$ to $\mathsf{p}$
(note 
$\mathbf{P}^{\{\mathsf{p},\neg\mathsf{q},\neg\mathsf{r}\}} = \{ \mathsf{p}. \}$ and 
$\mathbf{P}^{\{\neg\mathsf{p},\mathsf{q},\neg\mathsf{r}\}} = \{ \mathsf{q}. \}$).
The well-founded semantics assigns $\mathsf{false}$ to $\mathsf{r}$ and
leaves $\mathsf{p}$ and $\mathsf{q}$ as $\mathsf{undefined}$. 
$\Box$
\end{Example}

\begin{Example}\label{example:Game}
Consider a game where a
player wins if there is another player with no more moves \cite{Gelder91,Gelder93}, as expressed
by the cyclic rule: 
\[
\mathsf{wins}(X) \colonminus  \mathsf{move}(X,Y), \mathbf{not}\;\mathsf{wins}(Y)\bigperiod
\]
Suppose the available moves are given as the following facts:
\[
\mathsf{move}(a,b)\bigperiod \quad \mathsf{move}(b,a)\bigperiod 
\quad \mathsf{move}(b,c)\bigperiod \quad \mathsf{move}(c,d)\bigperiod
\]
There are two stable models: both assign $\mathsf{true}$ to $\mathsf{wins}(c)$ and $\mathsf{false}$ to $\mathsf{wins}(d)$;
one assigns $\mathsf{true}$ to $\mathsf{wins}(a)$ and $\mathsf{false}$ to $\mathsf{wins}(b)$,
while the other assigns $\mathsf{true}$ to $\mathsf{wins}(b)$ and $\mathsf{false}$ to $\mathsf{wins}(a)$. 
The well-founded semantics leads to partial interpretation $\{\mathsf{wins}(c), \neg \mathsf{wins}(d)\}$, 
leaving $\mathsf{wins}(a)$ and $\mathsf{wins}(b)$ as $\mathsf{undefined}$. 
 If $\mathsf{move}(a,b)$ is not given as a fact, it is assigned $\mathsf{false}$, and the
well-founded semantics leads to
$\{\neg\mathsf{wins}(a),\mathsf{wins}(b),\mathsf{wins}(c),\neg\mathsf{wins}(d)\}$. 
$\Box$
\end{Example} 

\begin{Example}\label{example:Barber}
The Barber Paradox: If  the barber shaves all,  and only, those villagers who do not shave themselves,
does the barber shave himself?  Consider:
\begin{equation}
\label{equation:Barber}
\hspace*{-2.48ex}\begin{array}{c}
\mathsf{shaves}(X,Y) \colonminus  \mathsf{barber}(X),  \mathsf{villager}(Y),  \mathbf{not} \; \mathsf{shaves}(Y,Y)\bigperiod \\
\mathsf{villager}(a)\bigperiod \quad
\mathsf{barber}(b)\bigperiod \quad
\mathsf{villager}(b)\bigperiod
\end{array}
\end{equation}
There is no stable model for this normal logic program: the facts and the rule
lead to the pattern $\mathsf{shaves}(b,b) \colonminus \mathbf{not}\;\mathsf{shaves}(b,b)\bigperiod$,
thus eliminating any possible stable model. 
The well-founded semantics assigns $\mathsf{false}$ to $\mathsf{barber}(a)$, 
to $\mathsf{shaves}(a,a)$ and to $\mathsf{shaves}(a,b)$. Also, $\mathsf{shaves}(b,a)$
is assigned $\mathsf{true}$, and $\mathsf{shaves}(b,b)$ is left $\mathsf{undefined}$.
That is, even though the semantics leaves the status of the barber as $\mathsf{undefined}$, it does
produce meaningful answers for other villagers.
\hfill $\Box$
\end{Example}

\subsection{Complexity theory: the counting hierarchy}\label{subsection:Complexity}

We adopt basic terminology and notation from computational
complexity~\cite{Papadimitriou94}. 
A {\em language}  is a set of strings. A language defines a {\em decision problem};
that is, the problem of deciding whether an input string is in the language.
A {\em complexity class} is a set of languages;
we use well-known complexity classes such as  $\mathsf{P}$, $\mathsf{NP}$, 
$\mathsf{EXP}$, $\mathsf{NEXP}$.
The complexity class $\mathsf{PP}$ consists of those languages $\mathcal{L}$
that satisfy the following property: there is a polynomial time 
nondeterministic Turing machine $M$ such that $\ell \in \mathcal{L}$ iff more than half
of the computations of $M$ on input $\ell$ end up accepting. Analogously, we 
have $\mathsf{PEXP}$, consisting of those languages $\mathcal{L}$ with the following property:
 there is 
an exponential time  nondeterministic Turing machine $M$ such that $\ell \in \mathcal{L}$ 
iff half of the computations of $M$ on input $\ell$ end up accepting~\cite{Buhrman98}. 

An oracle Turing machine $M^\mathcal{L}$, where $\mathcal{L}$ is 
a language, is a Turing machine  that  can write a string $\ell$ to an 
``oracle'' tape and obtain from the oracle, in unit time,
the decision as to whether $\ell \in \mathcal{L}$ or not.
Similarly, for a function $f$, an oracle Turing machine $M^f$ can be defined.
 If a class of languages/functions 
$\mathsf{A}$ is defined by a set of Turing machines $\mathcal{M}$ (that is, the 
languages/functions are decided/computed by these machines), then   
$\mathsf{A}^\mathcal{L}$ is the set of languages/functions that 
are decided/computed by $\{ M^\mathcal{L} : M \in \mathcal{M} \}$. 
Similarly, for any class $\mathsf{A}$ we have $\mathsf{A}^f$. 
If $\mathsf{A}$ 
and $\mathsf{B}$ are classes of languages/functions, 
$\mathsf{A}^\mathsf{B} = \cup_{x \in \mathsf{B}} \mathsf{A}^{x}$. 
The {\em polynomial hierarchy} consists of classes
$\Pi^\mathsf{P}_i = \mathsf{co}\Sigma^\mathsf{P}_i$ and 
$\Sigma^\mathsf{P}_i = \mathsf{NP}^{\Sigma^{\mathsf{P}}_{i-1}}$, with
$\Sigma^\mathsf{P}_0 = \mathsf{P}$.
Later we also use classes $\Delta^\mathsf{P}_i = \mathsf{P}^{\Sigma^{\mathsf{P}}_{i-1}}$
and  $\mathsf{PH} = \cup_i \Pi^\mathsf{P}_i = \cup_i \Sigma^\mathsf{P}_i$.

Wagner's {\em polynomial counting hierarchy} is the smallest set of classes
containing $\mathsf{P}$ and, recursively, for any class $\mathsf{C}$   in the 
polynomial counting hierarchy, the classes
$\mathsf{PP}^\mathsf{C}$,
$\mathsf{NP}^\mathsf{C}$,
and 
$\mathsf{coNP}^\mathsf{C}$  
\cite[Theorem 4]{Wagner86} \cite[Theorem 4.1]{Toran91}.
The polynomial hierarchy is included in Wagner's counting polynomial hierarchy.
%
%

A {\em many-one reduction} from $\mathcal{L}$ to $\mathcal{L}'$ is a
polynomial time algorithm  that takes the input to decision problem $\mathcal{L}$
and transforms it into the input to decision problem $\mathcal{L}'$ such that 
$\mathcal{L}'$ has the same output as $\mathcal{L}$. 
For a complexity class $\mathsf{C}$, a decision problem
$\mathcal{L}$ is $\mathsf{C}$-hard with respect to many-one reductions 
if each decision problem in $\mathsf{C}$ can be reduced to $\mathcal{L}$ with 
many-one reductions.
A decision problem is then $\mathsf{C}$-complete with respect to many-one reductions
 if it is in $\mathsf{C}$
and it is $\mathsf{C}$-hard with respect to many-one reductions. 

In proofs we will often use propositional formulas; such a formula is in
Conjunctive Normal Form (CNF) when it is a conjunction of clauses (where
a clause is a disjunction of literals). A $k$CNF is a CNF in which each clause
has $k$ literals. 
We use the following
$\mathsf{PP}^{\Sigma_k^\mathsf{P}}$-complete problem \cite[Theorem 7]{Wagner86},
that we refer to as $\#_k\mathsf{3CNF}(>)$:
\begin{description}
\item[Input:] A pair $(\phi,M)$, where $\phi(\mathbf{X}_0,\mathbf{X}_1,\dots,\mathbf{X}_k)$
is a propositional formula in 3CNF  and each $\mathbf{X}_i$ is a tuple of logical variables,
and $M$ is an integer. 
\item[Output:] Whether or not the number of truth assignments for $\mathbf{X}_0$ in the formula
\[
Q_1  \mathbf{X}_1 : Q_2  \mathbf{X}_2 : \dots \exists \mathbf{X}_k : \phi(\mathbf{X}_0,\mathbf{X}_1,\dots,\mathbf{X}_k),
\]
is strictly larger than $M$, where the quantifiers alternate 
and each logical variable not in $\mathbf{X}_0$ is bound to a quantifier.
\end{description}
Another $\mathsf{PP}^{\Sigma_k^\mathsf{P}}$-complete problem,
referred to as $\#_k\mathsf{DNF}(>)$ is:
\begin{description}
\item[Input:] A pair $(\phi,M)$, where $\phi(\mathbf{X}_0,\mathbf{X}_1,\dots,\mathbf{X}_k)$
is a propositional formula in DNF  and each $\mathbf{X}_i$ is a tuple of logical variables,
and $M$ is an integer. 
\item[Output:] Whether or not the number of truth assignments for $\mathbf{X}_0$ in the formula
\[
Q_1  \mathbf{X}_1 : Q_2  \mathbf{X}_2 : \dots \forall \mathbf{X}_k : \phi(\mathbf{X}_0,\mathbf{X}_1,\dots,\mathbf{X}_k),
\]
is strictly larger than $M$, where the quantifiers alternate 
and each logical variable not in $\mathbf{X}_0$ is bound to a quantifier.
\end{description}

A detail is that 
Wagner defines a $\mathsf{PP}^{\Sigma_k^\mathsf{P}}$-complete problem
using ``$\geq k$'' instead of ``$>M$'', but the former is equivalent to ``$>M-1$'', so both
inequalities can be used.  

\section{Probabilistic normal logic programs}\label{section:PLPs}
 
In this paper we focus on a particularly simple combination of logic programming and
probabilities \cite{Poole93AI,Sato95}. A {\em probabilistic logic program}, abbreviated
{\sc plp}, is a pair $\left<\mathbf{P},\mathbf{PF}\right>$ consisting of a normal
logic program $\mathbf{P}$ and a set of {\em probabilistic facts} $\mathbf{PF}$. 
A probabilistic fact is a pair consisting of an atom $A$ and a probability value $\alpha$;
we use the notation $\alpha::A\bigperiod$ borrowed from the ProbLog 
package\footnote{At https://dtai.cs.kuleuven.be/problog/index.html.}
 \cite{Fierens2015}. 

We assume that every probability value is a rational number.

\begin{Example}\label{example:Alarm}
Here is a syntactically correct ProbLog program:
\vspace*{-1ex}
\[
\begin{array}{l}
0.7::\mathsf{burglary}. \qquad 0.2::\mathsf{earthquake}\bigperiod \\
\mathsf{alarm} \colonminus \mathsf{burglary}, \mathsf{earthquake}, \mathsf{a1}\bigperiod  \\
\mathsf{alarm} \colonminus \mathsf{burglary}, \mathbf{not}\ \mathsf{earthquake}, \mathsf{a2}\bigperiod \\
\mathsf{alarm} \colonminus  \mathbf{not}\ \mathsf{burglary}, \mathsf{earthquake}, \mathsf{a3}\bigperiod  \\
  0.9::\mathsf{a1}\bigperiod \quad  0.8::\mathsf{a2}\bigperiod \quad 0.1::\mathsf{a3}\bigperiod \\
\mathsf{calls}(X) \colonminus \mathsf{alarm}, \mathsf{neighbor}(X)\bigperiod \\
\mathsf{neighbor}(a)\bigperiod \qquad \mathsf{neighbor}(b)\bigperiod
\end{array}
\]
There are four rules, two facts, and five probabilistic facts.
\hfill $\Box$ 
\end{Example} 

A probabilistic fact may contain logical variables; for instance, we may write
$\alpha::\mathsf{r}(X_1,\dots,X_n)\bigperiod$. We interpret such a parameterized probabilistic
fact as the set of all grounded probabilistic facts obtained by substituting variables with 
constants in the Herbrand base.\footnote{ProbLog additionally has ``probabilistic rules''
  but those are simply syntactic sugar that we do not need here.} 

Given a {\sc plp} $\left<\mathbf{P},\mathbf{PF}\right>$ where $\mathbf{P}$ is
acyclic, we say the {\sc plp} is acyclic. Likewise, if $\mathbf{P}$ is definite, 
stratified, cyclic, etc, we use the same adjective for the whole {\sc plp}.

\subsection{The semantics of probabilistic facts}

The interpretation of probabilistic facts requires some pause.
Suppose we have a {\sc plp} $\left<\mathbf{P},\mathbf{PF}\right>$ with $n$ probabilistic facts (which 
may be groundings of probabilistic facts containing logical variables).  From 
$\left<\mathbf{P},\mathbf{PF}\right>$ we can generate $2^n$ normal logic programs: for each probabilistic 
fact $\alpha::A\bigperiod$, we can either choose to keep fact $A\bigperiod$, or choose to erase fact 
$A\bigperiod$ altogether.  
These choices are assumed independent: this is Sato's {\em independence assumption}.

For instance, consider the {\sc plp}:
\begin{equation}
\label{equation:IndependenceAssumption}
0.5::\mathsf{r}\bigperiod \qquad 0.5::\mathsf{s}\bigperiod \qquad \mathsf{v} \colonminus \mathsf{r},\mathsf{s}\bigperiod
\end{equation}
We have four ways to write a normal logic program out of this {\sc plp}; that is,
$\mathsf{r}$ can be kept or removed, and likewise for $\mathsf{s}$. 
All these normal logic programs are obtained with the same probability $0.25$,
and in one of them $\mathsf{v}$ is $\mathsf{true}$; consequently,
the probability $\pr{\mathsf{v}=\mathsf{true}}=0.25$. 

A {\em total choice} $\theta$ for the {\sc plp} is a subset of the set of grounded probabilistic facts.  
We interpret $\theta$ as a set of ground facts that are probabilistic selected to be 
included in $\mathbf{P}$; all other ground  facts obtained from probabilistic facts
are to be discarded.  The probability of a total
choice is easily computed: it is a product over the grounded probabilistic facts,
where probabilistic fact $\alpha::A\bigperiod$ contributes with factor $\alpha$ if $A\bigperiod$ is
kept, or factor $(1-\alpha)$ if $A\bigperiod$ is removed. 
Now for each total choice $\theta$ we obtain a normal logic program,  that we
denote by $\mathbf{P}\cup\mathbf{PF}^{\downarrow\theta}$.

For instance, the {\sc plp} in Expression (\ref{equation:IndependenceAssumption}) has
two probabilistic facts, leading to four total choices, each with probability $0.25$.
Now consider a more complicated {\sc plp}:
\[
0.5::\mathsf{r}\bigperiod \qquad 0.6::\mathsf{r}\bigperiod \qquad 0.2::\mathsf{s}(a)\bigperiod \qquad 
0.3::\mathsf{s}(X)\bigperiod \qquad 
\mathsf{v} \colonminus \mathsf{r}, \mathsf{s}(a), \mathsf{s}(b)\bigperiod
\]
There are five ground probabilistic facts (after grounding $\mathsf{s}(X)$ appropriately);
hence there are $32$ total choices. Suppose we choose to keep the fact in the first
probabilistic fact, and discard all the others (with probability $0.5 \times 0.4 \times 0.8 \times 0.7 \times 0.7$);
then we obtain 
\[
 \mathsf{r}\bigperiod  \qquad 
\mathsf{v} \colonminus \mathsf{r}, \mathsf{s}(a), \mathsf{s}(b)\bigperiod,
\]
a program with a single stable model where $\mathsf{r}$ is the only $\mathsf{true}$ atom.
By going through all possible total choices, we 
have that $\pr{\mathsf{r}=\mathsf{true}}=0.8$
(as $\mathsf{r}\bigperiod$  
is kept in the program by a first choice with probability $0.5$ or by a second choice 
with probability $0.6$, hence $0.5+0.6-0.5 \times 0.6=0.8$). Similarly, 
$\pr{\mathsf{s}(a)=\mathsf{true}}=0.2+0.3-0.2 \times 0.3=0.44$;
note however that $\pr{\mathsf{s}(b)=\mathsf{true}}=0.3$. 
And finally, $\pr{\mathsf{v}=\mathsf{true}}=0.8 \times 0.44 \times 0.3 = 0.1056$. 

Sato assumes that no probabilistic fact unifies with 
the head of a non-fact rule (that is, a rule with a nonempty body); this is
called the {\em disjointness condition}~\cite{Sato95}. From a modeling
perspective this is a convenient assumption even though we do not need
it in our complexity results. In fact from a modeling perspective an even stronger
disjointness condition makes sense: no probabilistic fact should unify with the   
head of any rule (with a body or not), nor with any other probabilistic fact. 
Under this assumption,
 the probabilistic fact $\alpha::A\bigperiod$ can be directly interpreted as a 
probabilistic assessment $\pr{A=\mathsf{true}}=\alpha$. 
Again, we do not need such an assumption for our results, but our examples
will always satisfy it, and it makes sense to assume that it will always be
adopted in practice.

\subsection{The semantics of definite/acyclic/stratified probabilistic logic programs}

We can now discuss the semantics of a {\sc plp}  $\left<\mathbf{P},\mathbf{PF}\right>$.
First, take the grounding of this {\sc plp}.  Now for each total choice $\theta$ we obtain 
the normal logic program $\mathbf{P}\cup\mathbf{PF}^{\downarrow\theta}$. Hence 
the distribution over total choices induces a distribution over normal logic programs.
 
A common assumption   is  that, for each total choice $\theta$, the resulting
normal logic  program $\mathbf{P}\cup\mathbf{PF}^{\downarrow\theta}$ yields a 
single model \cite{Fierens2015}. For instance, if $\mathbf{P}$ is definite, then 
$\mathbf{P}\cup\mathbf{PF}^{\downarrow\theta}$ is definite for any $\theta$, and
$\mathbf{P}\cup\mathbf{PF}^{\downarrow\theta}$   has a unique stable
model that is also its unique well-founded model. Thus the unique distribution over
total choices becomes a unique  distribution over stable/well-founded models. 
This distribution is exactly Sato's {\em distribution semantics}~\cite{Sato95}.
Similarly, suppose that $\mathbf{P}$ is acyclic; then 
$\mathbf{P}\cup\mathbf{PF}^{\downarrow\theta}$ is acyclic for any $\theta$, and
$\mathbf{P}\cup\mathbf{PF}^{\downarrow\theta}$  has a unique stable
model that is also its unique well-founded model~\cite{Apt91}. 

Poole's and Sato's original work focused respectively on acyclic and definite programs;
in both cases the semantics of resulting normal logic programs is uncontroversial. 
The same can be said of the larger class of {\em stratified} programs;  
a normal logic program is  stratified  when cycles 
in the grounded dependency graph contain no negative edge (this is often
referred to as {\em locally stratified} in the literature) \cite{Apt88}.
Both the stable and the well-founded semantics are identical for stratified
programs, and both generate a unique  interpretation for all atoms.
As a consequence, a {\sc plp} $\left<\mathbf{P},\mathbf{PF}\right>$ has a unique distribution semantics
whenever $\mathbf{P}$ is stratified. Note that both acyclic and definite programs are stratified. 

\begin{Example}
The {\sc plp} in Example \ref{example:Alarm} is acyclic, and thus stratified, but not
definite. The grounded dependency graph of this program is depicted in Figure~\ref{figure:AlarmGraph}. 
This graph can be interpreted as a Bayesian network, as we discuss later \cite{Poole93AI}. 
There are $2^5$ total choices, and the probability of $\mathsf{calls}(\mathsf{a})$ is $0.58$.
\end{Example}

\begin{figure}
\begin{center}
\begin{tikzpicture}[scale=1.3]
\node[draw,rectangle,rounded corners] (b) at (1,0.6) {$\mathsf{burglary}$};
\node[draw,rectangle,rounded corners] (e) at (1,-0.1) {$\mathsf{earthquake}$};
\node[draw,rectangle,rounded corners] (a) at (3.5,0.6) {$\mathsf{alarm}$};
\node[draw,rectangle,rounded corners] (cM) at (6,0.62) {$\mathsf{calls}(a)$};
\node[draw,rectangle,rounded corners] (cJ) at (6,-0.1) {$\mathsf{calls}(b)$};
\node[draw,rectangle,rounded corners] (nM) at (8,0.62) {$\mathsf{neighbor}(a)$};
\node[draw,rectangle,rounded corners] (nJ) at (8,-0.1) {$\mathsf{neighbor}(b)$};
\node[draw,rectangle,rounded corners] (a1) at (2.5,-0.1) {$\mathsf{a1}$};
\node[draw,rectangle,rounded corners] (a2) at (3.5,-0.1) {$\mathsf{a2}$};
\node[draw,rectangle,rounded corners] (a3) at (4.5,-0.1) {$\mathsf{a3}$};
\draw[->,>=latex] (b)--(a);
\draw[->,>=latex] (e)--(a);
\draw[->,>=latex] (a)--(cM);
\draw[->,>=latex] (a)--(cJ);
\draw[->,>=latex] (a1)--(a);
\draw[->,>=latex] (a2)--(a);
\draw[->,>=latex] (a3)--(a);
\draw[->,>=latex] (nM)--(cM);
\draw[->,>=latex] (nJ)--(cJ);
\end{tikzpicture}
\end{center}
\vspace*{-3ex}
\caption{The grounded dependency graph for Example \ref{example:Alarm}.}
\label{figure:AlarmGraph}
\end{figure}
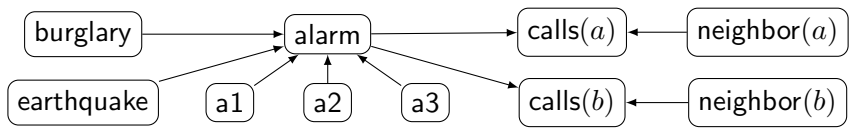

\begin{Example}\label{example:Smokers}
Consider a probabilistic version of the ``smokers'' program in
Example \ref{example:logicSmokers} \cite{Fierens2015}:
  \[
    \begin{array}{l}
      \mathsf{smokes}(X) \colonminus \mathsf{stress}(X)\bigperiod \\
      \mathsf{smokes}(X) \colonminus \mathsf{influences}(Y,X), \mathsf{smokes}(Y)\bigperiod \\
     0.3::\mathsf{influences}(a,b)\bigperiod \;\;  0.3::\mathsf{influences}(b,a).
       \;\;   0.8::\mathsf{stress}(b)\bigperiod
     \end{array}
  \]
The  grounded dependency graph of this program is identical to the one shown in Figure \ref{fig:depSmokers}.
It is tempting to interpret this graph as a Bayesian network, but of course this 
is not quite right as the graph is cyclic. Indeed the program is not acyclic, but it is definite and therefore
stratified, hence a unique distribution is defined over ground atoms.
For instance, we have $\pr{\mathsf{smokes}(a)} = 0.06$ and $\pr{\mathsf{smokes}(b)} = 0.2$.
The program would still be stratified if the first rule were replaced by 
\[
\mathsf{smokes}(X) \colonminus \mathbf{not}\;\mathsf{stress}(X)\bigperiod
\]
In this case there would still
be a cycle, but the negative edge in the dependency graph would not belong to the cycle. 
\hfill $\Box$
\end{Example}

Often a stratified program is used to implement recursion, as illustrated by the next example:

\begin{Example}
Consider the following {\sc plp}, based on an example in the ProbLog distribution:
\[
\label{equation:TransitiveClosure}
\begin{array}{c}
\mathsf{path}(X,Y) \colonminus \mathsf{edge}(X,Y)\bigperiod  \\
\mathsf{path}(X,Y) \colonminus \mathsf{edge}(X,Y), \mathsf{path}(X,Y)\bigperiod \\
0.6::\mathsf{edge}(\mathsf{1},\mathsf{2})\bigperiod \quad 
0.1::\mathsf{edge}(\mathsf{1},\mathsf{3})\bigperiod \quad 
0.4::\mathsf{edge}(\mathsf{2},\mathsf{5})\bigperiod \quad 
0.3::\mathsf{edge}(\mathsf{2},\mathsf{6})\bigperiod \\
0.3::\mathsf{edge}(\mathsf{3},\mathsf{4})\bigperiod \quad 
0.8::\mathsf{edge}(\mathsf{4},\mathsf{5})\bigperiod \quad 
0.2::\mathsf{edge}(\mathsf{5},\mathsf{6})\bigperiod
\end{array}
\]
That is, we have a random graph with nodes $\mathsf{1},\dots,\mathsf{6}$, and probabilities attached to
  edges.   The query $\pr{\mathsf{path}(\mathsf{1},\mathsf{6})=\mathsf{true}}$ yields
the probability that there is a path between nodes $\mathsf{1}$ and $\mathsf{6}$.
Using ProbLog one obtains $\pr{\mathsf{path}(\mathsf{1},\mathsf{6})=\mathsf{true}}=0.217$. 
$\Box$
\end{Example}

\subsection{The semantics of general probabilistic logic programs}

If a normal logic program is non-stratified, then its well-founded semantics may be
a partial interpretation, and some atoms may be left as $\mathsf{undefined}$; 
it may have several stable models, or no stable model at all.  
Thus we must accommodate these cases when we contemplate non-stratified
{\sc plp}s.

\subsubsection{The credal semantics}

A first possible semantics for general probabilistic logic programs can be extracted from 
work by \citeA{Lukasiewicz2005ecsqaru,Lukasiewicz2007} on probabilistic description logic programs.
To describe that  proposal, a few definitions are needed. 
A  {\sc plp} $\left< \mathbf{P},\mathbf{PF} \right>$ is {\em consistent}
if there is at least one stable model for each   total choice of $\mathbf{PF}$.  
A {\em probability model} for a consistent {\sc plp}
$\left< \mathbf{P},\mathbf{PF} \right>$ is a probability measure $\pro$ 
over interpretations of $\mathbf{P}$,  such that: \\
(i) every interpretation $\mathcal{I}$ with $\pr{\mathcal{I}}>0$ is a stable 
model of $\mathbf{P} \cup \mathbf{PF}^{\downarrow\theta}$ for the total choice $\theta$ that
agrees with $\mathcal{I}$ on the probabilistic facts (that is, $\theta$ induces the same
truth values as $\mathcal{I}$ for the grounded probabilistic facts); and \\
(ii) the probability of each total choice $\theta$ is the product of the probabilities
for all individual choices in $\theta$. \\
The set of all probability models for a {\sc plp} is the semantics of 
the program.  Later examples will clarify this construction.

Lukasiewicz calls his proposed semantics the {\em answer set semantics}  for
probabilistic description logic programs; however, note that this name is both
too restrictive (the semantics can be used for programs with functions, for instance)
and a bit opaque (it does not emphasize the fact that it deals with uncertainty).
We prefer the term {\em credal semantics}, which we adopt from now on. 
The reason for this latter name is that a set of probability measures is often called 
a {\em credal set}  \cite{Augustin2014}. 

Now given a consistent {\sc plp}, we may be interested
in the smallest possible value of $\pr{\mathbf{Q}}$ for a set $\mathbf{Q}$
of truth assignments,  
with respect to the set $\credalo$ of all probability models of the {\sc plp}.
 This is conveyed by the {\em lower probability} of $\mathbf{Q}$,
$\lpr{\mathbf{Q}} = \inf_{\pro \in \credalo} \pr{\mathbf{Q}}$. Similarly, we have the 
{\em upper probability} of $\mathbf{Q}$, $\upr{\mathbf{Q}} = \sup_{\pro \in \credalo} \pr{\mathbf{Q}}$. 
Suppose that we also have a set of $\mathbf{E}$ of truth assignments for
ground atoms; then we may be interested in the conditional lower and upper
probabilities, respectively
$\lpr{\mathbf{Q}|\mathbf{E}} = \inf_{\pro \in \credalo: \pr{\mathbf{E}}>0} \pr{\mathbf{Q}|\mathbf{E}}$ 
and 
$\upr{\mathbf{Q}|\mathbf{E}} = \sup_{\pro \in \credalo: \pr{\mathbf{E}}>0} \pr{\mathbf{Q}|\mathbf{E}}$.
We leave conditional lower/upper probabilities undefined when $\upr{\mathbf{E}}=0$ (that is, when
$\pr{\mathbf{E}}=0$ for every probability model).
This is not the only possible convention:
\citeA[Section 3]{Lukasiewicz2005ecsqaru} adopts $\lpr{\mathbf{Q}|\mathbf{E}}=1$ and
$\upr{\mathbf{Q}|\mathbf{E}}=0$ in this case, while Walley's style of conditioning 
prescribes $\lpr{\mathbf{Q}|\mathbf{E}}=0$ and
$\upr{\mathbf{Q}|\mathbf{E}}=1$ whenever $\upr{\mathbf{E}}=0$
\cite{Walley91}.

\subsubsection{The well-founded semantics}

The approach by  \citeA{Hadjichristodoulou2012} 
is to allow probabilities directly over well-founded models, thus allowing probabilities
over atoms that are $\mathsf{undefined}$. That is, given a {\sc plp} 
$\left< \mathbf{P},\mathbf{PF} \right>$,  associate to each total choice $\theta$
the unique well-founded model of $\mathbf{P}\cup\mathbf{PF}^{\downarrow\theta}$ to
$\theta$; the unique distribution over total choices induces a unique distribution 
over well-founded models. Note that probabilities may be assigned to $\mathsf{undefined}$
values in this sort of semantics. As we discuss in Section \ref{section:Semantics}, this
is a bold proposal whose interpretation is far from simple.

Regardless of its meaning, the approach deserves attention as it is the only
one in the literature that genuinely combines well-founded semantics  with
probabilities. Accordingly, we refer to it as the {\em well-founded semantics}
of probabilistic logic programs (the combination of language and semantics 
is named WF-PRISM by Hadjichristodoulou and Warren). 

\subsubsection{Other semantics}

Sato et  al.\   propose a semantics where 
distributions are defined over models produced
by  Fitting's three-valued semantics \cite{Sato2005}.
We note that Fitting's semantics is weaker than
the well-founded semantics, and the literature on logic programming has consistently 
preferred the latter, as we do in this paper.

Another three-valued approach, proposed by  \citeA{Lukasiewicz2005ecsqaru,Lukasiewicz2007},
leaves the probability of any formula 
as $\mathsf{undefined}$ whenever the formula is  $\mathsf{undefined}$ for
any total choice (to determine whether a formula is $\mathsf{undefined}$
or not in a particular partial interpretation, three-valued logic is used). 
Hence, when a formula gets a (non-$\mathsf{undefined}$) numeric probability value, 
its truth value is the same for all stable models; thus any numeric probability calculations 
that are produced with this semantics agree with the semantics based on stable
models \citeA[Theorem 4.5]{Lukasiewicz2007}.
That is, Lukasiewicz' proposal is more akin to the credal semantics than
to the well-founded semantics.

A different semantics for non-stratified {\sc plp}s is adopted by the P-log language
\cite{Baral2009}. P-log allows for disjunction in heads and other features, but when 
restricted to normal logic programs it is syntactically similar to ProbLog. The semantics
of a P-log program is given by a single probability distribution over possibly many
stable  models; whenever necessary default assumptions 
are called to distribute probability evenly, or to avoid inconsistent realizations 
(by re-normalization). We leave an analysis of this sort of semantics to the future;
here we prefer to focus on semantics that do not make default assumptions 
concerning probabilities. 

It is also important to mention the constraint logic programming
language of  \citeA{Michels2015}, a significant contribution that 
is also based on credal sets. However, they use a syntax and semantics that
is markedly different from Lukasiewicz's approach, as they allow 
continuous variables but do not let a program have multiple stable
models per total choice. They also present expressions for (conditional) lower and
upper probabilities, by direct optimization; in Section \ref{section:Structure}
we show that such expressions can be derived from properties of 
infinitely monotone Choquet capacities. 

Finally,  \citeA{Ceylan2016} have introduced a semantics that
allows for inconsistent {\sc plp}s to have meaning without getting into
three-valued logic.  They adopt a much more sophisticated family
of logic programs (within the Datalog$^\pm$ language), and they provide
a thorough analysis of complexity that we discuss later. This is also a 
proposal that deserves future study. 

In this paper we focus on the {\em credal} and the {\em well-founded} semantics
in the remainder of this paper, whenever non-stratified {\sc plp}s are discussed, 
but certainly there are other avenues to explore.

\section{The semantics of the credal and the well-founded semantics}
\label{section:Semantics}

It does not seem that any comparison is available in the literature between
the credal and the well-founded semantics for non-stratified {\sc plp}s. 
Indeed, the credal semantics has not been adopted since its appearance,
a turn of events we find unfortunate as it is quite a sensible semantics for
general {\sc plp}s. In this section we present some examples that emphasize
differences between these semantics, and we examine their interpretation. 

\begin{Example}\label{example:ProbabilisticBasic}
Consider a probabilistic version of Example \ref{example:Basic}:
\[
\mathsf{p} \colonminus \mathbf{not}\;\mathsf{q}, \mathbf{not}\;\mathsf{r}\bigperiod
\qquad 
\mathsf{q} \colonminus \mathbf{not}\;\mathsf{p}\bigperiod
\qquad
\alpha::\mathsf{r}\bigperiod
\]
This is in essence identical to the {\sc plp} in
Expression~(\ref{equation:BasicCyclicProgram}). To interpret the {\sc plp}, note that
with probability $\alpha$ we obtain the normal logic program
\[
\mathsf{p} \colonminus \mathbf{not}\;\mathsf{q}, \mathbf{not}\;\mathsf{r}\bigperiod
\qquad 
\mathsf{q} \colonminus \mathbf{not}\;\mathsf{p}\bigperiod
\qquad
\mathsf{r}\bigperiod
\]
The unique stable/well-founded model of this program assigns $\mathsf{true}$ to $\mathsf{r}$ and $\mathsf{q}$,
and $\mathsf{false}$ to $\mathsf{p}$. That is, we have the stable
model $s_1 = \{ \neg \mathsf{p}, \mathsf{q}, \mathsf{r} \}$. 
On the other hand, with probability $1-\alpha$ we
obtain a program with different behavior, namely:
\[
\mathsf{p} \colonminus \mathbf{not}\;\mathsf{q}, \mathbf{not}\;\mathsf{r}\bigperiod
\qquad 
\mathsf{q} \colonminus \mathbf{not}\;\mathsf{p}\bigperiod
\]
This program has two stable models: $s_2 = \{ \mathsf{p}, \neg \mathsf{q}, \neg \mathsf{r} \}$
and $s_3 = \{ \neg \mathsf{p}, \mathsf{q}, \neg \mathsf{r} \}$. 
But this program has a single well-founded model, where $\mathsf{r}$ is $\mathsf{false}$
and both $\mathsf{p}$ and $\mathsf{q}$ are $\mathsf{undefined}$. 

Consider the credal semantics.
There is a probabilty model such that $\pr{s_2}=1-\alpha$ and $\pr{s_3}=0$, 
and another probability model such that $\pr{s_2}=0$ and $\pr{s_3}=1-\alpha$. 
Indeed any probability measure such that 
$\pr{s_1}=\alpha$ and $\pr{s_2}=\gamma(1-\alpha)$, $\pr{s_3}=(1-\gamma)(1-\alpha)$,
for $\gamma \in [0,1]$, is also a probability model for this {\sc plp}. 

The well-founded semantics is instead a single distribution that assigns
$\pr{s_1}=1-\alpha$, and assigns probability mass $\alpha$ to the
partial interpretation $\{\neg \mathsf{r}\}$. 

Now consider an inference; say for instance one wants $\pr{\mathsf{r}=\mathsf{true}}$.
Clearly $\pr{\mathsf{r}=\mathsf{true}}=1-\alpha$, regardless of the semantics.
But consider $\mathsf{p}$. 
With respect to the credal semantics, the relevant quantities are
$\lpr{\mathsf{p}=\mathsf{true}}=0$
and
$\upr{\mathsf{p}=\mathsf{true}}=1-\alpha$. 
And with respect to the well-founded semantics we have instead
$\pr{\mathsf{p}=\mathsf{true}}=0$
and $\pr{\mathsf{p}=\mathsf{false}}=\alpha$,
while $\pr{\mathsf{p}=\mathsf{undefined}}=1-\alpha$. 

To elaborate on this sort of programming pattern, consider the following 
non-propositional example,
adapted from  \citeA{Eiter2009Primer}:
\[
\begin{array}{c}
0.9::\mathsf{man}(\mathsf{dilbert})\bigperiod \\
\mathsf{single}(X) \colonminus \mathsf{man}(X), \mathbf{not}\;\mathsf{husband}(X)\bigperiod \\
\mathsf{husband}(X) \colonminus \mathsf{man}(X), \mathbf{not}\;\mathsf{single}(X)\bigperiod
\end{array}
\]
When $\mathsf{man}(\mathsf{dilbert})$ is discarded, the resulting normal
logic program has a single stable model
$s_1 = \{\neg \mathsf{man}(\mathsf{dilbert}),
              \neg \mathsf{husband}(\mathsf{dilbert}),
              \neg \mathsf{single}(\mathsf{dilbert})\}$.
When $\mathsf{man}(\mathsf{dilbert})$ is a fact,
the resulting program two stable models:
\begin{eqnarray*}
s_2 & = & \{\mathsf{man}(\mathsf{dilbert}),
       \mathsf{husband}(\mathsf{dilbert}),
        \neg  \mathsf{single}(\mathsf{dilbert})\}, \\
s_3 & = &  \{\mathsf{man}(\mathsf{dilbert}), 
        \neg \mathsf{husband}(\mathsf{dilbert}),
         \mathsf{single}(\mathsf{dilbert})\};
\end{eqnarray*}
the well-founded semantics instead leads to $\mathsf{undefined}$
values both for $\mathsf{husband}(\mathsf{dilbert})$ and 
$\mathsf{single}(\mathsf{dilbert})$. 

Note that any probability measure such that 
$\pr{s_1}=0.1$, 
$\pr{s_2}=0.9\gamma$,
and 
$\pr{s_3}=0.9(1-\gamma)$, 
for $\gamma\in[0,1]$, is a probability model.
Hence we have 
$\lpr{\mathsf{husband}(\mathsf{dilbert})=\mathsf{true}}=0$
and 
$\upr{\mathsf{husband}(\mathsf{dilbert})=\mathsf{true}}=0.9$
with respect to the credal semantics,
while we have
$\pr{\mathsf{husband}(\mathsf{dilbert})=\mathsf{true}}=0$,
$\pr{\mathsf{husband}(\mathsf{dilbert})=\mathsf{false}}=0.1$,
and finally we have
$\pr{\mathsf{husband}(\mathsf{dilbert})=\mathsf{undefined}}=0.9$
with respect to the well-founded semantics.
$\hfill \Box$
\end{Example}

\begin{Example}\label{example:CyclicProgram}
Now take a {\sc plp} adapted from an example by
   \citeA[Example IV.1]{Hadjichristodoulou2012},
where the same pattern of cyclic negation observed in the previous example seems
to appear: 
\[
\begin{array}{c}
\mathsf{cold} \colonminus \mathsf{headache}, \mathsf{a}\bigperiod \qquad 
\mathsf{cold} \colonminus \mathbf{not}\;\mathsf{headache}, \mathbf{not}\;\mathsf{a}. \qquad
0.34::\mathsf{a}\bigperiod \\
\mathsf{headache} \colonminus \mathsf{cold}, \mathsf{b}\bigperiod \qquad 
\mathsf{headache} \colonminus  \mathbf{not}\;\mathsf{b}\bigperiod \qquad
0.25::\mathsf{b}\bigperiod
\end{array}
\]
There are four total choices, each inducing a normal logic
program. In one case, namely $\{ \neg \mathsf{a},\mathsf{b} \}$, the resulting
normal logic program has no stable model. Hence, this {\sc plp} has no credal semantics.
However, it does have a well-founded semantics. Table \ref{table:CyclicProgram}
shows the assignments for $\mathsf{cold}$ and $\mathsf{headache}$ induced
by the various total choices; we obtain
\[
\begin{array}{ccc}
\pr{\mathsf{cold}=\mathsf{true}}=0.255,          & \hspace*{1cm} &  \pr{\mathsf{headache}=\mathsf{true}}=0.750, \\
\pr{\mathsf{cold}=\mathsf{undefined}}=0.165, & \hspace*{1cm} &  \pr{\mathsf{headache}=\mathsf{undefined}}=0.165, \\
\pr{\mathsf{cold}=\mathsf{false}}=0.580,        & \hspace*{1cm} &   \pr{\mathsf{headache}=\mathsf{false}}=0.085. 
\end{array}
\]
by collecting probabilities from Table \ref{table:CyclicProgram}.
$\hfill \Box$
\end{Example}

\begin{table}
\begin{center}
\begin{tabular}{|c|c|c|c|c|} \hline
$\mathsf{a}$ & $\mathsf{b}$ & $\mathsf{cold}$ & $\mathsf{headache}$ & Probability  \\ \hline \hline
$\mathsf{true}$ & $\mathsf{true}$ & $\mathsf{false}$ & $\mathsf{false}$   
         & $0.34 \times 0.25 = 0.085$  \\ \hline
$\mathsf{true}$ & $\mathsf{false}$ & $\mathsf{true}$ & $\mathsf{true}$ 
          & $0.34 \times 0.75 = 0.255$ \\ \hline
$\mathsf{false}$ & $\mathsf{true}$ & $\mathsf{undefined}$ & $\mathsf{undefined}$ 
          & $0.66 \times 0.25 = 0.165$ \\ \hline
$\mathsf{false}$ & $\mathsf{false}$ & $\mathsf{false}$ & $\mathsf{true}$
          & $0.66 \times 0.75 = 0.495$ \\ \hline
\end{tabular}
\end{center}
\caption{Total choices, the induced assignments, and their probabilities, 
for Example \ref{example:CyclicProgram}.}
\label{table:CyclicProgram}
\end{table}

\begin{Example}\label{example:GraphColoring}
Consider a graph coloring problem consisting of the rules:
\[
\begin{array}{c}
\mathsf{color}(V,\mathsf{red}) \colonminus­ \mathbf{not}\;\mathsf{color}(V,\mathsf{yellow}), \mathbf{not}\;\mathsf{color}(V,\mathsf{green}), \mathsf{vertex}(V)\bigperiod \\
\mathsf{color}(V,\mathsf{yellow}) \colonminus­ \mathbf{not}\;\mathsf{color}(V,\mathsf{red}), \mathbf{not}\;\mathsf{color}(V,\mathsf{green}), \mathsf{vertex}(V)\bigperiod \\
\mathsf{color}(V,\mathsf{green}) \colonminus­ \mathbf{not}\;\mathsf{color}(V,\mathsf{red}), \mathbf{not}\;\mathsf{color}(V,\mathsf{yellow}), \mathsf{vertex}(V)\bigperiod \\
\mathsf{clash} \colonminus \mathbf{not}\;\mathsf{clash}, 
	\mathsf{edge}(V,U), \mathsf{color}(V,C), \mathsf{color}(U,C)\bigperiod
\end{array}
\]
and the facts: 
for $i \in \{\mathsf{1}, \dots, \mathsf{5}\}$, $\mathsf{vertex}(i)\bigperiod$, and
\[
\begin{array}{c}
\mathsf{color}(\mathsf{2},\mathsf{red})\bigperiod \quad 
\mathsf{color}(\mathsf{5},\mathsf{green})\bigperiod \\
0.5::\mathsf{edge}(\mathsf{4},\mathsf{5})\bigperiod \\
\mathsf{edge}(\mathsf{1},\mathsf{3})\bigperiod \quad 
\mathsf{edge}(\mathsf{1},\mathsf{4})\bigperiod \quad
\mathsf{edge}(\mathsf{2},\mathsf{1})\bigperiod \quad 
\mathsf{edge}(\mathsf{2},\mathsf{4})\bigperiod \quad
\mathsf{edge}(\mathsf{3},\mathsf{5})\bigperiod \quad   
\mathsf{edge}(\mathsf{4},\mathsf{3})\bigperiod    
\end{array}
\]
The facts mentioning $\mathsf{vertex}$ and $\mathsf{edge}$
encode the graph in Figure \ref{figure:GraphColoring} (left); the probabilistic fact is
indicated as a dashed edge.
%
A total choice determines a particular graph. 
For a fixed total choice, the stable models of the program are
the 3-colorings of the resulting graph (this is indeed a popular example
of {\em answer set programming} \cite{Eiter2009Primer}).

Now, if  probabilistic fact 
$\mathsf{edge}(\mathsf{4},\mathsf{5})$ is $\mathsf{true}$,
there is a single stable model; otherwise, there are two stable models.
Using the credal semantics we obtain:
$\lpr{\mathsf{color}(\mathsf{1},\mathsf{yellow})=\mathsf{true}}=0$
and
$\upr{\mathsf{color}(\mathsf{1},\mathsf{yellow})=\mathsf{true}}=1/2$;
also, we have
$\lpr{\mathsf{color}(\mathsf{4},\mathsf{yellow})=\mathsf{true}}=1/2$
and
$\upr{\mathsf{color}(\mathsf{4},\mathsf{yellow})=\mathsf{true}}=1$;
and 
$\lpr{\mathsf{color}(\mathsf{3},\mathsf{red})=\mathsf{true}}=
   \upr{\mathsf{color}(\mathsf{3},\mathsf{red})=\mathsf{true}}=1$.

On the other hand, the well-founded semantics leaves undefined the
colors of vertices $\mathsf{1}$, $\mathsf{3}$, and $\mathsf{4}$, both
when $\mathsf{edge}(\mathsf{4},\mathsf{5})$ is $\mathsf{true}$
and when it is $\mathsf{false}$. 
Thus we have, for
$V \in \{\mathsf{1},\mathsf{3},\mathsf{4}\}$ and 
$C \in \{\mathsf{red},\mathsf{yellow},\mathsf{green}\}$, that 
$\pr{\mathsf{color}(V,C)=\mathsf{undefined}}=1$.
$\hfill \Box$
\end{Example}

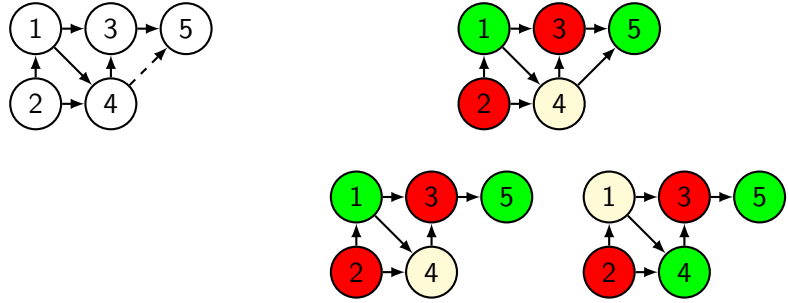
\begin{figure}[t]
\begin{center}
\vspace*{5mm}
\begin{tikzpicture}[thick,->, >=latex]
\node[circle,draw] (v1) at (0,2) {$\mathsf{1}$};
\node[circle,draw] (v2) at (0,1) {$\mathsf{2}$};
\node[circle,draw] (v3) at (1,2) {$\mathsf{3}$};
\node[circle,draw] (v4) at (1,1) {$\mathsf{4}$};
\node[circle,draw] (v5) at (2,2) {$\mathsf{5}$};
\draw (v1)--(v3);
\draw (v1)--(v4);
\draw (v2)--(v1);
\draw (v2)--(v4);
\draw (v3)--(v5);
\draw (v4)--(v3);
\draw[dashed] (v4)--(v5);
\end{tikzpicture}
\hspace*{30mm}
\begin{tikzpicture}[thick,->, >=latex]
\node[circle,draw,fill=green] (v1) at (0,2) {$\mathsf{1}$};
\node[circle,draw,fill=red] (v2) at (0,1) {$\mathsf{2}$};
\node[circle,draw,fill=red] (v3) at (1,2) {$\mathsf{3}$};
\node[circle,draw,fill=yellow!20] (v4) at (1,1) {$\mathsf{4}$};
\node[circle,draw,fill=green] (v5) at (2,2) {$\mathsf{5}$};
\draw (v1)--(v3);
\draw (v1)--(v4);
\draw (v2)--(v1);
\draw (v2)--(v4);
\draw (v3)--(v5);
\draw (v4)--(v3);
\draw (v4)--(v5);
\end{tikzpicture}

\vspace*{5mm}

\hspace*{58mm}
\begin{tikzpicture}[thick,->, >=latex]
\node[circle,draw,fill=green] (v1) at (0,2) {$\mathsf{1}$};
\node[circle,draw,fill=red] (v2) at (0,1) {$\mathsf{2}$};
\node[circle,draw,fill=red] (v3) at (1,2) {$\mathsf{3}$};
\node[circle,draw,fill=yellow!20] (v4) at (1,1) {$\mathsf{4}$};
\node[circle,draw,fill=green] (v5) at (2,2) {$\mathsf{5}$};
\draw (v1)--(v3);
\draw (v1)--(v4);
\draw (v2)--(v1);
\draw (v2)--(v4);
\draw (v3)--(v5);
\draw (v4)--(v3); 
\end{tikzpicture}
\hspace*{4mm}
\begin{tikzpicture}[thick,->, >=latex]
\node[circle,draw,fill=yellow!20] (v1) at (0,2) {$\mathsf{1}$};
\node[circle,draw,fill=red] (v2) at (0,1) {$\mathsf{2}$};
\node[circle,draw,fill=red] (v3) at (1,2) {$\mathsf{3}$};
\node[circle,draw,fill=green] (v4) at (1,1) {$\mathsf{4}$};
\node[circle,draw,fill=green] (v5) at (2,2) {$\mathsf{5}$};
\draw (v1)--(v3);
\draw (v1)--(v4);
\draw (v2)--(v1);
\draw (v2)--(v4);
\draw (v3)--(v5);
\draw (v4)--(v3); 
\end{tikzpicture}
\end{center}
\vspace*{-3ex}
\caption{Graph described in Example \ref{example:GraphColoring} (left), and the stable models
produced by fixing colors to nodes $\mathsf{2}$ and $\mathsf{5}$: one stable model 
is obtained for a total choice (right, top), and two stable models are obtained for the other
total choice (right, bottom). [Note that ``yellow'' appears as very light grey, ``green'' appears
as light grey, and ``red'' appears
as dark grey when printed in black-and-white.]}
\label{figure:GraphColoring}
\end{figure}

\begin{Example}\label{example:Wins}
Take the normal logic program discussed in Example \ref{example:Game},
and consider the following probabilistic version (there is one probabilistic
move in the game):
\[
\begin{array}{c}
\mathsf{wins}(X) \colonminus \mathsf{move}(X,Y), \mathbf{not}\;\mathsf{wins}(Y)\bigperiod \\
\mathsf{move}(\mathsf{a},\mathsf{b})\bigperiod \quad
\mathsf{move}(\mathsf{b},\mathsf{a})\bigperiod \quad
\mathsf{move}(\mathsf{b},\mathsf{c})\bigperiod \quad
0.3::\mathsf{move}(\mathsf{c},\mathsf{d})\bigperiod
\end{array}
\]

If $\mathsf{move}(\mathsf{c},\mathsf{d})$ is discarded, there is a single
stable model (where $\mathsf{b}$ is the only winning position);
otherwise, there are two stable models ($\mathsf{wins}(\mathsf{c})$ is $\mathsf{true}$
and $\mathsf{wins}(\mathsf{d})$ is $\mathsf{false}$
in both of them; $\mathsf{wins}(\mathsf{a})$ is $\mathsf{true}$ in one, while 
$\mathsf{wins}(\mathsf{b})$ is $\mathsf{true}$ in the other).
Thus the credal semantics yields
$\lpr{\mathsf{wins}(\mathsf{b})=\mathsf{true}}= 0.7$ and 
$\upr{\mathsf{wins}(\mathsf{b})=\mathsf{true}} = 1.0$;
$\lpr{\mathsf{wins}(\mathsf{c})=\mathsf{true}} = 0.3$ and 
$\upr{\mathsf{wins}(\mathsf{c})=\mathsf{true}} = 0.3$.

Now if $\mathsf{move}(\mathsf{c},\mathsf{d})$ is discarded, the
well-founded model is the unique stable model where $\mathsf{b}$ is the only
winning position. 
But if $\mathsf{move}(\mathsf{c},\mathsf{d})$ is $\mathsf{true}$, then the
well-founded model assigns $\mathsf{true}$ to $\mathsf{wins}(\mathsf{c})$
and $\mathsf{false}$ to $\mathsf{wins}(\mathsf{d})$, leaving both
$\mathsf{wins}(\mathsf{a})$ and $\mathsf{wins}(\mathsf{b})$ as $\mathsf{undefined}$.
Hence the well-founded semantics yields
$\pr{\mathsf{wins}(\mathsf{c})=\mathsf{true}}=0.3$ and 
$\pr{\mathsf{wins}(\mathsf{c})=\mathsf{false}}=0.7$, while
$\pr{\mathsf{wins}(\mathsf{b})=\mathsf{true}}=0.7$ and 
$\pr{\mathsf{wins}(\mathsf{b})=\mathsf{undefined}}=0.3$.
$\Box$
\end{Example}

\begin{Example}\label{example:ProbabilisticBarberParadox}
Return to the Barber Paradox discussed in Example \ref{example:Barber},
now with a probabilistic twist:
\[
\begin{array}{c}
\mathsf{shaves}(X,Y) \colonminus 
	\mathsf{barber}(X), \mathsf{villager}(Y), \mathbf{not}\;\mathsf{shaves}(Y,Y). \\
\mathsf{villager}(a). \quad
\mathsf{barber}(b). \quad
0.5::\mathsf{villager}(b). 
\end{array}
\]

This program does not have a stable model when $\mathsf{villager}(b)$ is a fact.
Thus the {\sc plp} fails to have a credal semantics. 

However, the well-founded semantics is clear even when  $\mathsf{villager}(b)$ is $\mathsf{true}$:
in this case, $\mathsf{barber}(a)$, $\mathsf{shaves}(a,a)$ and $\mathsf{shaves}(a,b)$ are 
$\mathsf{false}$, while $\mathsf{shaves}(b,a)$ is $\mathsf{true}$, and $\mathsf{shaves}(b,b)$
is $\mathsf{undefined}$. And the well-founded semantics is also clear when 
$\mathsf{villager}(b)$ is discarded (that is, when $\mathsf{villager}(b)$ is $\mathsf{false}$): 
only  
$\mathsf{shaves}(b,a)$ is $\mathsf{true}$. Hence we obtain 
$\pr{\mathsf{shaves}(b,a)=\mathsf{true}}=1$, while
$\pr{\mathsf{shaves}(b,b)=\mathsf{false}}=0.5$ and
$\pr{\mathsf{shaves}(b,b)=\mathsf{undefined}}=0.5$.
\hfill $\Box$
\end{Example}


These examples should suffice to show that there are substantial differences
between the credal and the well-founded semantics. What to choose?

We start our analysis with the well-founded semantics.
At first it may seem that this semantics is very attractive because if attaches
a {\em unique} probability distribution to {\em every} well-formed {\sc plp}
(even in cases where the credal semantics is not defined). Besides, the 
well-founded semantics for {\sc plp}s is conceptually simple for
anyone who has mastered the well-founded semantics for normal logic 
programs. 

On the other hand, some of the weaknesses of the well-founded
semantics already  appear in non-probabilistic programs. 
Certainly the point here is not to emphasize non-probabilistic programs, 
but consider the difficulty of the well-founded semantics in ``reasoning by cases''.
For instance, consider the program \cite{Gelder91}:
\[
\mathsf{a} \colonminus \mathbf{not}\;\mathsf{b}. \qquad
\mathsf{b} \colonminus \mathbf{not}\;\mathsf{a}. \qquad
\mathsf{p} \colonminus \mathsf{a}. \qquad
\mathsf{p} \colonminus \mathsf{b}. 
\]
The well-founded semantics leaves every atom $\mathsf{undefined}$. However, 
it is apparent that $\mathsf{p}$ should be assigned $\mathsf{true}$, for we can 
find two ways to understand the relation between $\mathsf{a}$ and $\mathsf{b}$, 
and both ways take $\mathsf{p}$ to $\mathsf{true}$ (these two interpretations 
are exactly the stable models: one contains $\mathsf{a}$ and  $\neg \mathsf{b}$, 
the other contains $\neg \mathsf{a}$ and $\mathsf{b}$). 
The reader should note that this is similar to the situation in 
Example \ref{example:GraphColoring}: there  the well-founded semantics cannot
even fix the color of vertex $\mathsf{3}$, even though this vertex must clearly
be colored $\mathsf{red}$. 

The well-founded semantics of non-probabilistic normal logic programs has also 
drawn criticism in its reliance on three-valued
logic, and the status of the $\mathsf{undefined}$ truth value has received attention 
not only in philosophical inquiry \cite{Bergmann2008,Malinowski2007},
but in the practical development of databases \cite{Date2005,Rubinson2007}.
In short, it is difficult to determine whether $\mathsf{undefined}$ should be taken as simply 
an expression of subjective ignorance, or the indication that something really is 
neither $\mathsf{true}$ nor $\mathsf{false}$ \cite[Section 1.2.1.2]{Wallace93}.

In any case, we do not want to repeat the old and unresolved debate on three-valued logic
here; we want to focus on the even bigger problems that appear when three-valued logic
is mixed with probabilities.
The problem is that $\mathsf{undefined}$ values reflect a type of uncertainty,
and probability is supposed to deal with uncertainty; by putting those together
we may wish to invite collaboration but we may end up with plain confusion.
Consider for instance Example \ref{example:CyclicProgram}. What does it mean to
say that $\pr{\mathsf{headache}=\mathsf{undefined}}=0.165$? Supposedly
probability is here to tell us the odds of $\mathsf{true}$ and $\mathsf{false}$;
by learning the probability of $\mathsf{undefined}$, the next question should
be about the probability of $\mathsf{headache}$ to be $\mathsf{true}$ when
one is saying that it is $\mathsf{undefined}$. 
In fact, one might ask for the value of
$\pr{\mathsf{headache}=\mathsf{true}|\mathsf{headache}=\mathsf{undefined}}$,
not realizing that in the well-founded semantics this value is simply zero.
To emphasize the difficulty in interpretation, 
suppose we add to Example \ref{example:CyclicProgram} the simple rule
\[
\mathsf{c} \colonminus \mathsf{a}, \mathsf{b}\bigperiod
\]
and one asks for 
$\pr{\mathsf{c}=\mathsf{false}|\mathsf{cold}=\mathsf{undefined}}$.
Should this number really be $1$, as obtained through the well-founded semantics,
or should we expect this to be a question about $\pr{\mathsf{c}=\mathsf{false}}$,
given that nothing of substance is observed about $\mathsf{cold}$?

The probabilistic Barber Paradox discussed in Example \ref{example:ProbabilisticBarberParadox}
describes a situation where the well-founded semantics can answer questions for
some individuals, even as it fails to find definite answers for other questions.
This is rather attractive, but one must ask: What exactly is the meaning of
$\pr{\mathsf{shaves}(b,b)=\mathsf{undefined}}=0.5$? 
Note that, for the {\em logical program} described in Example \ref{example:Barber}, 
it makes sense to return an $\mathsf{undefined}$ value: we are at a logical corner.
However, for the probabilistic program it is less sensible to obtain a non-zero 
 probability that some particular fact is $\mathsf{undefined}$. 

A difficulty here is that $\mathsf{undefined}$ values appear due to a
variety of situations that should apparently be treated in distinct ways: 
(i) programs may be contradictory (as it happens in Example \ref{example:CyclicProgram});
(ii) programs may fail to have a clear meaning (as in the Barber Paradox);
or (iii) programs may simply have several possible meanings (for instance, various stable models
as in Example \ref{example:GraphColoring}). 
In  case (i), it is even surprising that one would try to assign probabilities
to  contradictory cases. In cases (ii) and (iii), probabilities
may be contemplated, but then there is a confusing mix of probabilities and 
$\mathsf{undefined}$ values. 
The interpretation of the various possible meanings of $\mathsf{undefined}$, 
already difficult in three-valued logic, is magnified by the challenges in interpreting
probabilities.


Now consider the credal semantics. There are two possible criticisms one may raise against it.
First,  a program may fail to have a credal semantics: consider the probabilistic Barber Paradox.
Second, the credal semantics relies on sets of probability measures (credal sets), not on unique measures.
We examine these two points in turn.

The fact that some programs may fail to have a credal semantics is an annoyance in
that programs must be checked for consistency. However, as we have noted already,
some programs can seem contradictory, and in those cases one could argue that it
is appropriate not to have semantics.  
So, one may be perfectly satisfied with failure in Example \ref{example:CyclicProgram},
for the total choice $\{\neg \mathsf{a},\mathsf{b}\}$ in essence leads to the following 
clearly unsatisfiable pair of rules:
\[
\mathsf{headache} \colonminus \mathsf{cold}. \qquad \mathsf{cold} \colonminus \mathbf{not}\;\mathsf{headache}.  
\]
 What seems to be needed here is a verifier that
checks consistency of {\sc plp}s; we look into this later in this paper. 

Now consider the fact that the credal semantics relies on credal sets.
Anyone expecting any inference to produce a single probability value may be puzzled,
but reliance on sets of probabilities does not seem to be a flaw when examined in detail. 
One argument in favor of sets of probabilities is that they are legitimate representations 
for incomplete, imprecise or indeterminate beliefs, that can be justified in a variety of ways
\cite{Augustin2014,Troffaes2014,Walley91}. But even if one is not willing to take
credal sets as a final representation of beliefs, the credal semantics is wholly 
reasonable from a least commitment perspective. That is, the main question
should always be: What are the best bounds on probabilities that one can safely assume, 
taking into account {\em only} the given rules, facts, and assessments? 
From this point of view, Examples \ref{example:ProbabilisticBasic}, \ref{example:GraphColoring},
and \ref{example:Wins} are entirely justified: the options given to the program are not 
decided by the given information, so one must leave them open. In particular
Example \ref{example:GraphColoring} seems to be an excellent argument for
the credal semantics: basically, the program generates all 3-colorings of a given graph;
why should we insist on singling out a distribution over colorings when no preference
over them is expressed?

All in all, we find that the credal semantics is conceptually stronger than the well-founded
semantics, even though the latter is uniquely defined for every {\sc plp}. We 
now examine the structural and computation properties of these two semantics;
a final comparison is left to Section \ref{section:Conclusion}.

\section{The structure of credal semantics}
\label{section:Structure}
 	
Given the generality of {\sc plp}s, one might think that credal sets 
generated by the credal semantics could have an arbitrarily complex structure.
Surprisingly, the structure of the credal semantics of a
{\sc plp} is a relatively simple object:

\begin{Theorem}
\label{theorem:Capacity}
Given a consistent {\sc plp},
its credal semantics is a set of probability measures that dominate an infinitely
monotone Choquet capacity.
\end{Theorem}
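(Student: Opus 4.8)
The plan is to recognize the credal semantics as the \emph{core} (the set of dominating probability measures) of a Dempster--Shafer belief function, exploiting the classical fact that the infinitely monotone capacities are precisely the belief functions. Let $\Omega$ be the finite set of all interpretations of $\mathbf{P}$. The $n$ grounded probabilistic facts induce a partition of $\Omega$ into $2^n$ blocks, one per total choice $\theta$, where the block $B_\theta$ collects the interpretations whose truth values on the probabilistic facts agree with $\theta$. Writing $\pi(\theta)$ for the product probability of $\theta$, condition~(ii) in the definition of a probability model fixes $\pr{B_\theta}=\pi(\theta)$, while condition~(i) forces every positive-probability interpretation inside $B_\theta$ to lie in the set $S_\theta\subseteq B_\theta$ of stable models of $\mathbf{P}\cup\mathbf{PF}^{\downarrow\theta}$ that agree with $\theta$. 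By consistency each $S_\theta$ is nonempty, and since the blocks are disjoint so are the sets $S_\theta$.

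First I would define a basic probability assignment on $2^\Omega$ by $m(S_\theta)=\pi(\theta)$ for each $\theta$ (and $m(A)=0$ otherwise). Nonnegativity of the $\pi(\theta)$, the identity $\sum_\theta\pi(\theta)=1$, and consistency (no $S_\theta=\emptyset$) make this a legitimate mass assignment, so the induced capacity $\mu(B)=\sum_{A\subseteq B}m(A)=\sum_{\theta\colon S_\theta\subseteq B}\pi(\theta)$ is a belief function and therefore infinitely monotone by construction. It then remains to prove that the credal set $\credalo$ coincides with the core $\{\pro\colon \pr{B}\ge\mu(B)\mbox{ for all }B\subseteq\Omega\}$.

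The inclusion $\credalo\subseteq\{\pro\ge\mu\}$ is immediate: if $\pro$ assigns total mass $\pi(\theta)$ to $S_\theta$ and nothing elsewhere, then $\pr{B}\ge\sum_{\theta\colon S_\theta\subseteq B}\pi(\theta)=\mu(B)$. For the converse I would use the disjointness of the focal sets to pin each block mass down exactly: taking $B=S_\theta$ gives $\pr{S_\theta}\ge\mu(S_\theta)=\pi(\theta)$, while taking $B=\Omega\setminus S_\theta$ gives $\pr{\Omega\setminus S_\theta}\ge\mu(\Omega\setminus S_\theta)=\sum_{\theta'\ne\theta}\pi(\theta')=1-\pi(\theta)$, i.e. $\pr{S_\theta}\le\pi(\theta)$. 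Hence $\pr{S_\theta}=\pi(\theta)$ for every $\theta$; summing, $\pro$ places all its mass on $\bigcup_\theta S_\theta$ with the correct block totals and is free within each $S_\theta$, which is exactly the description of a probability model. This establishes $\credalo=\{\pro\ge\mu\}$.

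I expect the delicate point to be verifying that the focal sets $S_\theta$ are genuinely disjoint and nonempty, which is where condition~(i)'s reference to ``the total choice that agrees with $\mathcal{I}$'' does real work: a stable model carrying positive probability must reproduce the probabilistic-fact valuation of its own total choice, so no interpretation is charged to two different choices and the two-inequality argument applies. Should one prefer a proof that does not lean on this disjointness---for example to cover variants in which several total choices contribute stable models to a common set---the converse inclusion instead requires the general Dempster--Shafer allocation theorem, namely that every $\pro$ dominating $\mu$ arises by splitting each focal mass $m(A)$ among the points of $A$; this is established by a max-flow/min-cut (Hall marriage) argument whose cut conditions are exactly the domination inequalities $\pr{B}\ge\mu(B)$, and it is this general allocation step that I would regard as the technical heart of the result.
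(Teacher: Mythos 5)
Your proof is correct, and the object you build is the same one the paper uses: your mass assignment $m(S_\theta)=\pi(\theta)$ with focal sets $S_\theta$ is just the Dempster--Shafer reformulation of the paper's construction, which takes the product measure over the space $\Theta$ of total choices and the multivalued mapping $\Gamma(\theta)=S_\theta$. The genuine difference lies in how much is actually proved. The paper's proof ends after citing the random-set fact that a probability space plus a multivalued mapping induces an infinitely monotone capacity on the range; it never verifies that the credal semantics coincides with the set of measures dominating that capacity. You prove this identification in both directions, and your converse inclusion --- using disjointness of the focal sets ($S_\theta\subseteq B_\theta$ with the blocks $B_\theta$ pairwise disjoint) to pin down $\pr{S_\theta}=\pi(\theta)$ from the two domination inequalities for $S_\theta$ and $\Omega\setminus S_\theta$ --- is exactly the step needed to license the paper's subsequent uses of the theorem: the corollary on closedness and convexity and the formulas in Expression~(\ref{equation:Bounds}) all treat the credal semantics as the \emph{full} set of dominating measures, not merely a subset. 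Because the focal sets are disjoint here, your elementary squeeze suffices, and the general allocation (max-flow/marriage) theorem you mention in closing is not required. One caveat you share with the paper: both arguments implicitly assume that every stable model of $\mathbf{P}\cup\mathbf{PF}^{\downarrow\theta}$ assigns to the probabilistic-fact atoms the truth values dictated by $\theta$, so that consistency yields $S_\theta\neq\emptyset$; this is automatic under Sato's disjointness condition but is glossed over in the paper's one-paragraph proof, so it is a shared simplification rather than a gap in yours.
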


Before we present a proof of this theorem, let us pause and define a few terms.
An infinitely monotone Choquet capacity is a set function $\lpro$ from an algebra
$\mathcal{A}$ on a set $\Omega$ to the real interval $[0,1]$ such that 
\cite[Definition 4.2]{Augustin2014}:
$\lpr{\Omega}=1-\lpr{\emptyset}=1$ and, for any $A_1,\dots,A_n$ in the algebra,
$\lpr{\cup_i A_i} \geq \sum_{J \subseteq \{1,\dots,n\}} (-1)^{|J|+1} \lpr{\cap_{j \in J} A_j}$.
Infinitely monotone Choquet capacities appear in several formalisms; for instance,
they are the {\em belief functions} of Dempster-Shafer theory \cite{Shafer76}, 
 summaries 
of {\em random sets}~\cite{Molchanov2005}, and {\em inner measures} \cite{Fagin91}. 

Given an infinitely monotone  Choquet capacity
 $\lpro$, we can construct a set of measures that {\em dominate}
$\lpro$; this is the set $\{\pro : \forall A \in \mathcal{A}: \pr{A} \geq \lpr{A}\}$. 
We abuse language and say that a set consisting of all measures that dominate
an infinitely monotone Choquet capacity is an {\em infinitely monotone credal set}. 
If a credal set $\credalo$ is infinitely monotone, then the {\em lower probability} $\lpro$, defined as 
$\lpr{A} = \inf_{\pro \in \credalo} \pr{A}$, is exactly the generating infinitely monotone Choquet capacity.
We also have the {\em upper probability} $\upr{A}=\sup_{\pro \in \credalo} \pr{A} = 1-\lpr{A^c}$.

\begin{proof}[Proof of Theorem \ref{theorem:Capacity}]
Consider a set $\Theta$ containing as states the posssible total choices of the {\sc plp}.
 Over this space we have a product measure that is
completely specified by the probabilities attached to probabilistic facts.
 Now consider a multi-valued mapping $\Gamma$  between $\Theta$
and the space $\Omega$ of all possible  models of our probabilistic logic program. 
For each element $\theta \in \Theta$, define $\Gamma(\theta)$ to be
the set of  stable models associated with the total choice $\theta$ of 
the probabilistic facts. Now we use the fact that a probability space and a multi-valued
mapping induce an infinite monotone Choquet capacity over the range of the mapping
(that is, over $\Omega$) \cite{Molchanov2005}. 
\end{proof}

Infinitely monotone credal sets have several useful properties; for one
thing they are closed and convex. Convexity here means that if
$\pro_1$ and $\pro_2$ are in the credal set, then $\alpha \pro_1 + (1-\alpha) \pro_2$
is also in the credal set for $\alpha \in [0,1]$. 
Thus, as illustrated by Example \ref{example:ProbabilisticBasic}.

\begin{Corollary}
Given a consistent {\sc plp},
its credal semantics is a closed and convex set of probability measures.
\end{Corollary}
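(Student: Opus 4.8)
The plan is to derive the corollary directly from Theorem~\ref{theorem:Capacity}, which has just established that the credal semantics of a consistent {\sc plp} is the set of all probability measures dominating an infinitely monotone Choquet capacity $\lpro$. Since the statement to be proved asserts only closedness and convexity, the work reduces to showing that \emph{any} infinitely monotone credal set (indeed, any credal set dominating an arbitrary lower probability) enjoys these two topological/geometric properties. So first I would recall the explicit form of the credal set furnished by the theorem, namely $\credalo = \{ \pro : \forall A \in \mathcal{A}, \; \pr{A} \geq \lpr{A} \}$, and then argue closedness and convexity from this defining system of inequalities.

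For convexity, I would take two measures $\pro_1, \pro_2 \in \credalo$ and the mixture $\pro = \alpha \pro_1 + (1-\alpha)\pro_2$ for $\alpha \in [0,1]$, and verify that $\pro$ is again a probability measure (a convex combination of probability measures is a probability measure, since the defining normalization and countable-additivity conditions are linear) and that it still dominates $\lpro$: for every $A \in \mathcal{A}$, $\pr{A} = \alpha\,\pro_1(A) + (1-\alpha)\,\pro_2(A) \geq \alpha\,\lpr{A} + (1-\alpha)\,\lpr{A} = \lpr{A}$, using $\pro_i(A) \geq \lpr{A}$ for each $i$. Hence $\pro \in \credalo$, which is exactly the convexity statement in the form already spelled out in the corollary's surrounding text. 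This step is a routine linear-inequality argument and I expect no obstacle here.

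For closedness, the cleanest route is to observe that $\credalo$ is an intersection of closed sets. Each constraint $\{ \pro : \pr{A} \geq \lpr{A} \}$ is a closed half-space (the evaluation map $\pro \mapsto \pr{A}$ is continuous in the relevant topology on measures — in the finite setting here, $\Omega$ is finite and a measure is just a point in a finite-dimensional simplex, so this map is literally a coordinate sum), and the further constraints defining a probability measure (nonnegativity and $\pr{\Omega}=1$) likewise cut out a closed set, namely the simplex itself. An arbitrary intersection of closed sets is closed, so $\credalo$ is closed. Since the Herbrand base is finite, $\Omega$ is finite and everything lives in $\mathbb{R}^{|\Omega|}$, so I need not worry about subtleties of weak topologies; the simplex is compact and $\credalo$, being a closed subset of it, is compact as well.

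The one point requiring a little care — and the closest thing to an obstacle — is making precise the ambient space and topology so that ``closed'' is meaningful, but the finiteness of the Herbrand base (noted earlier in the paper) trivializes this: probability measures over the finitely many interpretations form a compact convex subset of a finite-dimensional Euclidean space, and both properties follow from elementary facts about finite intersections of half-spaces with the simplex. I would therefore keep the proof short, emphasizing that closedness and convexity are generic consequences of a credal set being defined by linear lower-probability constraints, and need not invoke infinite monotonicity beyond what Theorem~\ref{theorem:Capacity} already provides.
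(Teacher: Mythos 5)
Your proposal is correct and follows essentially the same route as the paper: the paper likewise obtains the corollary immediately from Theorem~\ref{theorem:Capacity}, invoking the standard fact that infinitely monotone credal sets (sets of all measures dominating an infinitely monotone capacity) are closed and convex. The only difference is that the paper cites this fact from the imprecise-probability literature while you verify it directly — convexity from linearity of the dominance constraints, closedness from intersecting closed half-spaces with the finite-dimensional simplex — correctly noting along the way that infinite monotonicity plays no role beyond supplying the dominating-set representation of the credal semantics.
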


There are   several additional results concerning the representation of infinitely 
monotone capacities using their M\"obius transforms \cite{Augustin2014,Shafer76};
we refrain from mentioning every possible corollary we might produce by rehashing
those results. Instead, we focus on a few important results that can be used to
great effect in future applications.
First, as we have a finite Herbrand base, we can use the symbols in the proof
of Theorem \ref{theorem:Capacity} to write, for any set of models $\mathcal{M}$
\cite[Section 5.3.2]{Augustin2014}:
\begin{equation}
\label{equation:Bounds}
\textstyle
\lpr{\mathcal{M}} = \sum_{\theta \in \Theta : \Gamma(\theta) \subseteq \mathcal{M}} \pr{\theta}, 
\quad
\upr{\mathcal{M}} =  \sum_{\theta \in \Theta : \Gamma(\theta) \cap \mathcal{M} \neq \emptyset}  \pr{\theta}.
\end{equation}
Suppose we are interested in the
probability of a set $\mathbf{Q}$ of truth assignments for ground atoms in
the Herbrand base of the union of program $\mathbf{P}$ with all facts in $\mathbf{PF}$. 
A direct translation of Expression~(\ref{equation:Bounds})
leads to an algorithm that computes bounds on $\pr{\mathbf{Q}}$ as follows:
\begin{itemize}
\item[$\bullet$] {\bf Given} a {\sc plp} $\left<\mathbf{P},\mathbf{PF}\right>$
and $\mathbf{Q}$, initialize $a$ and $b$ with $0$. 
\item[$\bullet$] For each total choice $\theta$ of probabilistic facts, compute the set $S$ of 
all stable models of $\mathbf{P} \cup \mathbf{PF}^{\downarrow\theta}$, and: 
\begin{itemize}
\item if $\mathbf{Q}$ is $\mathsf{true}$ in every stable model in $S$, then 
$a \leftarrow a + \pr{\theta}$;  
\item if $\mathbf{Q}$ is $\mathsf{true}$ in some stable model of $S$, then 
$b \leftarrow b + \pr{\theta}$. 
\end{itemize}
\item[$\bullet$] {\bf Return} $[a, b]$ as the interval $[\lpr{\mathbf{Q}},\upr{\mathbf{Q}}]$.
\end{itemize}
Note that to find whether $\mathbf{Q}$ is $\mathsf{true}$ in every stable model of a program,
we must run {\em cautious inference}, and to find whether $\mathbf{Q}$ is $\mathsf{true}$
in some stable model of a program, we must run {\em brave inference}.
The complexity of these logical inferences have been studied in depth in the literature \cite{Eiter2007}.

For infinitely monotone credal sets we can find easy expressions for
  lower and upper {\em conditional} probabilities (that is, the infimum and
supremum of conditional probabilities). Indeed, if $A$ and $B$ are events, then
the lower  probability of $A$ given $B$ is
(where the superscript $c$ denotes complement)~\cite{Augustin2014}:
\begin{equation}
\label{equation:BayesRuleA}
\lpr{A|B} = \frac{\lpr{A \cap B}}{\lpr{A \cap B} + \upr{A^c \cap B}}
\end{equation}
when $\lpr{A \cap B}+\upr{A^c \cap B}>0$; we then have that
$\lpr{A|B}=1$ when $\lpr{A \cap B}+\upr{A^c \cap B}=0$ and $\upr{A \cap B}>0$;
finally, $\lpr{A|B}$ is undefined when $\upr{A \cap B} = \upr{A^c \cap B}=0$ (as this
condition is equivalent to $\upr{B}=0$). 
Similarly, the upper probability of $A$ given $B$ is:
\begin{equation}
\label{equation:BayesRuleB}
\upr{A|B} = \frac{\upr{A \cap B}}{\upr{A \cap B} + \lpr{A^c \cap B}}
\end{equation}
when $\upr{A \cap B}+\lpr{A^c \cap B}>0$; and we have that
$\upr{A|B} = 0$ when $\upr{A \cap B}+\lpr{A^c \cap B}=0$ and $\upr{A^c \cap B}> 0$;
finally, $\upr{A|B}$ is undefined when $\upr{A \cap B} = \upr{A^c \cap B}=0$.
We  also note that the computation of lower and upper {\em expected values}
with respect to infinitely monotone Choquet capacities 
admits relatively simple expressions~\cite{Wasserman92Bounds}.
For instance, the {\em lower expectation} $\lex{f} = \inf_{\pro\in\credalo}\ex{\pro}{f}$,
where $f$ is a function over the truth assignments, and $\ex{\pro}{f}$ is the expectation
of $f$ with respect to $\pro$, is
$\lex{f} = \sum_{\theta\in\Theta} \max_{\omega\in\Gamma(\theta)}f(\omega)$
\cite{Wasserman92Bounds}. And there are expressions even for lower and upper
{\em conditional expectations} that mirror Expression (\ref{equation:Bounds}). 

To translate these expresions into actual computations,
 suppose we have a  sets $(\mathbf{Q}$ and  $\mathbf{E}$ of truth assignments for ground atoms in
the Herbrand base of the union of program $\mathbf{P}$ with all facts in $\mathbf{PF}$.
To obtain bounds on $\pr{\mathbf{Q}|\mathbf{E}}$, we can combine the previous algorithm
with Expressions  (\ref{equation:BayesRuleA}) and (\ref{equation:BayesRuleB}), to obtain:
\begin{itemize}
\item[$\bullet$] {\bf Given} a {\sc plp} $\left<\mathbf{P},\mathbf{PF}\right>$
and $\mathbf{Q}$, initialize $a$, $b$, $c$, and $d$ with $0$. 
\item[$\bullet$] For each total choice $\theta$ of probabilistic facts, compute the set $S$ of 
all stable models of $\mathbf{P} \cup \mathbf{PF}^{\downarrow\theta}$, and:
\begin{itemize}
\item if $\mathbf{Q}\cup\mathbf{E}$ is $\mathsf{true}$ in every stable model in $S$, then 
$a \leftarrow a + \pr{\theta}$; 
\item if $\mathbf{Q}\cup\mathbf{E}$ is $\mathsf{true}$ in some stable model of $S$, then 
$b \leftarrow b + \pr{\theta}$; 
\item if $\mathbf{Q}$ if $\mathsf{false}$ and $\mathbf{E}$ is $\mathsf{true}$ in every 
stable model of $S$, then 
$c \leftarrow c + \pr{\theta}$; 
\item if $\mathbf{Q}$ if $\mathsf{false}$ and $\mathbf{E}$ is $\mathsf{true}$ in some 
stable model of $S$, then 
$d \leftarrow d + \pr{\theta}$. 
\end{itemize}
\item[$\bullet$] {\bf Return} the interval 
$[\lpr{\mathbf{Q}|\mathbf{E}},\upr{\mathbf{Q},\mathbf{E}}]$ as follows, in case
$b+d>0$ (otherwise, report failure and stop):
\begin{itemize}
\item $[0,0]$ if $b+c=0$ and $d>0$;
\item $[1,1]$ if $a+d=0$ and $b>0$;
\item $[a/(a+d),b/(b+c)]$ otherwise.
\end{itemize}
\end{itemize}

In fact the algorithm above has already been derived by  \citeA{Cali2009}, 
using clever optimization techniques 
(note that Cali et al.\ use a different strategy to handle the case where $\upr{\mathbf{E}}=0$). 
The advantage of our approach is that the algorithm is a transparent consequence
of known facts about capacities; other than that, Cali et al.\ have already presented
the algorithm so we do not need to dwell on it. Rather, we later return to this algorithm
with a focus on the complexity of computing of the lower probability $\lpr{\mathbf{Q}|\mathbf{E}}$. 
Algorithms that reproduce some properties of infinitely monotone Choquet capacities
are also presented by \citeA{Michels2015} in their work on constraint
logic programming. 

 \section{The complexity of inferences: acyclic and stratified probabilistic logic programs}
\label{section:ComplexityAcyclicStratified}

In this section we focus on the computation of inferences for acyclic and stratified {\sc plp}s; 
in these cases both the credal and the well-founded semantics agree. We focus on 
the following decision problem: 
\begin{description}
\item[Input:] A {\sc plp} $\left<\mathbf{P},\mathbf{PF}\right>$
whose probabilities are rational numbers,
a pair  $(\mathbf{Q}, \mathbf{E})$, called the {\em query}, 
where both $\mathbf{Q}$ and $\mathbf{E}$ are sets of truth assignments to
atoms in the Herbrand base of the union of program $\mathbf{P}$ and all facts in $\mathbf{PF}$, 
and a rational $\gamma \in [0,1]$. 
\item[Output:] Whether or not $\pr{\mathbf{Q}|\mathbf{E}}>\gamma$;
by convention, output is NO  (that is, input is rejected) if $\pr{\mathbf{E}}=0$. 
\end{description}
We refer to this complexity as the {\em inferential complexity} of {\sc plp}s. 
One may also be interested in the complexity of inferences
when the {\sc plp} is fixed, and the only input is the query $(\mathbf{Q},\mathbf{E})$.
This is the {\em query complexity} of the program; to define it, consider:
\begin{description}
  \item[Fixed:] A {\sc plp} $\left<\mathbf{P},\mathbf{PF}\right>$, whose 
   probabilities are rational numbers, that employs a vocabulary $\mathbf{R}$ of predicates.
  \item[Input:] A pair  $(\mathbf{Q}, \mathbf{E})$, called the {\em query}, 
where both $\mathbf{Q}$ and $\mathbf{E}$ are sets of truth assignments to
atoms of predicates in $\mathbf{R}$, 
and a rational $\gamma \in [0,1]$. 
  \item[Output:] Whether or not $\pr{\mathbf{Q}|\mathbf{E}}>\gamma$;
by convention, output is NO if $\pr{\mathbf{E}}=0$. 
\end{description}
Say that the query complexity of a class $\mathcal{P}$ of {\sc plp}s is in a complexity
class $\mathsf{C}$ if the complexity of this decision problem is in $\mathsf{C}$
for every {\sc plp} in $\mathcal{P}$. And say that the query complexity of
$\mathcal{P}$ is $\mathsf{C}$-hard if each decision problem in $\mathsf{C}$
can be reduced, with many-one reductions, to a decision problem for at least 
one {\sc plp} in $\mathcal{P}$. And say that the query complexity of
$\mathcal{P}$ is $\mathsf{C}$-complete
if $\mathcal{P}$ is both in $\mathsf{C}$ and $\mathsf{C}$-hard. 

In practice, one may face situations where a {\sc plp} may be small compared to 
the query, or where a single {\sc plp} is queried many times; then query complexity 
is the  concept of interest. 

The definition of  query complexity is clearly related to the concept of {\em data complexity} 
found in database theory \cite{Abiteboul95}; indeed we have used ``data complexity''  in previous
related work \cite{Cozman2015AAAI}.  Here we prefer to use ``query'' instead of ``data'' 
because usually data complexity fixes the rules and varies the number of facts; in this paper
 we keep both rules and facts fixed.  In fact, we have already mentioned the highly 
relevant work by  \citeA{Ceylan2016}, where they study the complexity of
various types of probabilistic logic programs; in that work they use {\em data complexity}
to refer to the complexity of computing probabilistic for fixed queries and fixed programs,
as the stock of facts and probabilistic facts varies. Note also that Ceylan et al.\ consider
a much more sophisticated language for queries that we do; for them, a query can be any union 
of Boolean conjunctive query as usually employed in databases \cite{Date2005}. 
The distinction between ``query'' and ``data'' thus seems significant in the context of
probabilistic logic programming. 


We must further comment on a few parallel results by \citeA{Ceylan2016}.
They analyze the complexity of {\sc plp}s under two semantics; one of them is in
line with Sato's distribution semantics, and another one is geared towards inconsistent
programs; neither is equivalent to the credal or the well-founded semantics.
Moreover, they focus on queries that are Boolean formulas, they do not allow for
conditioning evidence, and they use a somewhat different version of probabilistic
facts called contexts (that can be reproduced with our probabilistic facts). 
Despite these differences, in dealing with their first semantics they prove statements
that are related to results in Section \ref{subsection:Acyclic}.  More precisely: by 
translating the various languages and arguments appropriately, the points
made by our Theorems \ref{theorem:PropositionalAcyclic} and \ref{theorem:AcyclicBounded}
can be obtained from their results on full acyclic programs; also our Theorem \ref{theorem:QueryAcyclic} 
is comparable to their corresponding result, even though our ``query'' complexity is not their ``data''
complexity.\footnote{We note that our results on acyclic programs appeared \cite{Cozman2016PGM}
almost simultaneously to the publication by  \citeA{Ceylan2016}, and were produced independently.}
We decided to include our proof  of Theorem \ref{theorem:AcyclicBounded} in full
here because we need the techniques
in later proofs, and because we find that our techniques illuminate the matter adequately.

\subsection{Acyclic probabilistic logic programs}
\label{subsection:Acyclic}

We start with acyclic {\sc plp}s.
In this case the credal and the well-founded semantics define a single distribution, 
given by a Bayesian network whose structure is the program's grounded dependency 
graph, and whose parameters are
obtained from the program's Clark completion \cite{Poole93AI,Poole2008}.

\begin{Example}\label{example:Network}
Take a simplified version of the {\sc plp} in Example \ref{example:Alarm}, without predicates $\mathsf{calls}$
and $\mathsf{neighbor}$:
\[
\begin{array}{l}
0.7::\mathsf{burglary}. \qquad 0.2::\mathsf{earthquake}. \\
\mathsf{alarm} \colonminus \mathsf{burglary}, \mathsf{earthquake}, \mathsf{a1}.  \\
\mathsf{alarm} \colonminus \mathsf{burglary}, \mathbf{not}\ \mathsf{earthquake}, \mathsf{a2}. \\
\mathsf{alarm} \colonminus  \mathbf{not}\ \mathsf{burglary}, \mathsf{earthquake}, \mathsf{a3}.  \\
  0.9::\mathsf{a1}. \quad  0.8::\mathsf{a2}. \quad 0.1::\mathsf{a3}. 
\end{array}
\]
We can understand this {\sc plp} as the specification of the Bayesian network
in Figure \ref{figure:Alarm}. Note that the structure of the
network is just the grounded dependency graph, and the logical sentence comes
directly from the Clark completion. 
$\hfill \Box$
\end{Example}

\begin{figure}
\begin{center}
\begin{tikzpicture}[scale=1.3]
\node[draw,rectangle,rounded corners] (b) at (1,0.6) {$\mathsf{burglary}$};
\node[draw,rectangle,rounded corners] (e) at (1,-0.1) {$\mathsf{earthquake}$};
\node[draw,rectangle,rounded corners] (a) at (3.5,0.6) {$\mathsf{alarm}$};
\node[draw,rectangle,rounded corners] (a1) at (2.5,-0.1) {$\mathsf{a1}$};
\node[draw,rectangle,rounded corners] (a2) at (3.5,-0.1) {$\mathsf{a2}$};
\node[draw,rectangle,rounded corners] (a3) at (4.5,-0.1) {$\mathsf{a3}$};

\node[anchor=east] at (0,0.6) {\small $\pr{\mathsf{burglary}=\mathsf{true}}=0.7$};
\node[anchor=east] at (0,-0.1) {\small $\pr{\mathsf{earthquake}=\mathsf{true}}=0.2$};
\node[anchor=east] at (0,-0.7) {\small $\pr{\mathsf{a1}=\mathsf{true}}=0.9$};
\node[anchor=east] at (0,-1.1) {\small $\pr{\mathsf{a2}=\mathsf{true}}=0.8$};
\node[anchor=east] at (0,-1.5) {\small $\pr{\mathsf{a3}=\mathsf{true}}=0.1$};

\node at (3.5,-1.1) {\small 
$\begin{array}{ccl}
\mathsf{alarm} &  \Leftrightarrow & 
(\mathsf{burglary}\wedge\mathsf{earthquake}\wedge\mathsf{a}) \vee \\
& & (\mathsf{burglary}\wedge\neg\mathsf{earthquake}\wedge\mathsf{b}) \vee \\
& & (\neg\mathsf{burglary}\wedge\mathsf{earthquake}\wedge\mathsf{c})
\end{array}$};
 
\draw[->,>=latex] (b)--(a);
\draw[->,>=latex] (e)--(a);  
\draw[->,>=latex] (a1)--(a);
\draw[->,>=latex] (a2)--(a);
\draw[->,>=latex] (a3)--(a);
\end{tikzpicture}
\vspace*{-4ex}
\end{center}
\caption{Bayesian network extracted from the propositional
portion of Example \ref{example:Alarm}.}
\label{figure:Alarm}
\end{figure}
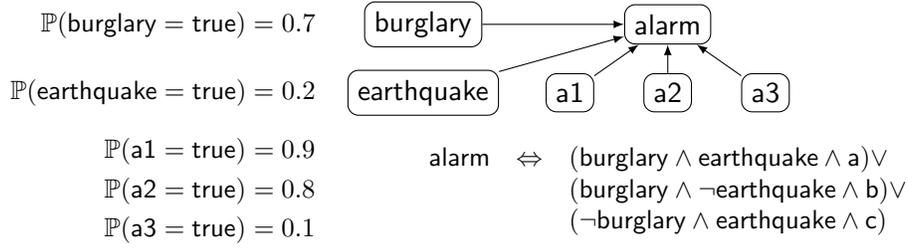

Conversely, any propositional Bayesian network can be specified
by an acyclic propositional {\sc plp}  \cite{Poole93AI,Poole2008}.
The argument is simple, and we show it by turning Example \ref{example:Network}
upside down:

\begin{Example}\label{example:Conversion}
Suppose we have the Bayesian network in Figure \ref{figure:AlarmShort}.
This Bayesian network is equivalent to the Bayesian network in
Figure \ref{figure:Alarm} (that is: the same distribution is defined
over $\mathsf{alarm}$, $\mathsf{burglary}$, $\mathsf{earthquake}$).
And the latter network is specified by an acyclic {\sc plp}. 
$\hfill \Box$
\end{Example}

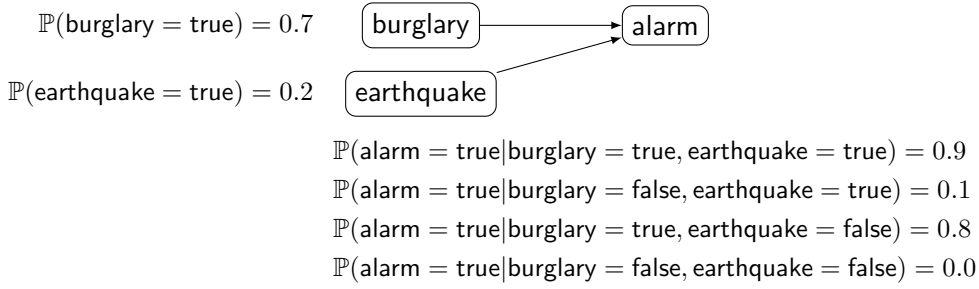
\begin{figure}
\begin{center}
\begin{tikzpicture}[scale=1.3]
\node[draw,rectangle,rounded corners] (b) at (1,0.6) {$\mathsf{burglary}$};
\node[draw,rectangle,rounded corners] (e) at (1,-0.1) {$\mathsf{earthquake}$};
\node[draw,rectangle,rounded corners] (a) at (3.5,0.6) {$\mathsf{alarm}$}; 

\node[anchor=east] at (0,0.6) {\small $\pr{\mathsf{burglary}=\mathsf{true}}=0.7$};
\node[anchor=east] at (0,-0.1) {\small $\pr{\mathsf{earthquake}=\mathsf{true}}=0.2$}; 
\node[anchor=west] at (0,-0.7) {\small $\pr{\mathsf{alarm}=\mathsf{true}|\mathsf{burglary}=\mathsf{true},\mathsf{earthquake}=\mathsf{true}}=0.9$};
\node[anchor=west] at (0,-1.5) {\small $\pr{\mathsf{alarm}=\mathsf{true}|\mathsf{burglary}=\mathsf{true}, \mathsf{earthquake}=\mathsf{false}}=0.8$};
\node[anchor=west] at (0,-1.1) {\small $\pr{\mathsf{alarm}=\mathsf{true}|\mathsf{burglary}=\mathsf{false},\mathsf{earthquake}=\mathsf{true}}=0.1$};
\node[anchor=west] at (0,-1.9) {\small $\pr{\mathsf{alarm}=\mathsf{true}|\mathsf{burglary}=\mathsf{false}, \mathsf{earthquake}=\mathsf{false}}=0.0$};

\draw[->,>=latex] (b)--(a);
\draw[->,>=latex] (e)--(a);   
\end{tikzpicture}
\vspace*{-4ex}
\end{center}
\caption{Bayesian network equivalent to the Bayesian network in Figure \ref{figure:Alarm}.}
\label{figure:AlarmShort}
\end{figure}

By combining these arguments, we see that inference
in  acyclic propositional {\sc plp}s has the complexity of inference in Bayesian
networks \cite{Darwiche2009,Roth96}:
\begin{Theorem}\label{theorem:PropositionalAcyclic}
The inferential complexity of inference in acyclic propositional {\sc plp}s  is $\mathsf{PP}$-complete.
\end{Theorem}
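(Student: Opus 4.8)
The plan is to exploit the two-way correspondence between acyclic propositional {\sc plp}s and propositional Bayesian networks already sketched in Examples~\ref{example:Network} and~\ref{example:Conversion}, and then to invoke the known $\mathsf{PP}$-completeness of inference in Bayesian networks \cite{Darwiche2009,Roth96}. Concretely, for membership I would first argue that, given an acyclic propositional {\sc plp} $\left<\mathbf{P},\mathbf{PF}\right>$, one can build in polynomial time an equivalent Bayesian network: its graph is the grounded dependency graph (acyclic by assumption), each probabilistic fact becomes a root node carrying its rational probability $\alpha$, and each remaining atom $A_0$ becomes a node whose conditional distribution is the \emph{deterministic} function given by the corresponding line $A_0 \Leftrightarrow B_1 \vee \dots \vee B_k$ of Clark's completion. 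The key point is that, although a node may have many parents, the local probability of any single parent configuration is computable in polynomial time (evaluate the propositional formula $B_1 \vee \dots \vee B_k$), so the full joint probability of any complete truth assignment is a product of polynomially many rationals and is itself computable in polynomial time.

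Membership in $\mathsf{PP}$ then follows by the standard counting argument. Since the query asks whether $\pr{\mathbf{Q}|\mathbf{E}}>\gamma$, I would rewrite this (using the stated convention that the input is rejected when $\pr{\mathbf{E}}=0$) as the inequality $\pr{\mathbf{Q},\mathbf{E}} - \gamma\,\pr{\mathbf{E}} > 0$, so that only joint probabilities of complete assignments are needed. Both $\pr{\mathbf{Q},\mathbf{E}}$ and $\pr{\mathbf{E}}$ are sums, over the exponentially many total choices $\theta$, of the weights $\pr{\theta}$ of those $\theta$ whose unique model satisfies the relevant assignments; each such model is obtained in polynomial time by evaluating the acyclic program bottom up. Clearing denominators to integers and designing a nondeterministic polynomial-time machine whose accepting computations tally the left-hand side against the right-hand side yields a machine that accepts on more than half of its computations exactly when the inequality holds, placing the problem in $\mathsf{PP}$.

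For hardness I would run the correspondence in the opposite direction, as in Example~\ref{example:Conversion}: any propositional Bayesian network can be encoded, in polynomial time, as an acyclic propositional {\sc plp}, with a network query translating directly into a query $(\mathbf{Q},\mathbf{E})$ with threshold $\gamma$ on the {\sc plp}. Since deciding $\pr{\mathbf{Q}}>\gamma$ in a propositional Bayesian network is $\mathsf{PP}$-hard \cite{Roth96}, this many-one reduction transfers $\mathsf{PP}$-hardness to acyclic propositional {\sc plp}s, and together with membership we obtain $\mathsf{PP}$-completeness.

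The main obstacle I anticipate is the verification that the conditional, thresholded query stays within $\mathsf{PP}$: one must check that $\mathsf{PP}$ is robust enough to accommodate the comparison against $\gamma\,\pr{\mathbf{E}}$ rather than a fixed $1/2$, and to handle arbitrary rational weights. This is resolved by the usual device of scaling all rationals by a common, polynomially sized denominator and padding the nondeterministic computation tree so that the ``majority accepts'' criterion aligns with the integer inequality; the determinism of the non-fact atoms, together with acyclicity guaranteeing a unique and polynomially computable model, is what makes each computation path efficient.
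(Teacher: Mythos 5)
Your proposal is correct and follows essentially the same route as the paper: the paper justifies this theorem precisely by the two-way correspondence with propositional Bayesian networks (Examples~\ref{example:Network} and~\ref{example:Conversion}, via Clark completion and the grounded dependency graph) together with the known $\mathsf{PP}$-completeness of Bayesian network inference \cite{Darwiche2009,Roth96}. Your additional machine-level details (handling the threshold $\gamma$, rational weights, and the conditional query by deciding $\pr{\mathbf{Q},\mathbf{E}} - \gamma\,\pr{\mathbf{E}} > 0$) are sound elaborations of what the paper leaves implicit, and are consistent with the techniques it uses in the proof of Theorem~\ref{theorem:AcyclicBounded}.
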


One might suspect that a bound on predicate arity would yield the same 
$\mathsf{PP}$-completeness, because the grounding of a {\sc plp} would then produce
only polynomially-many ground atoms.
Surprisingly, this is {\em not} the case here, as shown by the next theorem.

\begin{Theorem}
\label{theorem:AcyclicBounded}
The inferential complexity of inference in acyclic {\sc plp}s with bounded predicate
arity is $\mathsf{PP}^\mathsf{NP}$-complete.
\end{Theorem}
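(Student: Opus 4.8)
The plan is to prove completeness by separately establishing membership in $\mathsf{PP}^\mathsf{NP}$ and $\mathsf{PP}^\mathsf{NP}$-hardness. The pivotal observation driving both halves is that bounding predicate arity by a constant $k$ bounds the number of \emph{ground atoms} by a polynomial (at most $r\cdot n^k$ for $r$ predicates and $n$ constants), and likewise makes the number of ground probabilistic facts polynomial; yet it does \emph{not} bound the number of \emph{ground rules}, since a rule may contain arbitrarily many body variables that do not occur in its head, so its grounding can be exponential. This mismatch between polynomially many atoms and exponentially many rule instances is exactly what lifts the complexity from $\mathsf{PP}$ up to $\mathsf{PP}^\mathsf{NP}$. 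Throughout I use that an acyclic program has a unique model that is simultaneously its stable and well-founded model, so the credal and well-founded semantics coincide and a single distribution is defined.

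First I would prove membership. Fix a total choice $\theta$, which is a bit string of polynomial length. Because the grounded dependency graph is acyclic with polynomially many nodes, its topological depth is polynomial, and I can decide the truth of any ground atom $A$ in the unique model of $\mathbf{P}\cup\mathbf{PF}^{\downarrow\theta}$ by processing the polynomially many ground atoms in topological order while maintaining their truth values: $A$ is $\mathsf{true}$ iff some ground rule with head $A$ has a satisfied body, and since a body instance is obtained by choosing constants for the body-only variables and then checking the \emph{already determined} truth values of the resulting (polynomially many) body atoms, each such decision is a single query of the form $\exists\bar Y:(\cdots)$ answerable by an $\mathsf{NP}$ oracle. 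Hence the truth values of all atoms in $\mathbf{Q}$ and $\mathbf{E}$ are computable in $\mathsf{P}^\mathsf{NP}$ from $\theta$, deterministically apart from the oracle calls. A $\mathsf{PP}$ machine then selects $\theta$ with branch multiplicities encoding $\pr{\theta}$ (clearing the rational denominators, whose product has polynomial bit-length) and, using the \emph{same} $\mathsf{NP}$ oracle inside each branch, compares $\pr{\mathbf{Q}\cap\mathbf{E}}$ against $\gamma\,\pr{\mathbf{E}}$ by the standard threshold-counting construction, rejecting when $\pr{\mathbf{E}}=0$; this places the problem in $\mathsf{PP}^\mathsf{NP}$.

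For hardness I would reduce from the $\mathsf{PP}^{\Sigma^\mathsf{P}_1}=\mathsf{PP}^\mathsf{NP}$-complete problem $\#_1\mathsf{3CNF}(>)$, i.e.\ deciding whether the number of assignments to $\mathbf{X}_0$ satisfying $\exists\mathbf{X}_1:\phi(\mathbf{X}_0,\mathbf{X}_1)$ exceeds $M$. Each variable of $\mathbf{X}_0$ becomes a propositional probabilistic fact with probability $1/2$, so counting $\mathbf{X}_0$-assignments becomes measuring probability mass. The existential block $\exists\mathbf{X}_1$ I would obtain ``for free'' from grounding: introduce two constants $t,f$ and one logical variable $Y_i$ per variable of $\mathbf{X}_1$, and build a single rule $\mathsf{sat}\colonminus \mathsf{c}_1(\ldots),\ldots,\mathsf{c}_r(\ldots)\bigperiod$ whose body conjoins one bounded-arity clause atom per clause of $\phi$, passing to each $\mathsf{c}_j$ exactly the at most three $Y_i$ for the $\mathbf{X}_1$-variables occurring in clause $j$; reusing the same $Y_i$ wherever variable $i$ occurs forces a globally consistent assignment. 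Grounding over $Y_i\in\{t,f\}$ enumerates all $\mathbf{X}_1$-assignments, so $\mathsf{sat}$ is $\mathsf{true}$ exactly when $\exists\mathbf{X}_1:\phi$ holds. Each $\mathsf{c}_j$ is defined by one rule per literal of its clause, capturing the clausal disjunction, and checks the truth constant of a passed $Y_i$ for an $\mathbf{X}_1$-literal, and the corresponding probabilistic fact otherwise (using $\mathbf{not}$ for a negative $\mathbf{X}_0$-literal). The resulting program is acyclic and of arity at most three, and $\pr{\mathsf{sat}=\mathsf{true}}=(\#\{\mathbf{X}_0:\exists\mathbf{X}_1\,\phi\})/2^{|\mathbf{X}_0|}$, so the single unconditional query $\pr{\mathsf{sat}=\mathsf{true}}>M/2^{|\mathbf{X}_0|}$ decides the instance.

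I expect the main obstacle to be this hardness gadget: arranging for the grounding of one rule with many, but individually bounded-arity, body atoms to simulate the existential quantifier over $\mathbf{X}_1$ while keeping the program acyclic and enforcing a consistent assignment across clauses is precisely the device that separates the bounded-arity case from plain $\mathsf{PP}$. A secondary subtlety lies in the membership argument's reliance on the fact that bounded arity caps the number of ground atoms, hence the topological depth, even though the number of ground rules is exponential; this is what allows a $\mathsf{P}^\mathsf{NP}$ subroutine, rather than brute-force enumeration, to evaluate each model. I would finally remark that negation is used only for negative $\mathbf{X}_0$-literals and can be eliminated by the standard complementary-fact duplication, yielding the stronger negation-free hardness mentioned earlier and confirming that the $\mathsf{NP}$ oracle is genuinely needed even for definite acyclic programs.
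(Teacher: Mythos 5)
Your proposal is correct and follows essentially the same route as the paper's proof: membership via a counting Turing machine that branches on each total choice with multiplicities encoding the rational probabilities and calls an $\mathsf{NP}$ oracle to do the logical reasoning (which, for bounded-arity acyclic programs, is $\mathsf{P}^\mathsf{NP}$-decidable), and hardness via a reduction from $\#_1\mathsf{3CNF}(>)$ in which probabilistic facts with probability $1/2$ simulate the counting block $\mathbf{X}_0$ while a single conjunctive rule over bounded-arity clause predicates sharing logical variables implements $\exists\mathbf{X}_1$ through grounding. The only differences are cosmetic: you handle the conditional query $\pr{\mathbf{Q}|\mathbf{E}}>\gamma$ by threshold counting inside the $\mathsf{PP}$ machine (the device the paper itself uses later in its credal-semantics proof), whereas the paper reduces it to an unconditional query via auxiliary rules and suitably chosen probabilistic facts; and you encode each clause with one rule per literal, whereas the paper enumerates the (at most eight) assignments to each clause's $\mathbf{Y}$-variables.
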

\begin{proof}
To prove membership, start with the ``unconditional'' decision $\pr{\mathbf{Q}}>\gamma$.
This decision problem is in $\mathsf{PP}^\mathsf{NP}$; this 
follows from the fact that logical reasoning with acyclic normal 
logic programs is $\Delta_2^P$-complete \cite[Table 5]{Eiter2007} (that is, 
$\mathsf{P}^\mathsf{NP}$-complete).  Consider a nondeterministic Turing
machine that goes through all probabilistic facts. For each probabilistic fact
$\alpha::A\bigperiod$, where $\alpha$ is a rational such that $\alpha=\mu/\nu$
for some (smallest) integers $\mu$ and $\nu$, the machine nondeterministically
decides whether to keep $A$ as a fact or discard it; then the machine creates
$\mu$ computation paths if $A$ is to be kept (all these computational paths reach
the same point), and $\nu-\mu$ computation paths if $A$ is to be discarded (again,
these computation paths reach the same point).
Note that these computation paths can be created with
polynomial effort even if $\mu$ and $\nu$ are specified in binary notation. 
Then, for the particular selection of probabilistic facts that are not discarded,
the machine processes the resulting  acyclic normal logic program:
logical reasoning can determine whether any set of truth 
assignments, for atoms in the Herbrand base, hold or not. 
The input is in the language if more than half the computation paths of this
machine are accepting paths. 
This decides whether $\pr{\mathbf{Q}}>1/2$.

Now consider membership of the decision $\pr{\mathbf{Q}|\mathbf{E}}>\gamma$;
we process this decision as follows.\footnote{We are indebted to Cassio Polpo de Campos 
for suggesting this technique; the probabilities attached to probabilistic facts are as proposed
by Park and described by  \citeA[Theorem 11.5]{Darwiche2009}.} 
Suppose the query consists of $\mathbf{Q}=\{Q_1,\dots,Q_n\}$ and $\mathbf{E}=\{E_1,\dots,E_m\}$,
where each $Q_i$ and each $E_j$ is a literal.  The simpler case is $\gamma \geq 1/2$, 
so assume it to begin. Then, as the query is processed, introduce 
$\mathsf{aux1} \colonminus Q_1,\dots,Q_n\bigperiod$,
$\mathsf{aux2} \colonminus E_1,\dots,E_m\bigperiod$,
$\mathsf{aux3} \colonminus \mathsf{aux1}, \mathsf{aux2},\mathsf{aux4}\bigperiod$,
$\mathsf{aux3} \colonminus \mathbf{not}\;\mathsf{aux2}, \mathsf{aux5}\bigperiod$,
where each literal $Q_i$ or $E_j$ is written as the corresponding subgoal, and
$(1/(2\gamma))::\mathsf{aux4}\bigperiod$, $0.5::\mathsf{aux5}\bigperiod$.
Thus
$\pr{\mathsf{aux3}=\mathsf{true}}>1/2 \Leftrightarrow 
(1/(2\gamma)) \pr{\mathbf{Q},\mathbf{E}} + (1/2) (1-\pr{\mathbf{E}}) > 1/2
\Leftrightarrow \pr{\mathbf{Q}|\mathbf{E}}>\gamma$ (that is, the decision
on $\pr{\mathsf{aux3}=1}>1/2$ yields the decision on $\pr{\mathbf{Q}|\mathbf{E}}>\gamma$).
Now if $\gamma<1/2$, then introduce
$\mathsf{aux3} \colonminus \mathbf{not}\;\mathsf{aux1}, \mathsf{aux2}, \mathsf{aux6}\bigperiod$,
remove $\mathsf{aux4}$,  and introduce 
   $(1-2\gamma)/(2-2\gamma)::\mathsf{aux6}\bigperiod$.
Thus
$\pr{\mathsf{aux3}=\mathsf{true}}>1/2 \Leftrightarrow 
\pr{\mathbf{Q},\mathbf{E}} + 
(1-2\gamma)/(2-2\gamma) (\pr{\mathbf{E}}-\pr{\mathbf{Q},\mathbf{E}})+
(1/2) (1-\pr{\mathbf{E}}) > 1/2
\Leftrightarrow \pr{\mathbf{Q}|\mathbf{E}}>\gamma$, as desired.
(This technique is used several times in latter proofs.)

Hardness is shown  by building a {\sc plp} that solves the problem $\#_1\mathsf{3CNF}(>)$
as defined in Section \ref{subsection:Complexity}; that is, one has a propositional
sentence $\phi$ in 3CNF with two sets of logical variables $\mathbf{X}$ and $\mathbf{Y}$,
and the goal is to decide whether the number of truth assignments for $\mathbf{X}$ that satisfy 
$\exists \mathbf{Y} : \phi(\mathbf{X},\mathbf{Y})$ is larger than a given integer $M$. 
We take $\phi$ to be a conjunction of clauses $c_1, \dots c_k$; each clause $c_j$ contains
an ordered triplet of propositional variables. 

For instance, we might have as input the integer $M=1$ and the formula
\begin{equation}
\label{equation:ProofExample}
\varphi(x_1,x_2,y_1) \equiv (\neg x_1 \vee x_2 \vee y_1) \wedge (x_1 \vee \neg x_2 \vee y_1) 
      \wedge (\neg y_1 \vee \neg y_1 \vee \neg y_1).
\end{equation}
In this case the input is accepted (the number of satisfying assignments is $2$).
Note that the last clause is equivalent to $\neg y_1$; we pad the clause so as to have three
literals in it. 

For each propositional variable $y_i$, we introduce a corresponding logical variable $Y_i$.
The ordered tuple of propositional variables in clause $c_j$ corresponds to a tuple
of propositional variables that is denoted by  $\mathbf{Y}_j$; these are the propositional
variables in $c_j$ that belong to $\mathbf{Y}$. In Expression 
(\ref{equation:ProofExample}), $\mathbf{Y}_1=\mathbf{Y}_2=\mathbf{Y}_3=[Y_1]$.

We use a few constants and predicates.
Two constants, $\mathsf{0}$ and $\mathsf{1}$, stand for 
$\mathsf{false}$ and $\mathsf{true}$ respectively. 
Also, we use $0$-arity predicates $\mathsf{x_i}$, each one standing
for a propositional variable $x_i$ in $\mathbf{X}$.  
And we use predicates $\mathsf{c_1},\ldots,\mathsf{c_k}$, 
each one standing for a clause $c_j$. The arity of each $\mathsf{c_j}$ is the 
length of $\mathbf{Y}_j$, denoted by $d_j$. 

For each  $\mathsf{c_j}$, go over the $2^{d_j}$ possible   assignments of $\mathbf{Y}_j$.
That is, if $d_j=1$, then go over $\mathsf{c_j}(\mathsf{0})$ and $\mathsf{c_j}(\mathsf{1})$;
if $d_j=2$, then go over $\mathsf{c_j}(\mathsf{0},\mathsf{0})$, $\mathsf{c_j}(\mathsf{0},\mathsf{1})$,
$\mathsf{c_j}(\mathsf{1},\mathsf{0})$ and $\mathsf{c_j}(\mathsf{1},\mathsf{1})$. And if $d_j=3$, 
go over the $8$ assignments. Note that if $d_j=0$, there is only one ``empty'' assignment to visit.
Thus there are at most $8k$ assignments to consider. 

Suppose then that we have predicate $\mathsf{c_j}$, and we take the assignment 
$\mathbf{y}$ (which may be empty).  
If $c_j$ is $\mathsf{true}$ for $\mathbf{y}$, regardless of the possible assignments
for propositional variables $x_i$, then just introduce the fact
\[
\mathsf{c_j}(\mathbf{y})\bigperiod
\]
If instead $c_j$ is $\mathsf{false}$ for $\mathbf{y}$, regardless of the possible
assignments for propositional variables $x_i$, then just move to another assignment
(that is, there are no propositional variables $x_i$ in $c_j$, and the clause is $\mathsf{false}$
for $\mathbf{y}$; by leaving $\mathsf{c_j}(\mathbf{y})$, we guarantee that it is forced
to be $\mathsf{false}$ by the semantics). 
Otherwise, there are propositional variables in $\mathbf{X}$ that affect the truth
value of $c_j$ when $\mathbf{y}$ is fixed; there may be one, two or three such propositional
variables. Take the first one of them, denoted by $x_{j1}$, and introduce the rule
\[
\mathsf{c_j}(\mathbf{y}) \colonminus 
\left\{ \begin{array}{ll}
\mathsf{x_{j1}}\bigperiod & \mbox{ if the literal for } x_{j1} \mbox{ does not contain negation; or } \\
\mathbf{not}\;\mathsf{x_{j1}}\bigperiod & \mbox{ if the literal for } x_{j1} \mbox{ contains negation.} 
\end{array} \right.
\]
If there is  a second propositional variable $x_{j2}$ that affects the truth value of $c_j$ when
$\mathbf{y}$ is fixed, add a similar rule 
$\mathsf{c_j}(\mathbf{y}) \colonminus [\mathbf{not}]\;\mathsf{x_{j2}}$. 
And similarly if there is a third propositional variable $x_{j3}$ that affects the truth value of $c_j$. 
Note that these rules create a disjunction for $\mathsf{c_j}$, in effect encoding the clause $c_j$
for fixed $\mathbf{y}$. 

Finally, introduce the rule
\[
\mathsf{cnf} \colonminus \mathsf{c_1}(\mathbf{Y}_1), \mathsf{c_2}(\mathbf{Y}_2), 
     \dots, \mathsf{c_k}(\mathbf{Y}_k)\bigperiod
\]
and probabilistic facts (one per predicate $\mathsf{x_i}$)
\[
0.5::\mathsf{x_i}\bigperiod
\]

The Clark completion of the {\sc plp} just constructed encodes the 
$\#_1\mathsf{3CNF}(>)$ problem of interest, 
 thus proving $\mathsf{PP}^\mathsf{NP}$-hardness: to determine
whether $\exists \mathbf{Y}: \phi(\mathbf{X},\mathbf{Y})$ has more
than $M$ satisfying assignments, decide whether $\pr{\mathsf{cnf}=\mathsf{true}}>M/2^{n}$,
where $n$ is the number of propositional variables in $\mathbf{X}$. 

For instance, given the formula in Expression~(\ref{equation:ProofExample}),
generate the following  {\sc plp}: 
\[
\begin{array}{l}
\mathsf{c_1}(\mathsf{0}) \colonminus  \mathbf{not} \;\mathsf{x_1}\,\bigperiod 
\qquad
\mathsf{c_1}(\mathsf{0}) \colonminus  \mathsf{x_2}\,\bigperiod 
\qquad
\mathsf{c_1}(\mathsf{1})\bigperiod \\
\mathsf{c_2}(\mathsf{0}) \colonminus \mathsf{x_1}\,\bigperiod 
\qquad
\mathsf{c_2}(\mathsf{0}) \colonminus \mathbf{not} \;\mathsf{x_2}\,\bigperiod 
\qquad
\mathsf{c_2}(\mathsf{1})\bigperiod \\
\mathsf{c_3}(\mathsf{0})\bigperiod \\
\mathsf{cnf} \colonminus \mathsf{c_1}(Y_1), \mathsf{c_2}(Y_1), \mathsf{c_3}(Y_1)\bigperiod \\
0.5::\mathsf{x_1}\bigperiod \qquad 0.5::\mathsf{x_2}\bigperiod
\end{array}
\]
By determining whether $\pr{\mathsf{cnf}=\mathsf{true}}>M/2^2$, we decide whether
the number of truth assignments for $x_1$ and $x_2$ such that $\forall y_1 \varphi(x_1,x_2,y_1)$ 
holds is larger than $M$.  
\end{proof}

Intuitively, this results shows that,
 to produce an inference for a {\sc plp} with bounded predicate arity, one must
go through the truth assignments for polynomially many groundings, guessing one 
at a time (thus a counting nondeterministic Turing machine), and, for {\em each} 
assignment, it is then necessary to use an $\mathsf{NP}$-oracle to construct the 
probability values.
Theorem \ref{theorem:AcyclicBounded}
suggests that acyclic {\sc plp}s capture a larger set of probabilistic languages
than many probabilistic relational models that stay within $\mathsf{PP}$
\cite{CozmanSUM2015plates}.

The next step is to remove the bound on arity. We   obtain:
\begin{Theorem}\label{theorem:AcyclicEXP}
The inferential complexity of inference in acyclic {\sc plp}s
is $\mathsf{PEXP}$-complete.
\end{Theorem}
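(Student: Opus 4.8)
The plan is to establish membership in $\mathsf{PEXP}$ and $\mathsf{PEXP}$-hardness separately, reusing the machinery developed for Theorem \ref{theorem:AcyclicBounded} and lifting it by one exponential through unbounded predicate arity.

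\textbf{Membership.} First I would reduce the conditional decision $\pr{\mathbf{Q}|\mathbf{E}}>\gamma$ to an unconditional decision of the form $\pr{\mathsf{aux3}=\mathsf{true}}>1/2$, using exactly the auxiliary rules and the extra probabilistic facts $\mathsf{aux4},\mathsf{aux5},\mathsf{aux6}$ introduced in the proof of Theorem \ref{theorem:AcyclicBounded}; this gadget is insensitive to arity and adds only constant overhead. It then suffices to place the unconditional problem in $\mathsf{PEXP}$. Grounding an acyclic {\sc plp} produces an acyclic normal logic program whose size is at most exponential in the input, since with unbounded arity a single rule over the constants may have exponentially many groundings. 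I would build a nondeterministic exponential-time machine that sweeps over all grounded probabilistic facts and, for each fact $\alpha::A$ with $\alpha=\mu/\nu$, nondeterministically keeps or discards $A$ while spawning $\mu$ or $\nu-\mu$ identical continuation paths, so that path-counting reproduces the product measure over total choices. On each path the machine has fixed a total choice, hence an acyclic normal logic program; since logical inference for acyclic programs runs in time polynomial in the (now exponential) ground program, the whole computation is exponential. Accepting exactly when $\mathbf{Q}$ holds in the unique model, the input lies in the language iff more than half the paths accept, which is the defining condition of $\mathsf{PEXP}$.

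\textbf{Hardness.} For hardness I would reduce from an arbitrary language in $\mathsf{PEXP}$, given by a nondeterministic machine $M$ that runs in time $2^{p(n)}$ and accepts a string iff more than half of its (equal-length) computations accept. The crucial use of unbounded arity is twofold. First, a single probabilistic fact $0.5::\mathsf{coin}(T_1,\dots,T_{p(n)})$, grounded over the two constants $\mathsf{0},\mathsf{1}$, produces exponentially many independent fair coins, one per nondeterministic step of $M$; a total choice then selects a computation path with uniform probability $2^{-2^{p(n)}}$, so $\pr{\mathsf{accept}=\mathsf{true}}$ equals the fraction of accepting paths. Second, I would encode the deterministic verification that a fixed sequence of coin outcomes drives $M$ to acceptance as an acyclic normal logic program, using the standard computation-tableau method: a predicate such as $\mathsf{tape}(\mathbf{T},\mathbf{C},S)$ records the symbol $S$ at cell $\mathbf{C}$ and time $\mathbf{T}$, where $\mathbf{T}$ and $\mathbf{C}$ are $p(n)$-bit tuples of $\{\mathsf{0},\mathsf{1}\}$-arguments. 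A fixed, polynomially-sized set of rules with logical variables encodes the local transition relation, the head and state updates, and the selection of the transition according to $\mathsf{coin}$; successor and comparison on binary-indexed times and cells are defined by auxiliary acyclic rules performing bounded arithmetic. Acyclicity of the grounded dependency graph holds because every rule makes the configuration at time $\mathbf{T}$ depend only on strictly earlier times (and the arithmetic predicates are themselves acyclic), so no cycle arises. A final rule sets $\mathsf{accept}$ from the state at the last time step, and deciding $\pr{\mathsf{accept}=\mathsf{true}}>1/2$ decides membership in the chosen $\mathsf{PEXP}$ language. The whole construction is produced in time polynomial in $n$.

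\textbf{Main obstacle.} The delicate part is the tableau encoding: I must define increment and comparison on the binary-encoded time and cell indices, and the transition rules, with only polynomially many rules (independent of $2^{p(n)}$), while guaranteeing that the grounded dependency graph is acyclic and that each total choice yields a \emph{unique} stable/well-founded model agreeing with the intended computation. Reconciling acyclicity with correct arithmetic --- rather than inadvertently introducing cycles through the successor relation --- is where the care is needed; equivalently, one may package the same exponential blow-up more abstractly by invoking a succinctness-upgrade theorem, reducing instead from the succinct version of the $\mathsf{PP}$-complete problem MAJSAT and encoding its generating circuit.
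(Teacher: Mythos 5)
Your membership argument is essentially the paper's: ground the program into an exponentially large acyclic program, sweep the ground probabilistic facts with the $\mu$ versus $\nu-\mu$ path-multiplication gadget, and on each path run logical inference, which is polynomial in the (exponential) grounding; the paper phrases this through the exponentially large grounded Bayesian network obtained from the Clark completion, and it leaves the conditional-to-unconditional reduction implicit where you make it explicit, but there is no difference of substance. Your hardness argument, however, takes a genuinely different route. The paper does guess-and-check: it takes Gr\"adel's encoding of exponential-time Turing machines, makes every configuration predicate a $0.5$ probabilistic fact, enforces the tableau constraints through auxiliary atoms $\mathsf{aux_i}$ defined by rules $\mathsf{aux_i} \colonminus \phi_i$ whose truth is imposed as evidence $\mathbf{E}$, and decides the conditional query $\pr{\mathsf{aux_a}=\mathsf{true}|\mathbf{E}}>1/2$; acyclicity is then automatic, since the configuration atoms are roots of the dependency graph and nothing is derived recursively. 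You instead derive the computation: only the coins are random and the tableau is computed by rules. Your route buys hardness already for unconditional queries and a transparent bijection between total choices and computation paths (the paper must additionally modify the machine so that accepting/rejecting configurations repeat forever, to make interpretations count paths correctly). The price is that you must simulate an exponentially long computation acyclically with polynomially many rules, and that is exactly where your sketch has a gap.

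The gap is this: your stated reason for acyclicity --- that ``every rule makes the configuration at time $\mathbf{T}$ depend only on strictly earlier times'' --- is a semantic statement, but acyclicity of the grounded dependency graph is purely syntactic. If a transition rule is guarded, say $\mathsf{tape}_{\sigma'}(\mathbf{T}',\mathbf{C}) \colonminus \mathsf{succ}(\mathbf{T},\mathbf{T}'), \mathsf{tape}_{\sigma}(\mathbf{T},\mathbf{C}), \ldots$, then grounding substitutes \emph{all} pairs $(\mathbf{t},\mathbf{t}')$, not only successive ones, and each grounding contributes an edge from $\mathsf{tape}_{\sigma}(\mathbf{t},\mathbf{c})$ to $\mathsf{tape}_{\sigma'}(\mathbf{t}',\mathbf{c})$ whether or not the guard $\mathsf{succ}(\mathbf{t},\mathbf{t}')$ can ever be true; taking $\mathbf{t}=\mathbf{t}'$, or $\mathbf{t}<\mathbf{t}'$ together with $\mathbf{t}>\mathbf{t}'$, produces cycles, so the program you describe is not acyclic in the paper's sense. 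The repair is to hard-wire the increment into constant patterns instead of a guard predicate: for each $j$, write a rule schema whose head carries the time tuple $(X_1,\dots,X_{k-j-1},\mathsf{1},\mathsf{0},\dots,\mathsf{0})$ and whose body atoms carry the time tuple $(X_1,\dots,X_{k-j-1},\mathsf{0},\mathsf{1},\dots,\mathsf{1})$, i.e., binary increment over the suffix; then in every grounding the body time is numerically one less than the head time, every configuration-to-configuration edge strictly increases the time index, and the grounded graph is acyclic. This costs only $O(p(n))$ rules per transition, and the same device handles the head-position arithmetic. With that fix your reduction goes through; without it, the construction as written fails the acyclicity requirement that the theorem is about.
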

\begin{proof}
Membership follows from grounding the {\sc plp}.\footnote{A short proof 
of membership is obtained by applying the same concise argument used 
in the proof of Theorem \ref{theorem:Stratified}(c); here we present a longer 
but possibly more intuitive argument based on inference on Bayesian networks.}
If the {\sc plp} has $n$ constants, then a relation of arity $k$ produces
$n^k$ groundings. Each one of these exponentially many groudings
corresponds to a node of a (necessarily acyclic) Bayesian network.
To write down the conditional probabilities associated
with each node of the grounded Bayesian network, take the
Clark completion of the program, and ground the expresions. 
For each non-root node we have a first-order formula that can be
written as a possibly exponentially-long quantifier-free formula.
Now to determine whether $\pr{\mathbf{Q}}>1/2$, we can use
a probabilistic Turing machine that runs inference for the exponentially
large (grounded) Bayesian network (or, rather, there is an exponential-time
Turing machine that guesses a truth assignment for all grounded probabilistic
facs, and for each such truth assignment, computes the truth assignment
for any other atom by going through the possibly exponentially large
non-root node completions). 

To prove hardness, we encode an exponential-time nondeterministic
Turing machine $\mathbb{M}$ using logical formulas that are directly produced
by the Clark completion of an acyclic normal logic program $\mathbf{P}$.  Assume 
that $\mathbb{M}$ can solve some $\mathsf{PEXP}$-complete problem; that is, 
for a $\mathsf{PEXP}$-complete language $\mathcal{L}$, $\ell \in \mathcal{L}$ iff 
$\mathbb{M}$ halts within time $2^n$ with more than half of paths accepting $\ell$, 
where $n$ is some polynomial on the length of $\ell$.  We also add probabilistic facts
$\mathbf{PF}$ to $\mathbf{P}$, 
so that an inference in the resulting {\sc plp} decides whether the number of 
acceptings paths of $\mathbb{M}$ is larger than half of the total number of 
computation paths (thus deciding the same language as $\mathbb{M}$ does). 
So, consider the encoding of Turing machines that is described 
by  \citeA[Theorem 3.2.4]{Gradel2007}, summarized as follows. Suppose 
$\mathbb{M}$ has states $q$, with an initial state $q_0$, an accepting state 
$q_a$, and a rejecting state $q_r$; suppose also that $\mathbb{M}$ uses an 
alphabet with  symbols $\sigma$  (in the alphabet there is a blank symbol 
$\scriptstyle \sqcup$); finally suppose that $\mathbb{M}$ has a transition 
function $\delta$ that takes a pair $(q,\sigma)$, understood as state $q$
and symbol $\sigma$ read by the machine head, and returns one of a number
of triplets $(q', \sigma', m)$, where $q'$ is the next state, $\sigma'$ is the
symbol to be written at the tape, and $m$ is either $-1$ (head goes to the left),
$0$ (head stays at the same position), and $1$ (head goes to the right).
Assume that the alphabet is enlarged so that every pair $(q,\sigma)$ is also
a possible symbol.  The input to the machine is a sequence of symbols
 $(q_0,\sigma_0^1),\sigma_0^2,\dots,\sigma_0^m$, and a configuration
of the tape is $\sigma^1,\sigma^2,\dots,(q,\sigma),\dots,\sigma^{2n}$
(note that the ``useful'' portion of the tape runs from position $1$ to position
$2^n$). 

The encoding of $\mathbb{M}$ is obtained by introducing a number of 
predicates and a number of first-order sentences $\phi_i$; when all these
sentences hold, then any interpretation for the predicates is an accepting
computation. We omit the logical expressions of this encoding as they can
be taken from Gr\"adel's presentation. In any case, if we decide whether 
the number of interpretations for the predicates in these sentences is
larger than half of the number of possible interpretations, we obtain the
desired decision. We enforce each sentence $\phi_i$ by introducing 
a predicate $\mathsf{aux_i}$ and a rule $\mathsf{aux_i} \colonminus \phi_i$
  (where we write $\phi_i$ in the rule with the understanding
that $\phi_i$ is obtained as the Clark completion of a set of auxiliary predicates
and rules; recall that conjunction, disjunction and negation are available, as well
as existential quantifiers; universal quantifiers are produced by negating
existential ones); then the sentence $\phi_i$ holds when $\{\mathsf{aux_i}=\mathsf{true}\}$
holds. We simply collect all these truth assignments in the set $\mathbf{E}$.
Now, we must have one of the sentences in $\mathbb{M}$'s encoding as a 
``detector'' for  the accepting state; that is, $\exists X: \mathsf{state}_{q_a}(X)$, 
where $X$ indexes the computation
steps, and $\mathsf{state}_{q_a}(X)$ is a predicate that indicates that at computation
step $X$ the state is $\mathsf{state}_{q_a}$. Denote by $\mathsf{aux_a}$ the 
auxiliary predicate associated with the latter sentence. At this point we can
reproduce the behavior of $\mathbb{M}$ if we focus on interpretations that
satisfy $\mathbf{E}$. The next step is to encode the input.
Now, the input symbols can be inserted by appropriate facts (these facts refer
to predicates introduced in the encoding). 
And the final step is to count the accepting computations. First we must assume
that, once $\mathbb{M}$ reaches $q_a$ or $q_r$, it stays with the same configuration
(it just keeps repeating the state and the tape), so that the number of accepting
paths is the same number of interpretations that satisfy $\{\mathsf{aux_a}=\mathsf{true}\}$;
this assumption is harmless as $\mathbb{M}$ can always be modified to do it.
Then we add, for each predicate $\mathsf{r}$ that is introduced in the construction, except
the ones in $\mathbf{E}$, the probabilistic fact $0.5::\mathsf{r}(X_1,\dots,X_k)$, where
$k$ is the arity of $\mathsf{r}$. Given all of this, the decision
$\pr{\mathsf{aux_a}=\mathsf{true}|\mathbf{E}}>1/2$ determines whether the number of
``accepting'' interpretations for $\mathbb{M}$ is larger than half the number of intepretations
for $\mathbb{M}$. Thus hardness obtains.
\end{proof}

Consider   query complexity. The 
following result is handy:
\begin{Theorem}
\label{theorem:SingleRule}
Query complexity is $\mathsf{PP}$-hard for the following {\sc plp}:
\[
0.5::\mathsf{t}(X)\,\bigperiod \quad
0.5::\mathsf{pos}(X,Y)\,\bigperiod \quad
0.5::\mathsf{neg}(X,Y)\,\bigperiod
\]
\[
\mathsf{c}(Y) \colonminus \mathsf{pos}(X,Y), \mathsf{t}(X)\,\bigperiod \quad
\mathsf{c}(Y) \colonminus \mathsf{neg}(X,Y), \mathbf{not}\;\mathsf{t}(X)\,\bigperiod
\]
\end{Theorem}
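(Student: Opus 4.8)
The plan is to reduce the $\mathsf{PP}$-complete problem $\#_0\mathsf{3CNF}(>)$ (given a 3CNF formula $\phi$ over propositional variables $x_1,\dots,x_n$ and an integer $M$, decide whether the number of satisfying assignments of $\phi$ exceeds $M$) to a single conditional inference on the fixed {\sc plp} above, varying only the query $(\mathbf{Q},\mathbf{E})$. The key observation is that this fixed program already computes, for each clause index $Y$, an atom $\mathsf{c}(Y)$ that is $\mathsf{true}$ exactly when some variable index $X$ satisfies $Y$'s clause, where $\mathsf{pos}(X,Y)$ and $\mathsf{neg}(X,Y)$ record whether $x_X$ occurs positively or negatively in the clause and $\mathsf{t}(X)$ carries the uniformly random truth value of $x_X$. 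Since $\mathsf{c}$ never occurs in a rule body, the grounded program is acyclic, so by the discussion preceding Theorem \ref{theorem:PropositionalAcyclic} both the credal and the well-founded semantics collapse to a single distribution and the inference is well defined.

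Because the program itself contains no constants, the Herbrand base is generated by the constants appearing in the query; I would introduce one constant $v_i$ per propositional variable and one constant $u_j$ per clause, so that $\mathsf{t}(v_i)$ encodes $x_i$ and $\mathsf{c}(u_j)$ encodes clause $c_j$. Grounding the three probabilistic facts over this domain yields independent fair coins $\mathsf{t}(\cdot)$, $\mathsf{pos}(\cdot,\cdot)$, and $\mathsf{neg}(\cdot,\cdot)$. I would then build the evidence $\mathbf{E}$ that hard-codes the formula: set $\mathsf{pos}(v_i,u_j)=\mathsf{true}$ iff $x_i$ occurs positively in $c_j$, set $\mathsf{neg}(v_i,u_j)=\mathsf{true}$ iff $x_i$ occurs negatively in $c_j$, and set every remaining ground $\mathsf{pos}$ and $\mathsf{neg}$ atom to $\mathsf{false}$. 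This assignment is polynomial in the input and fixes all $\mathsf{pos}$/$\mathsf{neg}$ atoms, so $\pr{\mathbf{E}}=(1/2)^{N}>0$ for the number $N$ of such atoms, and conditioning is well defined.

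The crux of the argument is that the $\mathsf{t}$-atoms are probabilistically independent of the $\mathsf{pos}$/$\mathsf{neg}$-atoms, so conditioning on $\mathbf{E}$ leaves each $\mathsf{t}(v_i)$ an independent fair coin. Under this evidence $\mathsf{c}(u_j)$ becomes a deterministic function of the $\mathsf{t}(v_i)$'s that is $\mathsf{true}$ precisely when clause $c_j$ is satisfied by the assignment $x_i\mapsto\mathsf{t}(v_i)$. Taking $\mathbf{Q}=\{\mathsf{c}(u_1)=\mathsf{true},\dots,\mathsf{c}(u_k)=\mathsf{true}\}$ then gives $\pr{\mathbf{Q}\mid\mathbf{E}}=\#\phi/2^n$, where $\#\phi$ is the number of satisfying assignments of $\phi$. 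Hence deciding $\pr{\mathbf{Q}\mid\mathbf{E}}>M/2^n$ decides $\#\phi>M$, which is a many-one reduction computable in polynomial time, establishing $\mathsf{PP}$-hardness of the query complexity of this {\sc plp}.

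I expect the delicate points to be bookkeeping rather than conceptual: ensuring that the entire constant domain is supplied by the query (so that the fixed predicate vocabulary alone suffices), checking that the irrelevant ground atoms produced by grounding — such as $\mathsf{t}(u_j)$ or $\mathsf{c}$ on non-clause constants — are marginalized away or simply excluded from $\mathbf{Q}$ and thus contribute a harmless factor of $1$, and confirming both $\pr{\mathbf{E}}>0$ and the independence of $\mathsf{t}$ from $\mathsf{pos}$/$\mathsf{neg}$ so that the conditional probability is exactly $\#\phi/2^n$.
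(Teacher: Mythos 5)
Your reduction is correct and is essentially the paper's own proof: the same fixed program, the same encoding of clause structure via $\mathsf{pos}/\mathsf{neg}$ atoms and of the random truth assignment via $\mathsf{t}$, with clause satisfaction read off the Clark completion of $\mathsf{c}$. The only difference is bookkeeping: the paper places the $\mathsf{pos}/\mathsf{neg}$ assignments inside $\mathbf{Q}$ itself and tests the unconditional probability against the scaled threshold $M/2^{2s^2+s}$ with $s=\max(m,n)$, whereas you condition on them as evidence $\mathbf{E}$ and use the cleaner threshold $M/2^n$; both are valid polynomial-time many-one reductions (the paper's variant additionally shows hardness already for queries with empty evidence).
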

\begin{proof}
  Consider a
  CNF formula $\varphi(x_1,\ldots,x_n)$ with clauses
  $c_1,\ldots,c_m$ and propositional variables $x_1,\dots,x_n$. 
  Let $P_j$ (resp., $N_j$) be a vector denoting the indices of the
  positive (negative) literals $x_i$ ($\neg x_i$) in clause $j$. We can
  encode the truth-value of a clause $c_j$ as
  $\mathsf{c}(j)$, the truth-value of  $x_i$ as $\mathsf{t}(i)$,
  and the occurrence of a  positive (negative) literal $x_i \in P_j$ ($x_i \in N_j$) as
  $\mathsf{pos}(i,j)$ ($\mathsf{neg}(i,j)$).
  So assemble a query $\mathbf{Q}$
  containing assignments to $\{\mathsf{c}(j)=\mathsf{true}\}$ for
  $j=1,\ldots,m$, $\{\mathsf{pos}(i,j)=\mathsf{true}\}$ for $i \in P_j, j =
  1,\ldots,m$, $\{\mathsf{neg}(i,j)=\mathsf{true}\}$ for $i \in N_j,
  j=1\ldots,m$. Now if a grounding of $\mathsf{pos}$ or $\mathsf{neg}$ is
 not already assigned $\mathsf{true}$, then assign it to $\mathsf{false}$
 and add this assignment to $\mathbf{Q}$.  The Clark completion
   defines $\mathsf{c}(j) \Leftrightarrow \bigvee_{i \in
    P_j} \mathsf{t}(i) \vee \bigvee_{i \in N_j} \neg \mathsf{t}(i)$ for  every
    $c_j$. And the number of assignments to $x_1,\dots,x_n$
  that satisfy $\varphi$ is larger than $M$ iff $\pr{\mathbf{Q}}>M/2^{2s^2+s}$
  where $s=\max(m,n)$; 
  hence the desired hardness obtains. 
\end{proof}

Consequently:
\begin{Theorem}\label{theorem:QueryAcyclic}
The query complexity of inference for acyclic {\sc plp}s is $\mathsf{PP}$-complete.
\end{Theorem}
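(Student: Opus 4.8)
For hardness there is essentially nothing to do: Theorem~\ref{theorem:SingleRule} already exhibits a single, fixed acyclic {\sc plp} whose query complexity is $\mathsf{PP}$-hard, where the query ranges over fresh constants (the indices of the clauses and propositional variables of an arbitrary CNF) and the decision $\pr{\mathbf{Q}}>M/2^{2s^2+s}$ counts satisfying assignments. So I would simply note that $\mathsf{PP}$-hardness for the whole class of acyclic {\sc plp}s is inherited from that one program.

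The substance is membership. The plan is to exploit the fact that fixing $\left<\mathbf{P},\mathbf{PF}\right>$ bounds, by constants, both the predicate arity and the number of logical variables occurring in any rule. Hence if the query introduces $N$ constants (with $N$ polynomial in the input length), grounding each fixed rule yields only $O(N^v)$ instances, where $v$ is the constant bound on variables per rule; the full grounding of $\left<\mathbf{P},\mathbf{PF}\right>$ over the query's constants is therefore of polynomial size and still acyclic, and the ground probabilistic facts are polynomially many. As in Section~\ref{subsection:Acyclic}, I would view this as a polynomial-size acyclic Bayesian network: the probabilistic-fact atoms are the roots, and every other atom is a deterministic node whose value is computed on the fly as the Clark-completion disjunction over its (polynomially many) ground-rule bodies, so no exponential conditional table is ever materialised.

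The crucial step -- and the reason the bound drops from the $\mathsf{PP}^{\mathsf{NP}}$ of Theorem~\ref{theorem:AcyclicBounded} down to $\mathsf{PP}$ -- is that, with the program fixed, evaluating this network under a given total choice needs no $\mathsf{NP}$ oracle. In the inferential setting a single rule may existentially quantify over unboundedly many logical variables, so deciding the truth of a head atom is an $\mathsf{NP}$ search; here each existential ranges over a constant number of variables taking values among $N$ constants, i.e.\ over polynomially many groundings, which can be checked exhaustively. Thus, given any assignment to the polynomially many ground probabilistic facts, I would compute the unique stable/well-founded model in polynomial time by processing atoms in a topological order of the acyclic grounded dependency graph and evaluating each Clark-completion formula.

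Membership in $\mathsf{PP}$ then follows from the counting construction of Theorem~\ref{theorem:AcyclicBounded}: a nondeterministic polynomial-time machine guesses a total choice, spawns computation paths in proportion to its rational probability (splitting $\mu/\nu$ exactly as before, even in binary), evaluates the model in polynomial time, and accepts according to whether $\mathbf{Q}$, and for conditioning $\mathbf{E}$, holds. The comparison $\pr{\mathbf{Q}\mid\mathbf{E}}>\gamma$ is reduced to an unconditional ``more than half accept'' question by the same Park-style auxiliary device, now implemented \emph{inside} the machine rather than by editing the fixed program. The main obstacle I anticipate is making this last point watertight -- that a fixed program collapses the $\mathsf{NP}$ oracle, i.e.\ that the constant bound on variables per rule really does yield polynomial-time model evaluation -- since that is the sole reason the complexity falls to $\mathsf{PP}$.
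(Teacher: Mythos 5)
Your proposal is correct and takes essentially the same route as the paper's proof: hardness is inherited from Theorem~\ref{theorem:SingleRule}, and membership reuses the counting-machine construction of Theorem~\ref{theorem:AcyclicBounded} together with the key observation that, once a total choice is fixed, logical inference in the fixed acyclic program takes only polynomial time --- a fact the paper obtains by citing Dantsin et al.'s data-complexity result (their Theorem 5.1), whereas you prove it directly by grounding the fixed rules over the query's constants and evaluating atoms in topological order of the (still acyclic, polynomial-size) grounded dependency graph. Your extra remark that the conditioning gadget of Theorem~\ref{theorem:AcyclicBounded} cannot be realized by editing the program (which is fixed here) and must instead be simulated by the machine's own branching is a genuine subtlety that the paper's one-line membership argument glosses over, and your resolution of it is sound.
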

\begin{proof}
Hardness follows from Theorem~\ref{theorem:SingleRule}. Membership is 
obtained using the same reasoning in the proof of Theorem \ref{theorem:AcyclicBounded},
only noting that, once the probabilistic facts are selected, logical reasoning
with the resulting acyclic normal logic program can be done with polynomial
effort \cite[Theorem 5.1]{Dantsin2001}; thus $\pr{\mathbf{Q}}>\gamma$ can be 
decided within $\mathsf{PP}$. 
\end{proof}

There are   subclasses of acyclic {\sc plp}s that characterize well-known
tractable Bayesian networks. An obvious one is the class of propositional acyclic programs
whose grounded dependency graph has  bounded treewidth, as  Bayesian networks subject 
to such a bound are tractable \cite{Koller2009}.  
As another interesting example, consider the two-level networks that are processed
by the {\em Quick-Score} algorithm \cite{Heckerman90or}; that is, two-level networks
where the top level consists of marginally independent ``diseases'' and the bottom
level consists of ``findings''  that are conditionally independent given the diseases, and 
that are determined by {\em noisy-or} gates. Such a network can be easily encoded
using a propositional acyclic {\sc plp}; these {\sc plp}s inherit the fact that inference
is polynomial when $\mathbf{Q}$ contains only negated atoms (that is, only $\mathsf{false}$).
Alas, this tractability result is quite fragile, as ``positive'' evidence breaks
polynomial behavior as long as $\mathsf{P} \neq \mathsf{NP}$~\cite{Shimony2003}.
Yet another tractable class consists of acyclic definite propositional {\sc plp}s 
   such that each atom is the head of at most one rule:
inference in this class is polynomial when $\mathbf{Q}$ contains only
$\mathsf{true}$. This is obtained by noting that the Clark completion of these 
programs produces Bayesian networks
that are specified using only conjunction, and a polynomial algorithm obtains
from results by  \citeA{Cozman2015AAAI}. This is also a fragile result:
\begin{Proposition} \label{hard-acyclic}
Inference for the class of acyclic  propositional {\sc plp}s such that 
each atom is the head of at most one rule is $\mathsf{PP}$-complete even if
(a)  $\mathbf{Q}$ contains only $\mathsf{true}$ but the program contains $\mathbf{not}$;
(b) the program is definite but $\mathbf{Q}$ contains $\mathsf{false}$.
\end{Proposition}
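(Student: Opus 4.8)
The plan is to observe first that \emph{membership} is immediate: both classes in (a) and (b) are subclasses of acyclic propositional {\sc plp}s, so by Theorem \ref{theorem:PropositionalAcyclic} inference lies in $\mathsf{PP}$. All the work is in \emph{hardness}. The conceptual tension is that the restriction ``each atom is the head of at most one rule'' kills the usual device for encoding disjunction: a Clark completion with a single rule per head yields $A \Leftrightarrow B$ with $B$ a conjunction of subgoals, so no $\vee$ can be formed at a head. Hardness must therefore reintroduce disjunction by another route --- through $\mathbf{not}$ in case (a), and through $\mathsf{false}$ assignments in the query in case (b).

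For part (a) I would reduce from the $\mathsf{PP}$-complete problem of deciding whether a 3CNF $\phi(x_1,\dots,x_n)$ has more than $M$ satisfying assignments (this is $\#_0\mathsf{3CNF}(>)$ in the notation of Section \ref{subsection:Complexity}). Introduce $0.5::\mathsf{x_i}\bigperiod$ for each variable. For each clause $c_j$ introduce a single atom $\mathsf{nc_j}$ whose one rule has a body asserting that every literal of $c_j$ is $\mathsf{false}$: a positive literal $x_i$ contributes the subgoal $\mathbf{not}\;\mathsf{x_i}$ and a negative literal $\neg x_i$ contributes $\mathsf{x_i}$. Then $\mathsf{nc_j}$ is $\mathsf{true}$ iff $c_j$ is falsified. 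Finally add the single rule $\mathsf{cnf} \colonminus \mathbf{not}\;\mathsf{nc_1},\dots,\mathbf{not}\;\mathsf{nc_m}\bigperiod$, so $\mathsf{cnf}$ is $\mathsf{true}$ iff all clauses hold. Each atom heads at most one rule, the program is acyclic and uses $\mathbf{not}$, and $\pr{\mathsf{cnf}=\mathsf{true}}=\#\phi/2^n$; querying $\mathbf{Q}=\{\mathsf{cnf}=\mathsf{true}\}$ (only $\mathsf{true}$) against threshold $M/2^n$ decides the counting problem.

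For part (b) the program must be definite, so the only derived atoms available are conjunctions of the probabilistic facts; the disjunction inside a clause can only be produced by a $\mathsf{false}$ assignment in $\mathbf{Q}$, since $\neg(\mathsf{x}_{u}\wedge\mathsf{x}_{v})=\neg\mathsf{x}_u\vee\neg\mathsf{x}_v$ by De Morgan. This forces the encoded clauses to be \emph{monotone}, so I would reduce from the $\mathsf{PP}$-complete problem of deciding whether the number of independent sets of a graph $G=(V,E)$ exceeds $M$ (equivalently, threshold counting for the all-negative 2CNF $\bigwedge_{(u,v)\in E}(\neg x_u\vee\neg x_v)$, whose $\mathsf{PP}$-completeness follows from the classical $\#\mathsf{P}$-completeness of monotone 2SAT counting). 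Put $0.5::\mathsf{x_v}\bigperiod$ for each vertex, and for each edge $(u,v)$ the single definite rule $\mathsf{d}_{uv} \colonminus \mathsf{x_u},\mathsf{x_v}\bigperiod$. Taking $\mathbf{Q}=\{\mathsf{d}_{uv}=\mathsf{false} : (u,v)\in E\}$ (only $\mathsf{false}$), the event $\mathbf{Q}$ states that $\{v:\mathsf{x_v}=\mathsf{true}\}$ is independent, so $\pr{\mathbf{Q}}=\#\mathrm{IS}(G)/2^{|V|}$, and comparing with $M/2^{|V|}$ decides the problem. The program is acyclic, definite, and each atom heads at most one rule.

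The main obstacle is precisely this loss of head-level disjunction. In (a) it is handled cleanly by the double use of $\mathbf{not}$; the delicate point is (b), where we must argue that pushing all negation into the query still suffices for $\mathsf{PP}$-hardness, which works only because we reduce from a monotone counting problem. The step requiring care is thus verifying that the monotone (independent-set) threshold problem is genuinely $\mathsf{PP}$-complete --- not merely $\#\mathsf{P}$-complete to count --- so that a many-one reduction to our inference problem is legitimate.
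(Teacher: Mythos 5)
Your membership argument coincides with the paper's, and your part (a) is correct but follows a genuinely different route than the paper's. The paper proves case (b) \emph{first}, by observing that the Clark completion of a case-(b) program is a Bayesian network specified purely by conjunctions and then invoking the companion result \cite{CozmanAIJreport2016} that inference in such networks is $\mathsf{PP}$-complete once evidence may be negative; case (a) is then derived from case (b) by using $\mathbf{not}$ to rewrite each assignment $\{\mathsf{a}=\mathsf{false}\}$ as $\{\mathsf{na}=\mathsf{true}\}$ via a new rule $\mathsf{na} \colonminus \mathbf{not}\;\mathsf{a}\bigperiod$ Your direct reduction for (a) --- one falsification atom $\mathsf{nc_j}$ per clause and $\mathsf{cnf} \colonminus \mathbf{not}\;\mathsf{nc_1},\dots,\mathbf{not}\;\mathsf{nc_m}\bigperiod$ --- is valid: the program is acyclic, every atom heads at most one rule, $\pr{\mathsf{cnf}=\mathsf{true}}=\#\phi/2^n$, and the source problem is $\mathsf{PP}$-complete. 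It has the advantage of making (a) independent of (b), which matters given what follows.

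Part (b) has a genuine gap, exactly where you flagged it. Your reduction starts from the problem ``is the number of independent sets of $G$ greater than $M$?'', so it proves the proposition only if that problem is $\mathsf{PP}$-hard under \emph{many-one} reductions (the paper's notion of completeness, cf.\ Section \ref{subsection:Complexity}, and the notion your reduction must compose with). This does \emph{not} ``follow from the classical $\#\mathsf{P}$-completeness of monotone 2SAT counting'': the classical hardness proofs (Valiant, Provan--Ball) are polynomial-interpolation arguments, i.e., Turing reductions, and Turing-reduction $\#\mathsf{P}$-hardness of a counting problem does not yield many-one $\mathsf{PP}$-hardness of its threshold version. Indeed, no parsimonious reduction from $\#\mathsf{SAT}$ to counting independent sets can exist at all (every graph has at least one independent set, while a formula may have none), so some nontrivial count transformation is unavoidable, and whether it can be made monotone --- which is what thresholding needs --- is precisely the open step. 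So your proof of (b) defers its entire difficulty to an unproven claim; this is the very step the paper discharges by citation rather than by argument.

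The gap can be closed while keeping the spirit of your construction, but it needs a resource your proposal does not use: probabilistic facts with probabilities other than $1/2$. Reduce directly from threshold $\#\mathsf{3SAT}$: for each variable $z_i$ introduce facts $x_i$ and $y_i$, each with probability $p = 2^{2n}/(2^{2n}+1)$, and the rule $\mathsf{e_i} \colonminus \mathsf{x_i},\mathsf{y_i}\bigperiod$; for each clause $c_j$ introduce $\mathsf{d_j} \colonminus w_{j1},w_{j2},w_{j3}\bigperiod$ where $w_{jl}$ is $\mathsf{y_i}$ if the literal is $z_i$ and $\mathsf{x_i}$ if it is $\neg z_i$; take $\mathbf{Q}=\{\mathsf{e_i}=\mathsf{false}\}_i \cup \{\mathsf{d_j}=\mathsf{false}\}_j$, which contains only $\mathsf{false}$. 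Writing $r=(1-p)/p=2^{-2n}$, a direct computation gives $\pr{\mathbf{Q}} = (p(1-p))^n (\#\phi + \rho)$, where $\rho$ collects the configurations in which some pair $(x_i,y_i)$ is both false and satisfies $0 \le \rho \le 2^n((1+r/2)^n - 1) < 1/2$. Hence $\pr{\mathbf{Q}} > (p(1-p))^n (M+1/2)$ if and only if $\#\phi > M$: the skewed probabilities push the ``both false'' states below the integer granularity of $\#\phi$, restoring monotone thresholding in a purely conjunctive, negation-free program with one rule per head. As submitted, however, case (b) rests on an unproven $\mathsf{PP}$-hardness assertion and is incomplete.
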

\begin{proof}
Membership follows, for both (a) and (b), from Theorem \ref{theorem:PropositionalAcyclic}.
So, consider hardness. Any {\sc plp} in Case (b) produces, as its Clark completion,
a Bayesian network that is specified using conjunctions; inference for this sort of
Bayesian network is $\mathsf{PP}$-complete when evidence can be 
``negative''~\cite{CozmanAIJreport2016}. Hardness for Case (a) then obtains easily, because
one can use negation to turn ``positive'' evidence into ``negative'' evidence.
\end{proof}

\subsection{Stratified probabilistic logic programs}
\label{subsection:Stratified}

A stratified normal logic program has the useful property that its universally adopted semantics
produces a single interpretation (and is equal to its stable and well-founded semantics).
Because every total choice of a stratified {\sc plp} produces a stratified normal logic program,
the credal/well-founded semantics of a stratified {\sc plp} is a unique distribution. 

One might fear that in moving from acyclic to stratified programs we must pay a
large penalty. This is {\em not} the case: the complexity classes remain
the same as in Section \ref{subsection:Acyclic}:

\begin{Theorem}
\label{theorem:Stratified} 
For locally stratified {\sc plp}s, inferential complexity is $\mathsf{PEXP}$-complete;
it is $\mathsf{PP}^\mathsf{NP}$-complete for {\sc plp}s with bounded predicate arity;
it is $\mathsf{PP}$-complete for propositional {\sc plp}s.
For locally stratified {\sc plp}s, query complexity is $\mathsf{PP}$-complete.
\end{Theorem}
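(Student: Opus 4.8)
The plan is to mirror the complexity landscape already established for acyclic programs, exploiting the single structural fact that every acyclic {\sc plp} is stratified. For hardness, nothing new is needed: the reductions proving Theorems \ref{theorem:PropositionalAcyclic}, \ref{theorem:AcyclicBounded}, \ref{theorem:AcyclicEXP} and \ref{theorem:QueryAcyclic} all construct acyclic (indeed sometimes definite) {\sc plp}s, which are \emph{a fortiori} stratified; hence $\mathsf{PP}$-hardness (propositional and query), $\mathsf{PP}^\mathsf{NP}$-hardness (bounded arity) and $\mathsf{PEXP}$-hardness (general) transfer verbatim. The real content is therefore membership, and the unifying observation is that a stratified {\sc plp} is always consistent and induces a \emph{single} distribution: no consistency check is required, $\pr{\mathbf{Q}|\mathbf{E}}$ is a genuine number, and the conditioning gadget of Theorem \ref{theorem:AcyclicBounded} (the auxiliary predicates $\mathsf{aux1},\dots,\mathsf{aux6}$) reduces $\pr{\mathbf{Q}|\mathbf{E}}>\gamma$ to an unconditional comparison against $1/2$. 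These auxiliary rules introduce no negative cycle, so stratification is preserved and the reduction is legitimate in this setting.

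For the three inferential cases I would bound the cost of evaluating $\mathbf{Q}$ in the unique model of $\mathbf{P}\cup\mathbf{PF}^{\downarrow\theta}$ and wrap this logical evaluation inside a counting machine that branches over total choices, splitting each probabilistic fact of probability $\mu/\nu$ into $\mu$ keep-paths and $\nu-\mu$ discard-paths exactly as in Theorem \ref{theorem:AcyclicBounded}. In the propositional case the unique model of a stratified program is computed in polynomial time by evaluating the strata bottom-up, so the wrapper is an ordinary $\mathsf{PP}$ machine. In the bounded-arity case grounding yields only polynomially many ground atoms, but a single rule may still ground to exponentially many instances; I would argue that deciding the truth value of a ground atom in the unique model lies in $\Delta^\mathsf{P}_2=\mathsf{P}^\mathsf{NP}$, by processing each stratum as a positive Datalog program (negative body literals point to already-settled lower strata), iterating the immediate-consequence operator over the polynomially many ground atoms, and using an $\mathsf{NP}$ oracle at each round to decide, for each candidate head atom, whether some body instantiation fires, i.e.\ an existential conjunctive-query test. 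Since $\mathsf{PP}^{\mathsf{P}^\mathsf{NP}}=\mathsf{PP}^\mathsf{NP}$, the wrapper lands in $\mathsf{PP}^\mathsf{NP}$. In the general case I would simply ground the program, obtaining an exponentially large stratified \emph{propositional} program whose unique model is computable in time polynomial in the grounding (hence exponential overall), and run the same counting argument on an exponential-time probabilistic machine, giving $\mathsf{PEXP}$; this is the concise membership argument reused for Theorem \ref{theorem:AcyclicEXP}.

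Query complexity is the easiest sub-case: with $\langle\mathbf{P},\mathbf{PF}\rangle$ fixed, all predicate arities and rule shapes are constants, so grounding over the constants supplied by the query is polynomial and produces a stratified propositional program whose unique model is computed in polynomial time. The $\mathsf{PP}$ counting machine over total choices then settles membership, matching the hardness inherited from Theorem \ref{theorem:QueryAcyclic}.

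The step I expect to be the main obstacle is the bounded-arity membership argument, since there one cannot lean on polynomial-time model computation. The delicate point is to confirm that stratification confines the negation-induced nonmonotonicity to the stratum boundaries, so that each per-stratum least-fixpoint computation is a genuinely monotone Datalog evaluation for which a single $\mathsf{NP}$ query per round suffices; the logical-reasoning complexity for stratified relational programs must then be pinned to $\Delta^\mathsf{P}_2$ in the same spirit as Eiter et al.\ do for the acyclic case, and the oracle collapse $\mathsf{PP}^{\Delta^\mathsf{P}_2}=\mathsf{PP}^\mathsf{NP}$ must be invoked so that the stated class is reached exactly and not overshot.
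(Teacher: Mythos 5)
Your proposal is correct and follows essentially the same route as the paper: hardness is inherited from the acyclic constructions (which are a fortiori stratified), and membership wraps per-total-choice logical inference---polynomial effort for the propositional and query cases, $\mathsf{P}^{\mathsf{NP}}$ for bounded arity, exponential for the general case---inside the counting machine and conditioning gadget of Theorem~\ref{theorem:AcyclicBounded}. Your explicit stratum-by-stratum $\mathsf{P}^{\mathsf{NP}}$ evaluation for the bounded-arity case (plus the collapse $\mathsf{PP}^{\mathsf{P}^{\mathsf{NP}}}=\mathsf{PP}^{\mathsf{NP}}$) is a detail the paper merely asserts by saying the acyclic membership proof ``applies directly,'' so it is a useful elaboration rather than a divergence.
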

\begin{proof}
For propositional stratified {\sc plp}s, hardness comes from the fact that a Bayesian 
network on binary random variables can be encoded by a stratified program (indeed, 
by an acyclic program), and inference with such networks is $\mathsf{PP}$-complete
\cite{Darwiche2009,Roth96}. 
Membership is obtained using the same reasoning in the proof of 
Theorem \ref{theorem:AcyclicBounded}, only noting that, once the probabilistic facts 
are selected, logical reasoning with the resulting stratified normal logic program can 
be done with polynomial effort \cite[Table~2]{Eiter2007}.

For stratified programs with bounded predicate arity,
hardness  follows from Theorem~\ref{theorem:AcyclicBounded}. 
Membership is 
obtained using the same reasoning in the proof of Theorem \ref{theorem:AcyclicBounded};
in fact that proof of membership applies directly to stratified programs with bounded arity.

For general stratified {\sc plp}s,
hardness is argued as in the proof of  Theorem~\ref{theorem:AcyclicEXP}.
Membership follows from the fact that we can 
ground the {\sc plp} into an exponentially large propositional {\sc plp}.
Once the (exponentially-many) probabilistic facts are selected, the Turing
machine is left with a stratified propositional normal logic program, and
logical inference is polynomial in the size of this program (that is, logical
inference requires exponential effort). 

Finally, hardness of query complexity follows from Theorem \ref{theorem:SingleRule}. 
 Membership is 
obtained using the same reasoning in the proof of Theorem \ref{theorem:AcyclicBounded},
only noting that, once the probabilistic facts are selected, logical reasoning
with the resulting stratified normal logic program can be done with 
polynomial effort as guaranteed by the analysis of data complexity of
stratified normal logic programs~\cite{Dantsin2001}.
\end{proof}

We noted, at the end of Section \ref{subsection:Acyclic},
 that some sub-classes of acyclic programs display polynomial 
behavior. We now show an analogue result for a sub-class of definite
(and therefore stratified, but possibly cyclic) programs with unary
and binary predicates:

\begin{Proposition}\label{proposition:DLLite}
Inferential complexity is polynomial for queries containing only $\mathsf{true}$, for {\sc plp}s where: 
(a) every predicate is unary or binary, and facts can be asserted about them;
(b) probabilistic facts can be of the form $\alpha::\mathsf{a}(X)\bigperiod$, 
$\alpha::\mathsf{a}(a)$, $\alpha::\mathsf{r}(X,Y)$ (that is, each unary predicate
can be associated with ground or non-ground probabilistic facts, while each
binary predicate can be associated to a particular non-ground probabilistic fact); 
(c) no binary predicate is the head of a rule that has a body;
(d) each atom is the head of at most one rule that has a body, and only the three following
rule forms are allowed:
\[
\mathsf{a}(X) \colonminus
      \mathsf{a}_1(X), \dots, \mathsf{a}_k(X)\,\bigperiod  \qquad
\mathsf{a}(X) \colonminus \mathsf{r}(X,Y)\,\bigperiod \qquad
      \mathsf{a}(X) \colonminus \mathsf{r}(Y,X)\,\bigperiod.
\] 
\end{Proposition}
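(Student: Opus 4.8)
The plan is to exploit that a definite program yields, for every total choice, a unique least model, so that by Sato's distribution semantics the {\sc plp} induces a single distribution and we must compute a single number $\pr{\mathbf{Q}}$; conditioning reduces to this case, since with $\mathbf{Q}$ and $\mathbf{E}$ both containing only $\mathsf{true}$ the value $\pr{\mathbf{Q}|\mathbf{E}}$ is the ratio of the two positive-conjunction probabilities $\pr{\mathbf{Q}\cup\mathbf{E}}$ and $\pr{\mathbf{E}}$. Because every predicate has arity at most two, grounding produces only $O(|C|^2)$ atoms, where $C$ is the set of constants, so we may pass to the grounded Clark completion without leaving polynomial size. Conditions (c)--(d) make this completion especially tame: each binary ground atom $\mathsf{r}(c,d)$ is an independent root (a single probabilistic fact $\alpha_{\mathsf{r}}$ per binary predicate, with ordinary facts only forcing some groundings to $\mathsf{true}$), each primitive unary atom is an independent root, and each derived unary atom $\mathsf{a}(c)$ has a \emph{unique} rule body, which is either a conjunction $\bigwedge_i \mathsf{a}_i(c)$ over the \emph{same} object (form~1) or an existential $\bigvee_d \mathsf{r}(c,d)$ / $\bigvee_d \mathsf{r}(d,c)$ that bottoms out immediately at binary atoms (forms~2--3). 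Since the program is definite, the event ``$\mathbf{Q}$ is $\mathsf{true}$'' is monotone and no negation appears in the completion.

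The first step is requirement propagation. Following conjunction rules from the query atoms never changes the object argument, so it terminates after polynomially many steps and reduces $\mathbf{Q}$ to a conjunction of (i) required primitive unary atoms and (ii) required \emph{existentials}, each of the form ``some out-edge (resp.\ in-edge) of a fixed object under a fixed binary predicate is $\mathsf{true}$''. Cyclic conjunction rules are harmless: a pure cycle with no external support is $\mathsf{false}$ in the least model, which is detected in polynomial time and forces $\pr{\mathbf{Q}}=0$. The required primitive unary atoms and the existentials of \emph{distinct} binary predicates involve pairwise disjoint sets of independent roots, so $\pr{\mathbf{Q}}$ factors into a product over these independent blocks; the purely conjunctive backbone is exactly the situation shown to be polynomial by \citeA{Cozman2015AAAI}, which we invoke directly.

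The hard part is a single binary predicate whose existentials couple: an out-edge $\mathsf{r}(c,c')$ of $c$ is simultaneously an in-edge of $c'$, so ``some out-edge of $c$'' and ``some in-edge of $c'$'' share the ground atom $\mathsf{r}(c,c')$ and are positively correlated. The key enabling observation is that conditions (b)--(c) force all random groundings of $\mathsf{r}$ to be i.i.d.\ Bernoulli$(\alpha_{\mathsf{r}})$ (facts only raise individual probabilities to $1$, never lower them). After fixing the facts, the atoms shared between the required out-objects and in-objects form an i.i.d.\ $p\times q$ Bernoulli grid, while each object's remaining ``private'' edges constitute an independent block. I would then compute the probability that every required out-object and every required in-object is covered by: (a) a Poisson--binomial dynamic program over the independent private blocks, yielding the probability that exactly $a$ out-objects and $b$ in-objects remain uncovered by their private edges; and (b) for each $(a,b)$, the probability that the i.i.d.\ grid leaves none of those $a$ rows all-zero and none of those $b$ columns all-zero, which by inclusion--exclusion collapses, using i.i.d.-ness, to the polynomial-size sum $\sum_{s\le a}\sum_{t\le b}\binom{a}{s}\binom{b}{t}(-1)^{s+t}(1-\alpha_{\mathsf{r}})^{sq+tp-st}$. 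Multiplying (a) and (b) and summing over $a,b$ gives the block's contribution in polynomial time, and independence across predicates and across primitive unaries reassembles $\pr{\mathbf{Q}}$.

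The principal obstacle is precisely this coupling through shared binary atoms, and the whole argument hinges on the i.i.d.\ structure guaranteed by conditions (b)--(c); without it the bipartite covering probability would require a state space exponential in the number of query objects. A final subtlety is a derived atom carrying both a probabilistic fact and a rule, whose completion disjoins an independent Bernoulli with the rule body: I would absorb such a disjunct as one more independent ``private'' source feeding the corresponding block, so that it is swallowed by the same Poisson--binomial bookkeeping without enlarging the state space.
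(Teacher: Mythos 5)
Your core argument is correct and, in its decisive step, takes a genuinely different route from the paper's. The paper performs the same initial reduction you do (ground the program, propagate forced-truth through the form-1 conjunctive rules, and reduce the query to a conjunction of independent roots plus existential requirements $\bigvee_d \mathsf{r}(c,d)$ and $\bigvee_d \mathsf{r}(d,c)$), but then finishes in one line by citing the weighted model counting tractability result of \citeA{MauaSUM2015dlite}. You instead supply a self-contained polynomial algorithm for that counting problem: split each required object's edges into a private block and the shared $p \times q$ grid, use the fact that condition (b) makes all random groundings of a binary predicate i.i.d.\ Bernoulli, and collapse the coupled grid event by an inclusion--exclusion sum over $(s,t)$ with terms proportional to $(1-\alpha_{\mathsf{r}})^{sq+tp-st}$, which is correct since the union of $s$ rows and $t$ columns of the grid covers exactly $sq+tp-st$ cells; convolving with the private-block distribution then gives the block probability. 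This buys an explicit algorithm and an explanation of \emph{why} condition (b) restricts each binary predicate to a single non-ground probabilistic fact. You are also more careful than the paper on one point: your check that a required atom whose support closes into a pure cycle of form-1 rules forces $\pr{\mathbf{Q}}=0$ is genuinely needed; the paper's proof silently forces all non-root ancestors to $\mathsf{true}$ and, as written, would report a positive probability for, e.g., $\mathbf{Q}=\{\mathsf{a}(c)\}$ with rules $\mathsf{a}(X)\colonminus\mathsf{b}(X)\bigperiod$ and $\mathsf{b}(X)\colonminus\mathsf{a}(X)\bigperiod$

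The one genuine flaw is your final paragraph. Absorbing a probabilistic fact that shares its atom with a rule head as ``one more independent private source'' is sound only when that head's rule is existential (forms 2--3), because there the completion is a disjunction of independent events and your block bookkeeping absorbs it. It fails when the rule is conjunctive (form 1): the completion then reads $\mathsf{a}(c) \Leftrightarrow F_a \vee \bigl(\mathsf{a}_1(c) \wedge \dots \wedge \mathsf{a}_k(c)\bigr)$, and a query requiring several such heads whose bodies share atoms is the event $\bigwedge_i \bigl(x_i \vee \bigwedge_{j \in S_i} y_j\bigr) = \bigwedge_i \bigwedge_{j \in S_i} (x_i \vee y_j)$, i.e., weighted counting for a monotone bipartite 2-CNF, equivalently counting independent sets in a bipartite graph, which is $\#\mathsf{P}$-hard. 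So no polynomial bookkeeping can ``swallow'' this case unless $\mathsf{P}=\mathsf{PP}$; it has to be excluded by assumption rather than handled. That is exactly what Sato's disjointness condition (no probabilistic fact unifies with a rule head) provides, and the paper's own proof assumes it implicitly when it writes the completion as pure existential definitions plus unit weighted clauses. Restricted to disjoint programs---the reading the paper clearly intends---your proof is correct; the last paragraph should simply be replaced by that assumption.
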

\begin{proof}
  We show that the inference can be reduced to a tractable weighted
  model counting problem.  First, ground the program in polynomial time
  (because each rule has at most two logical variables). Since the resulting program
  is definite, only atoms that are ancestors of the queries in the
  grounded dependency graph are relevant for determining the
  truth-value of the query in any logic program induced by a total
  choice (this follows as resolution is complete for propositional
  definite programs). Thus, discard all atoms that  are not ancestors of
  a query atom. For the query to be true, the remaining atoms that are
  not probabilistic facts are forced to be true by the semantics. So
  collect all rules of the sort
  $\mathsf{a}(a) \colonminus \mathsf{r}(a,b)\,\bigperiod$,
  $\mathsf{a}(a) \colonminus \mathsf{r}(b,a)\,\bigperiod$, plus all facts and all
  probabilistic facts. This is an acyclic program, so that its Clark
  completion gives the stable model semantics. This completion is a
  formula containing a conjunction of subformulas
  $\mathsf{a}(a) \Leftrightarrow \bigvee_b \mathsf{r}(a,b)$,
  $\mathsf{a}(a) \Leftrightarrow \bigvee_a \mathsf{r}(a,b)$, and unit
  (weighted) clauses corresponding to (probabilistic) facts. The query
  is satisfied only on models where the lefthand side of the definitions
  are true, which is equivalent to reducing the subformulas to their
  righthand side. The resulting weighted model counting problem has been
  shown to be polynomial-time solvable \cite{MauaSUM2015dlite}.
\end{proof}


\section{The complexity of inferences: credal semantics}
\label{section:ComplexityCredal}

Now consider {\sc plp}s that may be non-stratified.
We have to adapt the definitions of inferential and query complexity to account
for the fact that we now have lower and upper probabilities. First we
focus on lower probabilities; the {\em lower-probability} version of inferential complexity 
for  a class of {\sc plp}s is the complexity of the following decision problem:

\begin{description}
\item[Input:] A {\sc plp} $\left<\mathbf{P},\mathbf{PF}\right>$
whose probabilities are rational numbers,
a pair  $(\mathbf{Q}, \mathbf{E})$, called the {\em query}, 
where both $\mathbf{Q}$ and $\mathbf{E}$ are sets of truth assignments to
atoms in the Herbrand base of the union of program $\mathbf{P}$ and all facts
in $\mathbf{PF}$, 
and a rational $\gamma \in [0,1]$. 
\item[Output:] Whether or not $\lpr{\mathbf{Q}|\mathbf{E}}>\gamma$;
by convention, output is NO  (that is, input is rejected) if $\upr{\mathbf{E}}=0$. 
\end{description}
The {\em lower-probability} version of query complexity is, accordingly: 
\begin{description}
  \item[Fixed:] A {\sc plp} $\left<\mathbf{P},\mathbf{PF}\right>$, whose 
   probabilities are rational numbers,  that employs a vocabulary $\mathbf{R}$ of
  predicates.
  \item[Input:] A pair  $(\mathbf{Q}, \mathbf{E})$, called the {\em query}, 
where both $\mathbf{Q}$ and $\mathbf{E}$ are sets of truth assignments to
atoms of predicates in $\mathbf{R}$, 
and a rational $\gamma \in [0,1]$. 
  \item[Output:] Whether or not $\lpr{\mathbf{Q}|\mathbf{E}}>\gamma$;
by convention, output is NO if $\upr{\mathbf{E}}=0$. 
\end{description}

So, we are ready to state our main results on complexity for the credal semantics.
To understand these results, consider the computation of lower probabilities by
the algorithms in Section \ref{section:Structure}: the basic idea is to go through
all possible configurations of probabilistic facts, and each configuration requires
  runs of cautious/brave inference (that it, it is necessary to check whether all possible
stable models satisfy $\mathbf{Q} \cap \mathbf{E}$, and whether all possible
stable models fail to satisfy $\mathbf{Q}$ while satisfying $\mathbf{E}$. 
Thus the proof strategies employed previously can be adapted to some extent,
by using cautious/brave inference in our Turing machines. We have:

\begin{Theorem}
\label{theorem:CredalComplexity} 
Adopt the credal semantics for {\sc plp}s, and assume that input {\sc plp}s are consistent.
The lower-probability version of inferential complexity is 
$\mathsf{PP}^{\mathsf{NP}^\mathsf{NP}}$-complete for \textsc{plp}s where all predicates have a bound on arity, 
and $\mathsf{PP}^\mathsf{NP}$-complete for propositional \textsc{plp}s.
The lower-probability version of query complexity is $\mathsf{PP}^\mathsf{NP}$-complete.
\end{Theorem}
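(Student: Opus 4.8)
The plan is to reuse the counting-plus-oracle architecture from the proof of Theorem~\ref{theorem:AcyclicBounded}, replacing the polynomial-time logical step with brave and cautious stable-model reasoning. Recall from Section~\ref{section:Structure} that, for a consistent {\sc plp}, the lower conditional probability is governed by $a=\lpr{\mathbf{Q}\cap\mathbf{E}}$ and $d=\upr{\mathbf{Q}^c\cap\mathbf{E}}$, and that $\lpr{\mathbf{Q}|\mathbf{E}}>\gamma$ is equivalent to the linear test $(1-\gamma)a-\gamma d>0$, the degenerate case $\upr{\mathbf{E}}=0$ being resolved by the stated convention. Here $a$ accumulates the probability of total choices $\theta$ for which \emph{cautious} inference certifies $\mathbf{Q}\cap\mathbf{E}$ in \emph{every} stable model of $\mathbf{P}\cup\mathbf{PF}^{\downarrow\theta}$, and $d$ accumulates those for which \emph{brave} inference finds $\mathbf{Q}^c\cap\mathbf{E}$ in \emph{some} stable model; consistency guarantees each $\Gamma(\theta)$ is nonempty so these verdicts are well defined.

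For membership I would build a nondeterministic machine that, as in Theorem~\ref{theorem:AcyclicBounded}, spawns computation paths in numbers proportional to $\pr{\theta}$ using the $\mu/\nu$ gadget, so that rational weights are realized exactly and in polynomial depth. For each total choice the machine queries the appropriate logical oracle and weights its accept/reject verdict by the integers $\nu-\mu$ and $\mu$, so that more than half the paths accept exactly when $(1-\gamma)a-\gamma d>0$; the conditioning and the $\upr{\mathbf{E}}=0$ bookkeeping are handled by the same conditioning construction already used in Theorem~\ref{theorem:AcyclicBounded}. This is a $\mathsf{PP}$ computation relative to an oracle for brave/cautious reasoning, and the oracle level distinguishes the three cases. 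For propositional programs brave/cautious inference is $\mathsf{NP}$/$\mathsf{coNP}$-complete, giving $\mathsf{PP}^\mathsf{NP}$; for bounded-arity programs the polynomially many ground atoms allow guessing a candidate model, but verifying stability (minimality over a reduct with exponentially many ground rules) pushes brave/cautious to $\Sigma^\mathsf{P}_2$/$\Pi^\mathsf{P}_2$, giving $\mathsf{PP}^{\mathsf{NP}^\mathsf{NP}}$ since $\Sigma^\mathsf{P}_2=\mathsf{NP}^\mathsf{NP}$; for query complexity the program is fixed, so the \emph{data} complexity of brave/cautious reasoning is again $\mathsf{NP}$/$\mathsf{coNP}$-complete, giving $\mathsf{PP}^\mathsf{NP}$. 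In each case the polynomial-time glue folds harmlessly into the $\mathsf{PP}$ base, and querying a $\Sigma^\mathsf{P}_2$ oracle subsumes its complement $\Pi^\mathsf{P}_2$.

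For hardness I would reduce from the counting problems of Section~\ref{subsection:Complexity}, matching cautious inference to a universal innermost layer so that lower probability is the natural target. The propositional case reduces from $\#_1\mathsf{DNF}(>)$: place probabilistic facts $0.5::\mathsf{x}_i$ for the counted block $\mathbf{X}_0$, add even-loop choice pairs $\mathsf{y}_j\colonminus\mathbf{not}\;\bar{\mathsf{y}}_j$ and $\bar{\mathsf{y}}_j\colonminus\mathbf{not}\;\mathsf{y}_j$ so the stable models of each total choice range over all assignments to $\mathbf{X}_1$, and derive an atom $\mathsf{sat}$ from the DNF terms; then $\mathsf{sat}$ holds in every stable model iff $\forall\mathbf{X}_1:\phi$, so $\lpr{\mathsf{sat}=\mathsf{true}}>M/2^{|\mathbf{X}_0|}$ decides the instance (the ``$\geq M$ versus $>M-1$'' remark absorbs strictness). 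The bounded-arity case reduces from $\#_2\mathsf{3CNF}(>)$, whose inner formula $\forall\mathbf{X}_1\exists\mathbf{X}_2:\phi$ is exactly $\Pi^\mathsf{P}_2$: I relationalize the standard $\Pi^\mathsf{P}_2$-hardness encoding of cautious inference (variables and clauses as predicates over index constants, as in Theorem~\ref{theorem:AcyclicBounded}, with saturation supplying the inner $\exists\mathbf{X}_2$) and parametrize it by $\mathbf{X}_0$ through $0.5::\mathsf{x}(X)$. The query case reuses the skeleton of Theorem~\ref{theorem:SingleRule} over a fixed vocabulary, adding the cyclic choice pairs so that the fixed rules, fed the clause structure as data, realize the $\exists$/$\forall$ layer of $\#_1\mathsf{DNF}(>)$ under data complexity.

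The main obstacle I anticipate is not the counting layer but guaranteeing \emph{consistency} of every constructed {\sc plp} (a stable model for each total choice) while keeping the brave/cautious verdict faithful to the quantified formula: raw saturation encodings tend to lose stable models precisely when the formula fails, which would both violate the standing consistency hypothesis and corrupt the count. I would therefore engineer each reduction so that some stable model always survives (e.g.\ a dedicated saturating interpretation) and arrange that the \emph{chosen} query atom, rather than the existence of a model, carries the truth value of the formula. Verifying that this instrumentation preserves both consistency and the intended cautious/brave semantics across all total choices is the delicate step, and it is also where the bounded-arity relationalization must be checked so as not to smuggle in unbounded-arity predicates.
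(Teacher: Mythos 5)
Your proposal is, in its main lines, the paper's own proof: membership by a counting machine that splits paths via the $\mu_i/\nu_i$ gadget and weights the cautious verdict (all stable models satisfy $\mathbf{Q}\cap\mathbf{E}$) by $\nu-\mu$ and the brave verdict (some stable model satisfies $\neg\mathbf{Q}$ together with $\mathbf{E}$) by $\mu$, thereby deciding $(\nu-\mu)\,\lpr{\mathbf{Q},\mathbf{E}} > \mu\,\upr{\neg\mathbf{Q},\mathbf{E}}$ with oracles at the level of brave/cautious reasoning ($\mathsf{NP}$ for the propositional and fixed-program cases, $\Sigma^\mathsf{P}_2$ for bounded arity); hardness from $\#_1\mathsf{DNF}(>)$ with even-loop choice pairs so that cautious inference realizes the inner universal quantifier, from $\#_2\mathsf{3CNF}(>)$ relationalized exactly as in Theorem~\ref{theorem:AcyclicBounded}, and, for query complexity, from a fixed program in the style of Theorem~\ref{theorem:SingleRule} augmented with choice pairs. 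Your anticipated consistency obstacle dissolves in these constructions for precisely the reason you give: the choice pairs always admit stable models, the clause layer is stratified once the choices are made, and the designated query atom (not the existence of a model) carries the formula's truth value.

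One step, as stated, would fail. You say the conditioning and the $\upr{\mathbf{E}}=0$ bookkeeping are ``handled by the same conditioning construction already used in Theorem~\ref{theorem:AcyclicBounded}.'' That gadget's correctness rests on additivity of a single probability measure, namely $\pr{\mathsf{aux3}} = (1/(2\gamma))\pr{\mathbf{Q},\mathbf{E}} + (1/2)(1-\pr{\mathbf{E}})$. Under the credal semantics the analogous identity for $\lpro$ is false in general: $\mathsf{aux3}$ holds in \emph{all} stable models of a total choice iff every stable model satisfies $(\mathbf{Q}\wedge\mathbf{E}\wedge\mathsf{aux4})\vee(\neg\mathbf{E}\wedge\mathsf{aux5})$, and different stable models of the same total choice may satisfy different disjuncts, so $\lpr{\mathsf{aux3}}$ involves $\lpr{(\mathbf{Q}\wedge\mathbf{E})\vee\neg\mathbf{E}}$, which only dominates (and need not equal) the sum $\lpr{\mathbf{Q},\mathbf{E}}+\lpr{\neg\mathbf{E}}$ --- lower probabilities are superadditive, not additive. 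The gadget is also unnecessary: your weighted test already decides the conditional threshold, and the degenerate cases (when $\lpr{\mathbf{Q},\mathbf{E}}+\upr{\neg\mathbf{Q},\mathbf{E}}=0$ but $\upr{\mathbf{Q},\mathbf{E}}>0$, where the answer is YES iff $\gamma<1$) are dispatched, as in the paper, by two preliminary oracle calls before the counting phase. A second, smaller point: ``saturation'' is not available for normal programs (it requires disjunctive heads); the inner existential in the bounded-arity reduction is supplied, as in Theorem~\ref{theorem:AcyclicBounded}, by the logical variables in the rule body through grounding and the Clark completion --- which is what your own parenthetical reference correctly points to.
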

\begin{proof}
We first focus on propositional programs.

To prove membership, we describe a polynomial time nondeterministic Turing machine such that 
more than half of its computation paths, on a given input, end up accepting iff the input is an YES instance. 
The machine receives the {\sc plp} $\left<\mathbf{P},\mathbf{PF}\right>$, the pair $(\mathbf{Q},\mathbf{E})$,
and the rational $\gamma\in[0,1]$ as input.
In case $\lpr{\mathbf{Q},\mathbf{E}}+\upr{\neg \mathbf{Q},\mathbf{E}}>0$,
where we use $\neg \mathbf{Q}$ to indicate that $\mathbf{Q}$ does not hold, we have to decide whether:
\[
\frac{\lpr{\mathbf{Q},\mathbf{E}}}{\lpr{\mathbf{Q},\mathbf{E}}+\upr{\neg \mathbf{Q},\mathbf{E}}} > \gamma
\quad \Leftrightarrow \quad
(1-\gamma) \lpr{\mathbf{Q},\mathbf{E}} > \gamma \upr{\neg \mathbf{Q},\mathbf{E}}.
\]
Write $\gamma$ as $\mu/\nu$ for the smallest possible integers $\mu$ and $\nu$, such that $\nu>0$, 
to conclude that our decision is whether 
\begin{equation}
\label{equation:BasicCheck}
(\nu-\mu) \lpr{\mathbf{Q},\mathbf{E}} > \mu \upr{\neg \mathbf{Q},\mathbf{E}}.
\end{equation}

In case $\lpr{\mathbf{Q},\mathbf{E}}+\upr{\neg \mathbf{Q},\mathbf{E}}=0$,
there are a few cases to consider, as indicated by the discussion around 
Expression (\ref{equation:BayesRuleA}).
First, if $\upr{\mathbf{Q}\cap\mathbf{E}}=0$, the machine must return NO (numeric
probability value is not defined);
and if $\upr{\mathbf{Q}\cap\mathbf{E}}>0$, the machine must return NO if $\gamma=1$
and YES if $\gamma<1$.  
One simple way to capture all these cases is this: 
if $\upr{\mathbf{Q},\mathbf{E}}>0$ and $\upr{\neg \mathbf{Q},\mathbf{E}}=0$ and $\gamma<1$,
then return YES and stop; otherwise return YES or NO according to 
inequality in Expression (\ref{equation:BasicCheck}). 
Thus the machine starts by handling the special case in the previous sentence.
If $\gamma<1$, then the machine determines whether 
$\upr{\mathbf{Q} \cap \mathbf{E}}>0$ and 
$\upr{\neg \mathbf{Q} \cap \mathbf{E}}=0$ using the $\mathsf{NP}$ oracle twice.
In each case, the oracle guesses a total choice and determines, using brave inference,
whether there is a stable model that satisfies the event of interest. If there is no
such total choice, then the upper probability is zero. 
So, if $\gamma<1$ and 
$\upr{\mathbf{Q} \cap \mathbf{E}}>0$ and 
$\upr{\neg \mathbf{Q} \cap \mathbf{E}}=0$, 
move into the accepting state; otherwise, move to some state $q$ and continue. 

From $q$, the machine ``goes through'' the possible selections of probabilistic facts,
operating similarly to the second algorithm in Section \ref{section:Structure}. We will use the fact 
that cautious logical reasoning is $\mathsf{coNP}$-complete and
brave logical reasoning is $\mathsf{NP}$-complete  \cite[Table 2]{Eiter2007}.

The machine proceeds from $q$ as in the proof of 
Theorem \ref{theorem:AcyclicBounded}, nondeterministically
selecting whether each fact is kept or discarded. Suppose we have $n$ ground probabilistic facts
$\alpha_1::A_1\bigperiod,\dots,\alpha_n::A_n\bigperiod$. For each probabilistic fact $\alpha_i::A_i\bigperiod$, 
where $\alpha_i=\mu_i/\nu_i$ for smallest integers $\mu_i$ and $\nu_i$ such that
$\nu_i>0$, the machine creates $\mu_i$ computation paths 
out of the decision to keep $A_i$, and $\nu_i-\mu_i$ 
computation paths out of the decision to discard $A_i$. 
Note that after guessing the status of each probabilistic fact the machine may branch in at most $\nu_i$
paths, and the total number of paths out of this sequence of decisions is $\prod_{i=1}^n \nu_i$. 
Denote this latter number by $N$. At this point the machine has a normal logic program, and it
runs cautious inference to determine whether $\mathbf{Q}\cap\mathbf{E}$ holds in every
stable model of this program. Cautious logical reasoning is solved by the $\mathsf{NP}$ oracle.  If indeed 
$\mathbf{Q}\cap\mathbf{E}$ holds in every stable model of this program, the machine moves to state $q_1$.
Otherwise, the machine runs brave inference to determine whether $\mathbf{Q}$ is $\mathsf{false}$ while
$\mathbf{E}$ is $\mathsf{true}$ in some stable model of the program. Brave logical reasoning
is solved by the $\mathsf{NP}$ oracle.
And if indeed $\mathbf{Q}$ is $\mathsf{false}$ while
$\mathbf{E}$ is $\mathsf{true}$ in some stable model of the program, the machine moves to state $q_2$.
Otherwise, the machine moves to state $q_3$. Denote by $N_1$ the number of computation paths that arrive
at $q_1$, and similarly for $N_2$ and $N_3$. From $q_1$ the machine branches into $\nu-\mu$ 
computation paths that all arrive at the accepting state (thus there are $(\nu-\mu)N_1$ 
paths through $q_1$ to the accepting
state). And from $q_2$ the machine branches into $\mu$ computation 
paths that all arrive at the rejecting state.
Finally, from $q_3$ the machine nondeterministically moves either into the accepting or the rejecting state.
Thus the  number of accepting computation paths is larger than the number of rejecting computation paths iff
\[
(\nu-\mu)N_1 + N_3 > \mu N_2 + N_3 \quad \Leftrightarrow \quad 
(\nu-\mu) \frac{N_1}{N} > \mu \frac{N_2}{N}.
\]
Note that, by construction, $N_1/N = \lpr{\mathbf{Q},\mathbf{E}}$ and 
$N_2/N = \upr{\neg \mathbf{Q},\mathbf{E}}$; thus 
the  number of accepting computation paths is larger than the number of rejecting computation paths iff
\[
(\nu-\mu)  \lpr{\mathbf{Q},\mathbf{E}}  > \mu \upr{\neg \mathbf{Q},\mathbf{E}}.
\]
Membership is thus proved.

Hardness is shown by a reduction from the problem $\#_1\mathsf{DNF}(>)$: Decide whether 
the number  of assignments of $\mathbf{X}$ such that the formula
$\phi(\mathbf{X}) = \forall  \mathbf{Y} : \varphi(\mathbf{X},\mathbf{Y})$ holds is strictly larger than $M$,
where $\varphi$ is a propositional formula in DNF with conjuncts $d_1,\ldots,d_k$
(and $\mathbf{X} = \{x_1,\dots,x_n\}$ and $\mathbf{Y}=\{y_1,\dots,y_m\}$ 
are sets of propositional variables).
Introduce $\mathsf{x_i}$ for each $x_i$ and $\mathsf{y_i}$ for each $y_i$, and
encode $\phi$ as follows. Each conjunct $d_j$ is represented by a predicate $\mathsf{d_j}$
and a rule $\mathsf{d_j} \colonminus s_1, \dots, s_r\bigperiod$, where $s_i$ stands for
a properly encoded subgoal: either some $\mathsf{x_i}$, or $\mathbf{not}\;\mathsf{x_i}$,
or some $\mathsf{y_i}$, or $\mathbf{not}\;\mathsf{y_i}$. And then introduce $k$ rules
$\mathsf{dnf} \colonminus \mathsf{d_j}\bigperiod$, one per conjunct. 
Note that for a fixed truth assignment for all $\mathsf{x_i}$ and all $\mathsf{y_i}$,
$\mathsf{dnf}$ is $\mathsf{true}$ iff $\varphi$ holds. Now introduce probabilistic
facts $0.5::\mathsf{x_i}$, one for each $\mathsf{x_i}$. There are then $2^n$ possible
ways to select probabilistic facts. The remaining problem is to encode the univeral
quantifier over the $y_i$. To do so, introduce a pair of rules for each $\mathsf{y_i}$,
\[
\mathsf{y_i} \colonminus \mathbf{not}\;\mathsf{ny_i}\bigperiod \quad \mbox{ and } \quad
\mathsf{ny_i} \colonminus \mathbf{not}\;\mathsf{y_i}\bigperiod.
\]
Thus there are $2^m$ stable models running through assignments of $y_1,\dots,y_m$,
for each fixed selection of probabilistic facts.
By Expression (\ref{equation:Bounds}) we have that $\lpr{\mathsf{dnf}=\mathsf{true}}$
is equal to $\sum_\theta \min f(\theta)/2^n$, where $\theta$ denotes a total choice,
the minimum is over all stable models produced by $\mathbf{P}\cup\mathbf{PF}^{\downarrow\theta}$,
and $f(\theta)$ is a function that yields $1$ if $\mathsf{dnf}$ is $\mathsf{true}$
and $0$ otherwise. Now $\min f(\theta)$ yields $1$ iff
for all $\mathbf{Y}$ we have that $\varphi(\mathbf{X},\mathbf{Y})$ is $\mathsf{true}$,
where $\mathbf{X}$ is fixed by $\theta$. 
Hence $\lpr{\mathsf{dnf}=\mathsf{true}}>M/2^n$ iff the input problem is accepted. 
Hardness is thus proved.

Now consider {\sc plp}s where predicates have bounded arity.

Membership follows using the same construction described for the propositional case,
but using a $\Sigma_2^P$ oracle as cautious logical reasoning is $\Pi_2^P$-complete 
and brave logical reasoning is $\Sigma_2^P$-complete \cite[Table 5]{Eiter2007}.

Hardness is shown by a reduction from $\#_2\mathsf{3CNF}(>)$:
Decide whether the number of assignments of $\mathbf{X}$ such that the formula
$\phi(\mathbf{X}) = \forall \mathbf{Z} : \exists \mathbf{Y} : \varphi(\mathbf{X},\mathbf{Y},\mathbf{Z})$
holds is strictly larger than $M$, where $\varphi$ is a propositional formula 
in 3CNF with clauses $c_1,\dots,c_k$ (and $\mathbf{X}$, $\mathbf{Y}$,
and $\mathbf{Z}$ are sets of propositional variables, and $\mathbf{X}$ contains 
$n$ propositional variables). We proceed {\em exactly} as in the proof of hardness 
for Theorem \ref{theorem:AcyclicBounded}; each propositional variable $y_i$ now appears as a logical
variable $Y_i$, while each propositional variable $x_i$ appears as a predicate $\mathsf{x_i}$.
The novelty is that each propositional variable $z_i$ appears as a predicate $\mathsf{z_i}$ that
receive the same treatment as predicates $\mathsf{y_i}$ in the proof for the propositional case.
So, just repeat the whole translation of the formula $\varphi$ used in the proof 
of Theorem \ref{theorem:AcyclicBounded}, with the only difference that now there may be 
propositional variables $z_i$ in the formula, and these propositional variables appear as 
predicates $\mathsf{z_i}$ in the {\sc plp}. Then introduce, for each $z_i$, a pair of rules 
\[
\mathsf{z_i} \colonminus \mathbf{not}\;\mathsf{nz_i}\bigperiod \quad \mbox{ and } \quad
\mathsf{nz_i} \colonminus \mathbf{not}\;\mathsf{z_i}\bigperiod.
\]
Again, for each fixed selection of probabilistic facts, there are stable models, 
one per assignment of $\mathbf{Z}$. And $\lpr{\mathsf{cnf}=\mathsf{true}}>M/2^n$ 
iff the input problem is accepted. 

Finally, consider query complexity.

Membership follows using the same construction described for the propositional case,
but using a $\mathsf{NP}$ oracle as cautious logical reasoning is $\mathsf{coNP}$-complete 
and brave logical reasoning is $\mathsf{NP}$-complete  \cite[Theorem 5.8]{Dantsin2001}. 

Hardness follows again by a reduction from $\#_1\mathsf{DNF}(>)$; that is, again
we must decide whether the number of assignments of $\mathbf{X}$ such that 
$\forall \mathbf{Y} :  \varphi(\mathbf{X},\mathbf{Y})$ holds is strictly larger than $M$,
where $\varphi$ is a formula in DNF (again, the number of propositional variables in $\mathbf{X}$
is $n$, and the number of propositional variables in $\mathbf{Y}$ is $m$). 
We employ a construction inspired by the proof of Theorem \ref{theorem:SingleRule},
using the following fixed {\sc plp}. Note that $\mathsf{x}$ stands for the propositional variables
in $\mathbf{X}$, where counting operates; $\mathsf{y}$ stands for the propositional
variables in $\mathbf{Y}$, where the universal quantifier operates; $\mathsf{c}$
stands for clauses that are then negated to obtain the DNF:
\[
\begin{array}{c}
0.5::\mathsf{x}(V)\,\bigperiod \\
0.5::\mathsf{select1}(U,V)\bigperiod \qquad
0.5::\mathsf{select2}(U,V)\bigperiod \\
0.5::\mathsf{select3}(U,V)\bigperiod \qquad
0.5::\mathsf{select4}(U,V)\bigperiod \\
\mathsf{y}(V) \colonminus \mathbf{not}\;\mathsf{ny}(V)\bigperiod \qquad 
 \mathsf{ny}(V) \colonminus \mathbf{not}\;\mathsf{y}(V)\bigperiod \\ 
\mathsf{c}(V) \colonminus \mathsf{select1}(U,V), \mathsf{x}(U)\,\bigperiod \\
\mathsf{c}(V) \colonminus \mathsf{select2}(U,V), \mathbf{not}\;\mathsf{x}(U)\,\bigperiod \\
\mathsf{c}(V) \colonminus \mathsf{select3}(U,V), \mathsf{y}(U)\,\bigperiod \\
\mathsf{c}(V) \colonminus \mathsf{select4}(U,V), \mathbf{not}\;\mathsf{y}(U)\,\bigperiod \\
\mathsf{d}(V) \colonminus \mathbf{not}\;\mathsf{c}(V)\bigperiod \\
\mathsf{aux} \colonminus \mathsf{d}(V)\bigperiod \\
\mathsf{dnf} \colonminus \mathbf{not}\;\mathsf{aux}\bigperiod 
\end{array}
\]
By providing $\mathsf{select1}$, $\mathsf{select2}$, $\mathsf{select3}$ and $\mathsf{select4}$
as $\mathbf{Q}$, we can encode the formula $\varphi$. Add to $\mathbf{Q}$ the assignment
$\{\mathsf{dnf}=1\}$, and then $\lpr{\mathbf{Q}} > M/2^{4s^2+s}$, where $s=\max(m,n)$,
iff the input problem is accepted. 
$\Box$
\end{proof}

Theorem \ref{theorem:CredalComplexity}  focuses on the computation of lower probabilities.
We can of course define the {\em upper-probability} versions of inferential
and query complexities, by replacing the decision $\lpr{\mathbf{Q}|\mathbf{E}}>\gamma$
with $\upr{\mathbf{Q}|\mathbf{E}}>\gamma$. 
If anything, this latter decision leads to easier proofs of membership, for all special
cases are dealt with by deciding whether
\[
(\nu-\mu) \upr{\mathbf{Q},\mathbf{E}} > \mu \lpr{\neg\mathbf{Q},\mathbf{E}},
\]
where again $\gamma=\mu/\nu$. All other points in the membership proofs remain
the same, once brave and cautious reasoning are exchanged. 
Several arguments concerning hardness can also be easily adapted. 
For instance, $\mathsf{PP}^\mathsf{NP}$-hardness for propositional
programs can be proved by reducting from $\#_1\mathsf{CNF}(>)$, by encoding a formula in CNF.
Similarly, $\mathsf{PP}^\mathsf{NP}$-hardness for query complexity reduces from 
$\#_1\mathsf{CNF}(>)$ by using the fixed program described in the proof of 
Theorem \ref{theorem:CredalComplexity} without the latter three rules  
(and query with assignments on groundings of~$\mathsf{c}$).  

Theorem \ref{theorem:CredalComplexity}  does not discuss the complexity of {\sc plp}s,
under the credal semantics,  {\em without} a bound on arity. Without such a bound, 
{\em logical} cautious reasoning is $\mathsf{coNEXP}$-complete, so we conjecture that 
exponentially bounded counting Turing machines will be needed here. We leave this
conjecture as an open question.

Finally, our complexity results were obtained assuming that \textsc{plp}s were consistent;
of course, in practice one must  consider the problem of checking consistency.
We have:

\begin{Proposition}
Consistency checking is $\Pi_2^P$-complete for propositional {\sc plp}s
and is $\Pi_3^P$-complete for {\sc plp}s where predicates have a bound on arity.
\end{Proposition}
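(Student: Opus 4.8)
The plan is to read consistency as the assertion that, for \emph{every} total choice $\theta$, the induced normal logic program $\mathbf{P}\cup\mathbf{PF}^{\downarrow\theta}$ has at least one stable model, and to extract the complexity from the quantifier shape $\forall\theta\,\exists(\text{stable model})$. For membership I would argue through the complement: a {\sc plp} is \emph{inconsistent} iff there is a total choice whose induced program has \emph{no} stable model, and a total choice is a polynomially sized object that can be guessed existentially. In the propositional case deciding whether a normal program has a stable model is $\mathsf{NP}$-complete, so ``no stable model'' is in $\mathsf{coNP}$; hence inconsistency is in $\exists\,\mathsf{coNP}=\Sigma_2^P$ and consistency is in $\Pi_2^P$. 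In the bounded-arity case the induced program is still non-ground, and stable-model existence is $\Sigma_2^P$-complete: one guesses a polynomially sized candidate over the (polynomial) Herbrand base and then verifies that it is the least model of its reduct in $\Delta_2^P=\mathsf{P}^\mathsf{NP}$, using the $\mathsf{NP}$ oracle to evaluate a single step of the immediate-consequence operator over the exponentially many groundings; this matches the level of brave reasoning in \cite[Table 5]{Eiter2007}. Then ``no stable model'' is in $\Pi_2^P$, inconsistency is in $\exists\,\Pi_2^P=\Sigma_3^P$, and consistency is in $\Pi_3^P$.

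For $\Pi_2^P$-hardness in the propositional case I would reduce from validity of a quantified Boolean formula $\forall\mathbf{X}\,\exists\mathbf{Y}\,\phi(\mathbf{X},\mathbf{Y})$ with $\phi$ quantifier-free. I make the universal block correspond to the total choices by adding $0.5::\mathsf{x_i}\bigperiod$ for each $x_i\in\mathbf{X}$, so the total choices range over all assignments to $\mathbf{X}$. The existential block is encoded with the usual even-loop choice rules $\mathsf{y_i}\colonminus\mathbf{not}\,\mathsf{ny_i}\bigperiod$ and $\mathsf{ny_i}\colonminus\mathbf{not}\,\mathsf{y_i}\bigperiod$, whose two local options let a stable model guess an assignment to $\mathbf{Y}$. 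Finally I force $\phi$ by defining an atom $\mathsf{sat}$ derived exactly when $\phi$ holds (one rule per disjunct if $\phi$ is in DNF) and adding the integrity constraint $\mathsf{w}\colonminus\mathbf{not}\,\mathsf{w},\mathbf{not}\,\mathsf{sat}\bigperiod$, which destroys every candidate model with $\mathsf{sat}$ false. For fixed $\mathbf{X}$ the program then has a stable model iff some $\mathbf{Y}$ satisfies $\phi$, so the {\sc plp} is consistent iff $\forall\mathbf{X}\,\exists\mathbf{Y}\,\phi$ holds.

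For $\Pi_3^P$-hardness in the bounded-arity case I would reduce from validity of $\forall\mathbf{X}\,\exists\mathbf{Y}\,\forall\mathbf{Z}\,\psi(\mathbf{X},\mathbf{Y},\mathbf{Z})$ with $\psi$ in $3$CNF, keeping the treatments of $\mathbf{X}$ (probabilistic facts) and $\mathbf{Y}$ (even-loop choice rules) from the propositional construction. The new ingredient, and the step I expect to be the main obstacle, is encoding the innermost $\forall\mathbf{Z}$ inside stable-model existence while respecting the arity bound. The idea is to detect a \emph{falsifying} assignment of $\mathbf{Z}$ rather than verify all of them: turn each $z_i$ into a logical variable and, reusing the clause-by-clause enumeration technique from the proof of Theorem \ref{theorem:AcyclicBounded}, introduce for each clause $c_j$ a predicate $\mathsf{cf_j}$ of arity at most three whose argument tuple lists the $\mathbf{Z}$-variables of $c_j$, designed so that $\mathsf{cf_j}(\mathbf{z}_j)$ holds exactly when that partial assignment makes every literal of $c_j$ false (the $\mathbf{X}$- and $\mathbf{Y}$-literals being read off through subgoals $[\mathbf{not}]\,\mathsf{x_i}$ and $[\mathbf{not}]\,\mathsf{y_i}$). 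A rule $\mathsf{bad}\colonminus\mathsf{cf_j}(\mathbf{Z}_j)\bigperiod$ projects the $\mathbf{Z}$-variables out existentially, so $\mathsf{bad}$ becomes true precisely when some $\mathbf{Z}$ falsifies $\psi$, i.e.\ when $\forall\mathbf{Z}\,\psi$ fails; the constraint $\mathsf{w}\colonminus\mathbf{not}\,\mathsf{w},\mathsf{bad}\bigperiod$ then keeps only the $\mathbf{Y}$-guesses for which $\forall\mathbf{Z}\,\psi$ holds. All introduced predicates are nullary or of arity at most three and the program is of polynomial size. The verification to carry out is that there are no unintended stable models: once a total choice and a $\mathbf{Y}$-guess are fixed, the $\mathsf{cf_j}$, $\mathsf{bad}$, and constraint parts are negation-stratified above the choice atoms, so for each fixed $\mathbf{X}$ a stable model exists iff $\exists\mathbf{Y}\,\forall\mathbf{Z}\,\psi$. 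Consequently consistency over all total choices is equivalent to $\forall\mathbf{X}\,\exists\mathbf{Y}\,\forall\mathbf{Z}\,\psi$, giving $\Pi_3^P$-hardness.
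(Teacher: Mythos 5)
Your membership arguments and your propositional hardness construction are essentially the paper's own proof: consistency is read as ``for every total choice the induced program has a stable model,'' with stable-model existence $\mathsf{NP}$-complete propositionally and $\Sigma_2^P$-complete under bounded arity, giving $\Pi_2^P$ and $\Pi_3^P$ membership; and the propositional reduction (probabilistic facts for $\forall\mathbf{X}$, even-loop choice rules for the existential block, a constraint of the form $\mathsf{w}\colonminus\mathbf{not}\;\mathsf{w},\mathbf{not}\;\mathsf{sat}\bigperiod$) is exactly the paper's, which reduces from $\forall\mathbf{X}:\exists\mathbf{Z}:\varphi(\mathbf{X},\mathbf{Z})$ with $\varphi$ in 3CNF. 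One caution even there: your parenthetical ``one rule per disjunct if $\phi$ is in DNF'' picks the wrong polarity, since $\forall\mathbf{X}\,\exists\mathbf{Y}\,\phi$ with $\phi$ in DNF collapses to $\mathsf{coNP}$ ($\exists\mathbf{Y}$ over a DNF is decided disjunct-by-disjunct); you need a CNF matrix there, encoded clause-by-clause as in the paper.

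The bounded-arity hardness argument has a genuine gap: your source problem sits at the wrong level of the hierarchy. Validity of $\forall\mathbf{X}\,\exists\mathbf{Y}\,\forall\mathbf{Z}\,\psi$ with $\psi$ in \emph{3CNF} is not $\Pi_3^P$-hard --- it is decidable in $\Pi_2^P$, because an innermost universal quantifier over a CNF matrix can be eliminated in polynomial time: $\forall\mathbf{Z}$ distributes over the conjunction of clauses, and for each clause (not tautological in its $\mathbf{Z}$-literals) one may set all its $\mathbf{Z}$-literals false, so $\forall\mathbf{Z}\,c_j$ is equivalent to $c_j$ with the $\mathbf{Z}$-literals deleted. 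Hence the whole formula collapses to a $\forall\exists$-QBF, and your reduction, however correct its encoding, can only certify $\Pi_2^P$-hardness. Tellingly, this collapse is precisely what makes your clause-local trick possible: ``some $\mathbf{Z}$ falsifies the CNF'' iff ``some single clause has its $\mathbf{Z}$-part falsifiable,'' which is why arity-$\le 3$ predicates $\mathsf{cf_j}$ and separate rules $\mathsf{bad}\colonminus\mathsf{cf_j}(\mathbf{Z}_j)\bigperiod$ suffice --- the inner $\forall$ never genuinely couples the clauses. To get true $\Pi_3^P$-hardness the matrix must be in DNF with innermost $\forall$; equivalently, one reduces, as the paper does, from $\forall\mathbf{X}:\exists\mathbf{Z}:\neg\exists\mathbf{Y}:\varphi$ with $\varphi$ in 3CNF. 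Then the quantity to be forbidden is ``some assignment of the logical variables satisfies \emph{all} clauses simultaneously,'' and its detection requires a single rule whose body joins all clause predicates over shared logical variables, $\mathsf{cnf}\colonminus\mathsf{c_1}(\mathbf{Y}_1),\dots,\mathsf{c_m}(\mathbf{Y}_m)\bigperiod$, killed by $\mathsf{clash}\colonminus\mathbf{not}\;\mathsf{clash},\mathsf{cnf}\bigperiod$. Predicate arities stay bounded (only rule bodies grow), but the detection is no longer clause-by-clause; your proposal as written does not reach this construction.
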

\begin{proof}
Membership of consistency checking of a propositional {\sc plp} obtains by verifying
whether logical consistency holds for each total choice of
probabilistic facts, and this can be accomplished by deciding whether
all total choices satisfy logical consistency (logical consistency
  checking for this language is $\mathsf{NP}$-complete \cite[Table 1]{Eiter2007}).
An analogue reasoning  leads to membership in $\Pi_3^P$ for {\sc plp}s
with a bound on arity, as logical consistency checking with bounded arity is 
$\Sigma^P_2$-complete \cite[Table 4]{Eiter2007}.

Now consider hardness in the propositional case. Take a sentence $\phi$ equal to
$\forall \mathbf{X} : \exists \mathbf{Z} : \varphi(\mathbf{X},\mathbf{Z})$, 
where $\phi$ is a propositional formula in 3CNF with clauses $c_1,\dots,c_k$, and
vectors of propositional variables $\mathbf{X}$ and $\mathbf{Z}$.
Deciding the satisfiability of such a formula is a $\Pi^P_2$-complete problem \cite{Marx2011}.
So, introduce a predicate $\mathsf{x_i}$ for each $x_i$, associated with a
probabilistic fact $0.5::\mathsf{x_i}\bigperiod$, and a predicate
$\mathsf{z_i}$ for each $z_i$, associated with rules
\[
\mathsf{z_i} \colonminus \mathbf{not}\;\mathsf{nz_i}\bigperiod \quad \mbox{ and } \quad
\mathsf{nz_i} \colonminus \mathbf{not}\;\mathsf{z_i}\bigperiod.
\]
Now encode the formula $\phi$ as follows. For each clause $c_j$ with three literals,
add the rules 
$\mathsf{c_j} \colonminus \ell_1\bigperiod$, 
$\mathsf{c_j} \colonminus \ell_2\bigperiod$, and 
$\mathsf{c_j} \colonminus \ell_3\bigperiod$,
where each $\ell_i$ stands for a subgoal containing a predicate in 
$\mathsf{x_1},\dots,\mathsf{x_n}$ or in $\mathsf{z_1},\dots,\mathsf{z_m}$, 
perhaps preceded by $\mathbf{not}$, as appropriate (mimicking a similar
construction in the proof of Theorem \ref{theorem:AcyclicBounded}).
Then add a rule
\[
\mathsf{cnf} \colonminus \mathsf{c_1},\dots,\mathsf{c_k}\bigperiod
\]
to build the formula $\varphi$, and an additional rule
\[
\mathsf{clash} \colonminus \mathbf{not}\;\mathsf{clash}, \mathbf{not}\;\mathsf{cnf}\bigperiod
\]
to force $\mathsf{cnf}$ to be $\mathsf{true}$ in any stable model. 
The question of whether this program has a stable model for every configuration
of $\mathbf{X}$ then solves the original question about satisfiability of $\phi$.

Finally, consider hardness in the relational (bounded arity) case. Take a sentence $\phi$ equal to 
$\forall \mathbf{X} : \exists \mathbf{Z} : \neg \exists \mathbf{Y} : \varphi(\mathbf{X},\mathbf{Y},\mathbf{Z})$,
where $\varphi$ is a propositional formula in 3CNF; deciding the satisfiability of this formula
is a $\Pi^P_3$-complete problem \cite{Marx2011}. 
Denote $\neg \exists \mathbf{Y} : \phi(\mathbf{X},\mathbf{Y},\mathbf{Z})$ by $\phi'$. 
The strategy here will be to combine the constructs in the previous paragraph (propositional case)
with the proof of hardness for Theorem \ref{theorem:AcyclicBounded}. That is, 
introduce a predicate $\mathsf{x_i}$ for each $x_i$, associated with a
probabilistic fact $0.5::\mathsf{x_i}\bigperiod$, and a predicate
$\mathsf{z_i}$ for each $z_i$, again associated with rules
\[
\mathsf{z_i} \colonminus \mathbf{not}\;\mathsf{nz_i}\bigperiod \quad \mbox{ and } \quad
\mathsf{nz_i} \colonminus \mathbf{not}\;\mathsf{z_i}\bigperiod.
\]
And then encode each clause $c_j$ of $\varphi$ by introducing a predicate $\mathsf{c_j}(\mathbf{Y}_j)$,
where $\mathbf{Y}_j$ is exactly as in the proof of Theorem \ref{theorem:AcyclicBounded}. And
as in that proof, introduce
\[
\mathsf{cnf} \colonminus \mathsf{c_1}(\mathbf{Y}_1), \mathsf{c_2}(\mathbf{Y}_2), 
     \dots, \mathsf{c_m}(\mathbf{Y}_m)\bigperiod
\]
and force $\phi'$ to be $\mathsf{false}$ by introducing:
\[
\mathsf{clash} \colonminus \mathbf{not}\;\mathsf{clash}, \mathsf{cnf}\bigperiod.
\]
The question of whether this program has a stable model for every configuration of
$\mathbf{X}$ then solves the original question about satisfiability of $\varphi$.
\end{proof}

Again we have left open the complexity of consistency checking for {\sc plp}s without
a bound on predicate arity. This question should be addressed in future work.

\section{The complexity of inference under the well-founded semantics}\label{section:ComplexityWellFounded}

In this section we investigate the complexity of {\em probabilistic} inference under the
well-founded semantics.  As before, we examine propositional and relational programs,
and within the latter we look at programs with a bound on predicate arity.
Note that a bound on predicate arity forces each predicate to have a polynomial 
number of groundings, but the grounding of the program may still be exponential 
(as there is no bound on the number of atoms that appear in a single rule, each 
rule may have many logical variables, thus leading to many groundings). 
 
\begin{Theorem}\label{theorem:Complexity}
Adopt the well-founded semantics for {\sc plp}s. 
The inferential complexity of {\sc plp}s is $\mathsf{PEXP}$-complete;
it is $\mathsf{PP}^\mathsf{NP}$-complete if the {\sc plp} has a bound on the arity of its predicates;
it is $\mathsf{PP}$-complete if the {\sc plp} is propositional. 
The query complexity of {\sc plp}s is $\mathsf{PP}$-complete.
\end{Theorem}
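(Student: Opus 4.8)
The plan is to mirror the proofs of Theorems~\ref{theorem:AcyclicBounded}, \ref{theorem:AcyclicEXP}, and \ref{theorem:SingleRule}, exploiting the one feature that makes the well-founded semantics easier than the credal semantics: every total choice $\theta$ induces a \emph{single} well-founded model of $\mathbf{P}\cup\mathbf{PF}^{\downarrow\theta}$, so the semantics is a unique probability distribution over (three-valued) models and no brave/cautious distinction is required. I would handle inferential complexity in the propositional, bounded-arity, and general cases, then query complexity, arguing membership and hardness separately.

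For membership I would build a counting Turing machine that branches over total choices exactly as in the proof of Theorem~\ref{theorem:AcyclicBounded}: for each ground probabilistic fact $\alpha_i::A_i\bigperiod$ with $\alpha_i=\mu_i/\nu_i$ it spawns $\mu_i$ paths keeping $A_i$ and $\nu_i-\mu_i$ paths discarding it, so that path counts realize the product measure on $\Theta$. On each branch the machine computes the unique well-founded model of the induced normal program and accepts according to whether $\mathbf{Q}$ (and $\mathbf{E}$) hold in it; the cost per branch is the cost of the \emph{logical} well-founded computation. For propositional programs this is polynomial (the well-founded model is the least fixpoint of the monotone operator $\mathbb{W}_\mathbf{P}$, reached in at most $|\mbox{Herbrand base}|$ steps), giving $\mathsf{PP}$; the same bound gives query complexity $\mathsf{PP}$, since fixing the program bounds the grounding polynomially. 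For bounded arity the Herbrand base is still polynomial, so the nested-fixpoint characterization $\mathbb{LFT}_\mathbf{P}(\mathbb{LFT}_\mathbf{P}(\cdot))$ recalled in Section~\ref{section:Basics} runs through polynomially many iterations, each of which need only decide whether some grounding of a rule fires --- an $\mathsf{NP}$ query (a conjunctive-query/homomorphism test over a polynomial domain with unboundedly many body variables). Hence the well-founded model is computable in $\mathsf{P}^{\mathsf{NP}}$, and the machine lives in $\mathsf{PP}^{\mathsf{P}^{\mathsf{NP}}}=\mathsf{PP}^{\mathsf{NP}}$. For general programs the grounding is exponential and the same fixpoint computation takes exponential time, placing the machine in $\mathsf{PEXP}$. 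The conditional decision $\pr{\mathbf{Q}|\mathbf{E}}>\gamma$ is reduced to an unconditional path-count comparison by the auxiliary-rule gadget of Theorem~\ref{theorem:AcyclicBounded}, which adds only fresh sink atoms $\mathsf{aux}_i$ and two fresh probabilistic facts and so leaves the well-founded evaluation of the original atoms untouched.

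For hardness I would reuse the earlier reductions verbatim, observing that each produces an \emph{acyclic} (hence stratified) program, on which the well-founded model coincides with the unique stable model and with Sato's distribution semantics, so the computed probability is identical under the well-founded semantics. Thus $\mathsf{PP}$-hardness of the propositional case follows from Theorem~\ref{theorem:PropositionalAcyclic}; $\mathsf{PP}^{\mathsf{NP}}$-hardness of the bounded-arity case follows from the $\#_1\mathsf{3CNF}(>)$ reduction in Theorem~\ref{theorem:AcyclicBounded}; $\mathsf{PEXP}$-hardness of the general case follows from the Turing-machine encoding in Theorem~\ref{theorem:AcyclicEXP}; and $\mathsf{PP}$-hardness of query complexity follows from the fixed acyclic program of Theorem~\ref{theorem:SingleRule}.

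I expect the main obstacle to be the bounded-arity membership: one must argue carefully that computing the three-valued well-founded model sits in $\mathsf{P}^{\mathsf{NP}}$ and no higher. The crux is that, although a single rule may carry unboundedly many logical variables (so the grounding can be exponential), bounded arity keeps the Herbrand base polynomial, so both the outer monotone iteration and the inner least fixpoint of each reduct terminate in polynomially many rounds, and the only super-polynomial work --- testing whether a rule body admits a satisfying grounding --- is confined to an $\mathsf{NP}$ oracle; the collapse $\mathsf{PP}^{\mathsf{P}^{\mathsf{NP}}}=\mathsf{PP}^{\mathsf{NP}}$ then finishes the bound. A secondary point is purely bookkeeping: since $\mathbf{Q}$ and $\mathbf{E}$ may assign the value $\mathsf{undefined}$, the acceptance test must be evaluated against the three-valued model, but this is a polynomial check performed after the model is computed and does not affect any complexity class.
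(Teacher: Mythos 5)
Your architecture is the same as the paper's: membership via a counting Turing machine that branches over total choices as in Theorem~\ref{theorem:AcyclicBounded} and pays, per branch, the cost of \emph{logical} well-founded inference (polynomial for propositional programs, $\mathsf{P}^{\mathsf{NP}}$ for bounded arity, exponential in general), and hardness by reusing the acyclic reductions of Theorems~\ref{theorem:PropositionalAcyclic}, \ref{theorem:AcyclicBounded}, \ref{theorem:AcyclicEXP} and \ref{theorem:SingleRule}, which is sound because the well-founded and stable semantics coincide on acyclic and stratified programs. Your inline bounded-arity argument is exactly the paper's Theorem~\ref{theorem:Logical} (stated there as a separate result on logical inference), down to the nested fixpoint $\mathbb{LFT}_\mathbf{P}(\mathbb{LFT}_\mathbf{P}(\cdot))$, the polynomial iteration counts, and the $\mathsf{NP}$ test for whether some rule grounding fires.

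The genuine gap is the conditioning step. The auxiliary-rule gadget of Theorem~\ref{theorem:AcyclicBounded} does not transfer to the three-valued setting. First, an assignment $\{A=\mathsf{undefined}\}$ (which the well-founded semantics must support; cf.\ Example~\ref{example:CyclicProgram}) cannot be written as a subgoal of $\mathsf{aux1}$ or $\mathsf{aux2}$ at all. Second, and worse, even for two-valued queries the arithmetic breaks: the rule $\mathsf{aux3} \colonminus \mathbf{not}\;\mathsf{aux2}, \mathsf{aux5}\bigperiod$ contributes $\frac{1}{2}\pr{\mathsf{aux2}=\mathsf{false}}$ rather than $\frac{1}{2}(1-\pr{\mathbf{E}})$, and these differ whenever $\mathsf{aux2}$ is $\mathsf{undefined}$ with positive probability. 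Concretely, take the {\sc plp} with $0.5::\mathsf{r}\bigperiod$ and rules $\mathsf{a} \colonminus \mathsf{r}, \mathbf{not}\;\mathsf{b}\bigperiod$, $\mathsf{b} \colonminus \mathsf{r}, \mathbf{not}\;\mathsf{a}\bigperiod$, $\mathsf{a} \colonminus \mathbf{not}\;\mathsf{r}\bigperiod$, with $\mathbf{Q}=\{\mathsf{b}=\mathsf{false}\}$, $\mathbf{E}=\{\mathsf{a}=\mathsf{true}\}$, $\gamma=3/4$: here $\pr{\mathbf{Q}|\mathbf{E}}=1>\gamma$, yet $\pr{\mathsf{aux3}=\mathsf{true}} = \frac{2}{3}\cdot\frac{1}{2}+0 = \frac{1}{3} < \frac{1}{2}$, because on the total choice keeping $\mathsf{r}$ both $\mathsf{a}$ and $\mathsf{b}$ are $\mathsf{undefined}$ and neither body of $\mathsf{aux3}$ can become true; so the gadget answers NO on a YES instance. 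The repair is the device the paper itself uses in the proof of Theorem~\ref{theorem:CredalComplexity}: drop the gadget, and on each branch, after computing the well-founded model, classify it as (i) $\mathbf{Q}$ and $\mathbf{E}$ both hold, (ii) $\mathbf{E}$ holds but $\mathbf{Q}$ fails, or (iii) otherwise; writing $\gamma=\mu/\nu$, spawn $\nu-\mu$ accepting paths in case (i), $\mu$ rejecting paths in case (ii), and one accepting plus one rejecting path in case (iii). Then more than half of the paths accept iff $(\nu-\mu)\pr{\mathbf{Q},\mathbf{E}} > \mu\;\pr{\mathbf{E}\mbox{ holds}, \mathbf{Q}\mbox{ fails}}$, which is exactly $\pr{\mathbf{Q}|\mathbf{E}}>\gamma$, and the convention of rejecting when $\pr{\mathbf{E}}=0$ falls out automatically. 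With that substitution the rest of your proof, including the query-complexity claim (which the paper's own proof text leaves implicit), goes through.
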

\begin{proof}
Consider first propositional {\sc plp}s. Such a {\sc plp} can encode any Bayesian network 
over binary variables \cite{Poole93AI}, so inference is $\mathsf{PP}$-hard. Membership 
is proved by adapting the arguments in the proof of Theorem \ref{theorem:AcyclicBounded};
whenever a total choice is selected by the nondeterministic Turing machine, logical inference
(under the well-founded semantics) is run with polynomial effort in the resulting propositional 
normal logic program  \cite{Dantsin2001}.

Consider now {\sc plp}s with logical variables. 
Membership follows from the same argument in the previous paragraph, using the 
fact that inference in normal logic programs under the well-founded semantics is 
in $\mathsf{EXP}$ \cite{Dantsin2001}. Hardness follows from the fact that inferential 
complexity is $\mathsf{PEXP}$-hard already for acyclic programs
(Theorem \ref{theorem:AcyclicEXP}). 

Now consider {\sc plp}s with a bound on the arity of predicates. 
Membership follows from the same argument in the previous paragraphs, using 
the fact that inference in normal logic programs with a bound on the arity of 
predicates is  under the well-founded semantics is in in $\mathsf{P}^\mathsf{NP}$, 
as proved in Theorem \ref{theorem:Logical}.
Hardness for $\mathsf{PP}^\mathsf{NP}$ follows from the fact that inference complexity 
of {\sc plp}s under the stable model semantics is $\mathsf{PP}^\mathsf{NP}$-hard even 
for stratified programs, and noting that for stratified programs the
stable model and the well-founded semantics agree.
\end{proof}

The proof of Theorem \ref{theorem:Complexity}, in the case of {\sc plp}s with bound
on predicate arity, uses the following result. Note that this is a result on logical 
inference; however it does not seem to be found in current literature.
 
\begin{Theorem}\label{theorem:Logical}
Consider the class of normal logic programs with a bound on the arity of predicates,
and consider the problem of deciding whether a literal is in the well-founded model
of the program. This decision problem is $\mathsf{P}^\mathsf{NP}$-complete.
\end{Theorem}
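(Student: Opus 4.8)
The plan is to prove membership in $\mathsf{P}^\mathsf{NP}$ and hardness separately, and I expect the membership direction to carry all the difficulty.

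For membership, the key observation is that a bound on predicate arity forces the Herbrand base to contain only polynomially many ground atoms (a predicate of arity at most $c$ has at most $n^c$ groundings, where $n$ is the number of constants), so every interpretation and partial interpretation is a polynomial-size object. The obstacle, which I would flag explicitly, is that the \emph{grounded} program may still be exponential, since a single rule may contain unboundedly many logical variables. To sidestep this I would compute the well-founded model through the constructive alternating-fixpoint characterization recalled in the excerpt: iterate the monotone operator $\mathbb{LFT}_\mathbf{P}(\mathbb{LFT}_\mathbf{P}(\cdot))$ from $\emptyset$, taking its least fixpoint (the $\mathsf{true}$ atoms) and the complement of its greatest fixpoint (the $\mathsf{false}$ atoms). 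Because the underlying lattice is the powerset of a polynomial-size Herbrand base, the outer iteration, and each inner least fixpoint of $\mathbb{T}_{\mathbf{P}^\mathcal{I}}$, stabilize after at most polynomially many steps.

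The only step that is not manifestly polynomial is a single application of the immediate-consequence operator, and this is exactly where the $\mathsf{NP}$ oracle is spent. Given a partial interpretation $\mathcal{I}$ and a current set $J$ of derived atoms, I must decide, for each ground atom $A$ in the polynomial-size Herbrand base, whether $A \in \mathbb{T}_{\mathbf{P}^\mathcal{I}}(J)$: that is, whether some rule of $\mathbf{P}$ admits a grounding whose head is $A$, whose positive body atoms all lie in $J$, and whose negative subgoals $\mathbf{not}\,B$ all have $B$ $\mathsf{false}$ in $\mathcal{I}$ (so the rule survives in the reduct). This is an existential question over groundings: a grounding is an assignment of constants to the rule's variables, a polynomial-size witness, and verifying the three conditions against $J$ and $\mathcal{I}$ takes polynomial time once the grounding is guessed. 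Hence each such question lies in $\mathsf{NP}$, one application of the operator is resolved by polynomially many $\mathsf{NP}$-oracle calls, and — since only polynomially many applications occur — the entire computation of the well-founded model runs in polynomial time with an $\mathsf{NP}$ oracle. Reading off whether the queried literal belongs to the computed model is then immediate, giving membership in $\mathsf{P}^\mathsf{NP}$.

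For hardness I would restrict to acyclic programs, a special case of the class considered. For an acyclic program the well-founded model is two-valued and coincides with the unique stable model, so deciding whether a literal lies in the well-founded model is precisely deciding its truth value in that unique model. For acyclic normal programs with bounded predicate arity this decision is already $\Delta_2^P$-complete, i.e.\ $\mathsf{P}^\mathsf{NP}$-complete, as used in the proof of Theorem \ref{theorem:AcyclicBounded}. The general problem therefore inherits $\mathsf{P}^\mathsf{NP}$-hardness, and together with the membership argument this yields $\mathsf{P}^\mathsf{NP}$-completeness. The main obstacle throughout is the membership bookkeeping: one must never materialise the exponential grounding and must confine the exponential blow-up entirely to the ``is there a firing grounding?'' query, while checking that the polynomial size of the Herbrand base guarantees polynomially many operator applications.
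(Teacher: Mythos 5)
Your proposal is correct and follows essentially the same route as the paper's proof: membership via iterating the alternating-fixpoint operator $\mathbb{LFT}_\mathbf{P}(\mathbb{LFT}_\mathbf{P}(\cdot))$ over the polynomial-size Herbrand base, with an $\mathsf{NP}$-oracle call per ground atom to decide whether some rule grounding fires in one application of $\mathbb{T}_{\mathbf{P}^\mathcal{I}}$, and hardness by transferring the $\Delta_2^P$-hardness of stable-model reasoning (via Eiter et al.) through a subclass --- you use acyclic, the paper uses stratified programs --- on which the well-founded and stable semantics coincide. If anything, your statement of the oracle query (existence of a grounding with head $A$, positive subgoals in $J$, negative subgoals false in $\mathcal{I}$) is slightly more explicit than the paper's.
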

\begin{proof}
Hardness follows from the hardness of logical inference with stratified programs
under the stable model semantics \cite{Eiter2007}. 
Membership requires more work. 
We use the monotone operator 
$\mathbb{LFT}_\mathbf{P}(\mathbb{LFT}_\mathbf{P}(\mathcal{I}))$. Consider
the algorithm that constructs the well-founded extension by starting with the empty
interpretation and by iterating $\mathbb{LFT}_\mathbf{P}(\mathbb{LFT}_\mathbf{P}(\mathcal{I}))$. 
As there are
only polynomially-many groundings, there are at most a polynomial number of
iterations.  Thus in essence we need to iterate the operator
$\mathbb{LFT}_\mathbf{P}(\mathcal{I})$; thus, focus attention on the computation
of $\mathbb{LFT}_\mathbf{P}(\mathcal{I})$. The latter computation 
consists of finding the least fixpoint of $\mathbb{T}_{\mathbf{P}^\mathcal{I}}$.
So we must focus on the effort involved in computing
the least fixpoint of $\mathbb{T}_{\mathbf{P}^\mathcal{I}}$. 
Again, there are at most a polynomial number of iterations of $\mathbb{T}_{\mathbf{P}^\mathcal{I}}$
to be run. So, focus on a single iteration of $\mathbb{T}_{\mathbf{P}^\mathcal{I}}$.
Note that any interpretation $\mathcal{I}$ has polynomial size; however, we cannot
explicitly generate the reduct $\mathcal{P}^\mathcal{I}$ as it may have exponential size.
What we need to do then is, for each grounded
atom $A$, to decide whether there is a rule whose grounding makes the atom $A$
$\mathsf{true}$ in $\mathbb{T}_{\mathbf{P}^\mathcal{I}}$. So we must make a 
nondeterministic choice per atom (the choice has the size of logical variables in a 
rule, a polynomial number). Hence  by running a polynomial number of nondeterministic
choices, we obtain an iteration of $\mathbb{T}_{\mathbf{P}^\mathcal{I}}$; by running
a polynomial number of such iterations, we obtain a single iteration
of $\mathbb{LFT}_\mathbf{P}(\mathcal{I})$; 
and by running a polynomial number of
such iterations, we build the well-founded model. Thus we are within
$\mathsf{P}^\mathsf{NP}$ as desired.
\end{proof}

Obviously, for the well-founded semantics there are no concerns about consistency:
{\em every} normal logic program has a well-founded semantics, so {\em every} {\sc plp}
has one and only one well-founded semantics.

\section{Conclusion}
\label{section:Conclusion}

We can summarize our contributions as follows. First, we have identified and compared
the main ideas between the credal and the well-founded semantics for {\sc plp}s based
on probabilistic facts and normal logic programs. Other semantics may be studied in 
future work, but the credal and the well-founded ones seem to be the most important
starting point. Second, we have shown that the credal semantis is intimately related
to infinitely monotone Choquet capacitites; precisely: the credal semantics of a consistent
{\sc plp}  is a credal set that dominates a infinitely monotone Choquet capacity. 
Third, we have derived the inferential and query complexity of acyclic, stratified and
general {\sc plp}s both under the credal and the well-founded semantics. 
These results on complexity are summarized in Table \ref{table:Complexity}; note that
{\sc plp}s reach non-trivial classes in the counting hierarchy.
It is interesting to note that acyclic {\sc plp}s with a bound on arity go beyond Bayesian
networks in the complexity classes they can reach. 

\begin{table}
\begin{center}
\begin{tabular}{|c||c|c|c|c|} \hline 
            & Propositional & $\mathsf{PLP_b}$ & $\mathsf{PLP}$ & Query \\
\hline \hline & & & & \\[-4mm]
Acyclic & $\mathsf{PP}$ & $\mathsf{PP}^\mathsf{NP}$ & $\mathsf{PEXP}$ & $\mathsf{PP}$ \\
\hline & & & & \\[-4mm]
Stratified & $\mathsf{PP}$ & $\mathsf{PP}^\mathsf{NP}$ & $\mathsf{PEXP}$ & $\mathsf{PP}$ \\
\hline & & & & \\[-4mm]
General, credal & $\mathsf{PP}^\mathsf{NP}$ & $\mathsf{PP}^{\mathsf{NP}^\mathsf{NP}}$ & ? & $\mathsf{PP}^\mathsf{NP}$ \\
\hline & & & & \\[-4mm]
General, well-founded & $\mathsf{PP}$ & $\mathsf{PP}^\mathsf{NP}$ & $\mathsf{PEXP}$ & $\mathsf{PP}$ \\
\hline
\end{tabular}
\end{center}
\caption{Complexity results. All entries refer to completeness with respect to many-one reductions.
Columns ``Propositional'',  ``$\mathsf{PLP_b}$'', and ``$\mathsf{PLP}$
respectively refer to the inferential complexity of propositional {\sc plp}s, the inferential
complexity of {\sc plp}s with a bound on predicate arity, and {\sc plp}s with no bound on
predicate arity. Column ``Query'' refers to the query complexity of relational {\sc plp}s.}
\label{table:Complexity}
\end{table}

Concerning complexity, acyclic and stratified {\sc plp}s have identical credal and well-founded
semantics, while general {\sc plp}s may have different credal and well-founded semantics. 
For normal logic programs (not probabilistic ones), the well-founded semantics is known
to stay within lower complexity classes than the credal semantics \cite{Dantsin2001};
the same phenomenon persists in the probabilistic case. Indeed, the well-founded semantics
for general {\sc plp}s reaches the same complexity classes as for acyclic {\sc plp}s. One
might take this as an argument for the well-founded semantics, on top of the fact that
the well-founded semantics is defined for {\em any} {\sc plp}. On the other hand, our
analysis in Section \ref{section:Semantics} favors, at least conceptually, the credal semantics,
despite the fact that it may not be defined for some {\sc plp}s (in fact one might argue
that no semantics should be defined for such {\sc plp}s). It is much easier to understand the
meaning of {\sc plp}s using the credal semantics than the well-founded semantics, as the
latter mixes three-valued logic and probabilities in a non-trivial way. 
We suggest that more study is needed to isolate those programs where $\mathsf{undefined}$ values
are justified and can be properly mixed with probabilities. Also, the well-founded semantics 
may be taken as an approximation of the set of possible probability models. 
In any case, we find that Lukasiewicz's credal semantics is quite attractive and not as well known
as it deserves to be.

We could include in the analysis of {\sc plp}s a number of useful
constructs that have been adopted in answer set programming  \cite{Eiter2009Primer}.
There, {\em classic negation}, such as $\neg \mathsf{wins}(X)$,
is allowed on top of $\mathbf{not}$. Also, constraints, such as $\colonminus \phi$, 
are allowed to mean that $\phi$ is $\mathsf{false}$.
More substantial is the presence,
in answer set programming, of {\em disjunctive heads}. 
With such a machinery,
we can for instance rewrite the rules in Example~\ref{example:ProbabilisticBasic} as a single rule
$\mathsf{single}(X) \vee \mathsf{husband}(X) \colonminus \mathsf{man}(X).$,
and the rules in Example~\ref{example:GraphColoring} as the pair:
\[
\begin{array}{c}
\mathsf{color}(V,\mathsf{red}) \vee
\mathsf{color}(V,\mathsf{yellow}) \vee
\mathsf{color}(V,\mathsf{green}) \colonminus \mathsf{vertex}(V). \\
\colonminus  
	\mathsf{edge}(V,U), \mathsf{color}(V,C), \mathsf{color}(U,C). 
\end{array}
\]

Now the point to be made is this. Suppose we have a probabilistic logic program
$\left<\mathbf{P},\mathbf{PF}\right>$, where as before we have independent
probabilistic facts, but where $\mathbf{P}$ is now a logic program with classic
negation, constraints, disjuctive heads, and $\mathbf{P}$ is consistent
in that it has stable models for every total choice of probabilistic facts.
The proof of
Theorem \ref{theorem:Capacity} can be reproduced in this setting, and hence 
{\em the credal semantics (the set of measures over stable models)
of these probabilistic answer set programs is again an infinite monotone credal set.}
The complexity of inference with these constructs is left for future investigation.

Much more is yet to be explored concerning the complexity of {\sc plp}s. 
Several classes of {\sc plp}s deserve attention, such as definite, tight, strict, 
order-consistent programs, and programs with aggregates and other constructs.
The inclusion of functions (with appropriate restrictions to ensure decidability) is 
another challenge.  
Concerning complexity theory itself, it seems that approximability should be investigated,
as well as questions surrounding learnability and expressivity of {\sc plp}s.

\section*{Acknowledgements}

The first author is partially supported by CNPq, grant 308433/2014-9. 
The second author received financial support from the S\~ao Paulo Research Foundation (FAPESP),
 grant 2016/01055-1. 
 
\bibliographystyle{theapa}

\end{document}